\documentclass{article}

\usepackage[preprint]{neurips_2026}

\usepackage{comment}
\usepackage[utf8]{inputenc} 
\usepackage[T1]{fontenc}    
\usepackage{hyperref}       
\usepackage{url}            
\usepackage{booktabs}       
\usepackage{amsfonts}       
\usepackage{nicefrac}       
\usepackage{microtype}      
\usepackage{xcolor}         
\usepackage{multirow}

\hypersetup{
    colorlinks=true,       
    linkcolor=red!70!black,       
    citecolor=magenta,       
    filecolor=magenta,     
    urlcolor=cyan         
}

\usepackage{xcolor}
\usepackage{tikz}
\usetikzlibrary{fit}
\usepackage{subcaption}

\usepackage{xfrac}
\usetikzlibrary{shapes.geometric, arrows.meta, positioning, automata, decorations.pathreplacing, calligraphy}

\usepackage{longtable} 

\newif\ifCOMMENTS
\COMMENTStrue

\newcommand{\PP}{\mathbb{P}}

\newcommand{\probm}{\PP}

\newcommand{\nat}{\mathbb{N}}

\newcommand{\dist}{\Delta}
\newcommand{\dirac}[1]{\operatorname{Dirac}({#1})}
\newcommand{\conv}{\operatorname{conv}}

\newcommand{\rmdp}{\mathcal{M}}
\newcommand{\sg}{\mathcal{G}}

\newcommand{\rp}{\mathfrak{R}}
\newcommand{\rsg}{\rp\!\left(\sg\right)}

\newcommand{\pone}{\mathsf{max}}
\newcommand{\ptwo}{\mathsf{min}}
\newcommand{\pa}{\mathsf{a}}
\newcommand{\pe}{\mathsf{e}}

\newcommand{\sone}{S_{\pone}}
\newcommand{\stwo}{S_{\ptwo}}
\newcommand{\sstar}{S_{\star}}

\newcommand{\aone}{A_{\pone}}
\newcommand{\atwo}{A_{\ptwo}}
\newcommand{\astar}{A_{\star}}

\newcommand{\tran}{\delta}

\newcommand{\Plays}{\mathsf{Plays}}
\newcommand{\Playssg}{\mathsf{Plays}_{\sg}}
\newcommand{\Playsr}{\mathsf{Plays}_{\rmdp}}

\newcommand{\Last}{\operatorname{last}}

\newcommand{\Cone}{\operatorname{Cyl}}
\newcommand{\cyl}[1]{\Cone\!\left(#1\right)}

\newcommand{\parity}[2]{\operatorname{Parity}_{#1}\!\left(#2\right)}

\newcommand{\Inf}{\operatorname{Inf}}

\newcommand{\val}{\operatorname{Val}}
\newcommand{\Val}{\val} 

\newcommand{\prfn}{p}

\newcommand{\obs}{\mathcal{O}}

\newcommand{\obfna}{\mathcal{Z}_\pa}
\newcommand{\obfne}{\mathcal{Z}_\pe}
\newcommand{\obsa}{\mathcal{O}_\pa}
\newcommand{\obse}{\mathcal{O}_\pe}

\newcommand{\obsone}{\mathcal{O}_\pone}
\newcommand{\obstwo}{\mathcal{O}_{\ptwo}}
\newcommand{\obsstar}{\mathcal{O}_{\star}}
\newcommand{\obfnone}{\mathcal{Z}_\pone}
\newcommand{\obfntwo}{\mathcal{Z}_{\ptwo}}
\newcommand{\obfnstar}{\mathcal{Z}_\star}

\newcommand{\obsnewone}{o^{\dagger}}
\newcommand{\obsnewtwo}{o^{\#}}
\newcommand{\Obsnewone}{\mathcal{O}^{\dagger}}
\newcommand{\Obsnewtwo}{\mathcal{O}^{\#}}
\newcommand{\OH}{\mathsf{OH}}

\newcommand{\stratpone}{\Sigma}
\newcommand{\stratptwo}{\Pi}

\newcommand{\uncert}{\mathcal{P}}
\newcommand{\uncertsa}{P}

\newcommand{\GammaOpone}{\Gamma_{\obs}^{\pone}}
\newcommand{\GammaOptwo}{\Gamma_{\obs}^{\ptwo}}
\newcommand{\GammaOstar}{\Gamma_{\obs}^{\star}}

\newcommand{\LambdaOpone}{\Lambda_{\obs}^{\pone}}
\newcommand{\LambdaOptwo}{\Lambda_{\obs}^{\ptwo}}

\newcommand{\mdpsga}{\mathfrak{C}}
\newcommand{\mdpsge}{\mathfrak{D}}

\newcommand{\abot}{{\top}}
\newcommand{\anew}{\abot}

\newcommand{\nc}[1]{\begingroup\small\textcolor{red!70!black}{#1}\endgroup}
\newcommand{\nb}[1]{\begingroup\small\textcolor{blue!70!black}{#1}\endgroup}

\definecolor{darkgreen}{rgb}{0,0.6,0}
\definecolor{lightblue}{rgb}{0.5,0.6,1.0}
\definecolor{lightgray}{rgb}{0.98,0.98,0.98}
\definecolor{mauve}{rgb}{0.58,0,0.82}
\definecolor{sienna}{rgb}{0.6,0.18,0.09}
\colorlet{darkblue}{blue!60!black}
\colorlet{darkred}{red!50!black}
\colorlet{safecellcolor}{yellow!5}
\colorlet{goodcellcolor}{green!10}
\colorlet{badcellcolor}{blue!10}

\definecolor{maxcolor}{RGB}{200,100,200}
\definecolor{mincolor}{RGB}{20,200,100}

\title{Solving Robust POMDPs with $\omega$-regular
Objectives \\ via Partially Observable Stochastic Games}

\author{Durgam Latha, Dion Reji, S.~Akshay \\
Indian Institute of Technology Bombay \\
Mumbai, India \\
\And
\DJ or\dj e \v{Z}ikeli\'c \\
Nanyang Technological University \\
Singapore, Singapore \\
\AND
Shankaranarayanan Krishna \\
Indian Institute of Technology Bombay \\
Mumbai, India \\
}

\usepackage{amsmath, amssymb, amsthm, mathtools, bm}
\usepackage{mathrsfs}     
\usepackage{stmaryrd}     
\usepackage{dsfont}      
\usepackage{cancel}
\usepackage[nameinlink, capitalise]{cleveref}
\usepackage{changepage}

\usepackage{paralist}
\usepackage{lineno}

\theoremstyle{definition}
\newtheorem{definition}{Definition}[section]
\theoremstyle{plain}
\newtheorem{theorem}[definition]{Theorem}
\newtheorem{lemma}[definition]{Lemma}
\newtheorem{corollary}[definition]{Corollary}
\newtheorem{remark}[definition]{Remark}

\newtheorem{observation}[definition]{Observation}

\begin{document}

\maketitle

\begin{abstract}
Robust POMDPs (RPOMDPs) generalize classical POMDPs to the setting where exact transition probabilities are not known -- rather, they are only known to belong to some uncertainty set of values.
In this work, we study the problem of solving RPOMDPs with general $\omega$-regular objectives, which subsume a broad class of objectives such as reachability, safety, and linear temporal logic (LTL) objectives. We show that, for $(s,a)$-rectangular RPOMDPs with polytopic uncertainty sets, the problem of solving RPOMDPs under $\omega$-regular objectives can be reduced to solving partially observable stochastic games (POSGs) under $\omega$-regular objectives. Moreover, we show for the first time that reductions can be constructed in both directions, establishing the semantic equivalence between $(s,a)$-rectangular RPOMDPs with polytopic uncertainty sets and POSGs. This allows us to derive a range of new computational complexity results, including both upper and lower complexity bounds, on solving RPOMDPs with different $\omega$-regular objectives. As a corollary, we also derive new computational complexity results for RMDPs.
\end{abstract}

\section{Introduction}

Partially observable Markov decision processes (POMDPs) are a standard model for sequential decision making under uncertainty and imperfect state observability~\citep{KaelblingLC98}. The problem of solving a POMDP is concerned with computing a strategy (or policy) that maximizes the expected payoff or the probability of satisfying some objective. However, classical algorithms assume that transition probabilities are exactly known, which is not always a realistic assumption. In practice, transition probabilities are usually inferred from data and as such their estimates come with a level of uncertainty~\citep{DBLP:conf/ijcai/2024}. 
In the fully observable setting, this challenge is addressed through robust MDPs (RMDPs)~\citep{NilimG03,iyengar2005robust}, which generalize classical MDPs by assuming that transition probabilities belong to some {\em uncertainty sets} of possible values, rather than being exactly given. Robust POMDPs (RPOMDPs)~\citep{Osogami15} provide a robust generalization of POMDPs.

Recent years have seen significant advances in algorithmic approaches to solving RMDPs and RPOMDPs ~\citep{NilimG03,iyengar2005robust,WiesemannKR13,KaufmanS13,TamarMX14,GhavamzadehPC16,HoPW21,AsadiCGKMP26arxiv,TewariB07,Grand-ClementP23,GoyalG23,WangVAPZ23,DBLP:conf/ijcai/2024,MeggendorferWW25, Osogami15, ChamieM18, Suilen0CT20,NakaoJS21,Cubuktepe0JMST21,GaleslootA0J025,GaleslootSS0ST025,krale2025on}. 
However, reward objectives do not consider {\em logical correctness} properties that are particularly important in safety-critical applications. For instance, in self-driving cars, the reach-avoid correctness property is concerned with reaching some target region while avoiding the unsafe obstacles~\citep{SummersL10}. The stability correctness property requires the agent to eventually reach and stay within some target region ad infinitum~\citep{khalil2002nonlinear}. These are both classical examples of logical correctness properties in control and robotics applications, which are not defined in terms of reward objectives. In formal methods, logical correctness properties are formally defined via logics such as linear temporal logic (LTL)~\citep{Pnueli77}, or more generally $\omega$-regular specifications~\citep{ClarkeHVB18}, which subsumes most logical correctness properties appearing in control and robotics applications~\citep{Kress-GazitFP09}. While recent years have seen significant advances in solving RMDPs and RPOMDPs with reward objectives, the literature on logical correctness objectives remains limited.
Even the literature on RMDPs with $\omega$-regular objectives either makes significant assumptions on uncertainty sets such as interval MDPs~\citep{ChatterjeeSH08,SenVA06}, or is restricted to probability~$1$ or limit-sure satisfaction of $\omega$-regular specifications~\citep{AsadiCGKS26}.

\noindent{\bf Our Contributions.} In this work, we consider the problem of solving RMDPs and RPOMDPs with $\omega$-regular objectives, with a focus on computational complexity aspects. As common in the literature on solving R(PO)MDPs, we consider RPOMDPs with $(s,a)$-rectangular and polytopic uncertainty sets. These assumptions are quite general and subsume types of uncertainty sets considered in the existing literature, such as intervals~\citep{GivanLD00} or $L^1$-uncertainty sets~\citep{HoPW21}.
Our approach is based on the following fundamental novel result -- we formally prove that the problem of solving RPOMDPs with $\omega$-regular objectives is {\em equivalent} to the problem of solving partially observable stochastic games (POSGs)~\citep{HansenBZ04} with $\omega$-regular objectives. That is, we show that the two problems can be reduced to each other in polynomial-time and thus any known algorithm and computational complexity result for solving one problem readily applies to the other problem. The key novelty of our result lies in establishing the equivalence of the two models. Previous work has shown that, under reward objectives, the problem of solving R(PO)MDPs can be reduced to the problem of solving (PO)SGs~\citep{DBLP:conf/ijcai/2024,Grand-ClementPV23,BovySJ024}. However, in this work, we consider $\omega$-regular objectives rather than reward objectives. Moreover, we show that the two models can be reduced to each other, hence are equivalent, and the reduction from (PO)SGs to R(PO)MDPs turns out to be much more technically challenging. To the best of our knowledge, this is the first time that a stochastic game model has been formally polynomial-time reduced to a robust MDP model.

Since the problem of solving POSGs under $\omega$-regular objectives has received more attention~\citep{DBLP:journals/fmsd/Chatterjee0H13} compared to RPOMDPs, our equivalence result allows us to derive a plethora of new computational complexity results on solving RPOMDPs that were previously unknown. Our results are summarized in Table~\ref{tab:combined_complexity}. We derive results for several variants of the $\omega$-regular objective satisfaction problem: (1)~Sure Winning, where the objective needs to be satisfied for every RPOMDP run, (2)~Almost Sure Winning, where the objective needs to be satisfied with probability~1, (3) Limit Sure Winning, where the objective needs to be satisfied with probability $1-\epsilon$ for every $\epsilon > 0$, and (4) Quantitative Winning, where the objective needs to be satisfied with at least some given probability $p \in [0,1]$. Moreover, we derive complexity bounds both under the one-sided partial observability setting (that we call 1s-RPOMDPs) where only the agent has partial observability but the environment has full observability of the state, and the two-sided partial observability setting (which we refer to as RPOMDPs) where both the agent and the environment only have partial observability of the state. In addition, as a direct corollary of our equivalence result under full observability, we derive new complexity bounds for RMDPs with $\omega$-regular objectives under the Quantitative Winning setting that were previously unknown, see Table~\ref{tab:combined_complexity}. Thus, our work significantly advances the state of the art in solving R(PO)MDPs under $\omega$-regular objectives and provides a complete picture of the complexity landscape for different variants of the problem. 
Our contributions can be summarized as follows:
\begin{compactenum} 
    \item {\bf Equivalence of RPOMDPs and POSGs.} We formally prove, for the first time, that the problems of solving RPOMDPs and POSGs under $\omega$-regular objectives are {\em equivalent}, i.e.~polynomial-time reducible to each other. No previous work has considered reduction from a stochastic game model to a robust MDP model.
    \item {\bf New complexity results for R(PO)MDPs.} We derive new computational complexity results for RPOMDPs under $\omega$-regular objectives, see Table~\ref{tab:combined_complexity}. As a corollary, we also derive new complexity results for RMDPs.
\end{compactenum}

\begin{table*}[t]
\centering
\begin{tabular}{|l|c|c|c|c|c|c|}
\hline
\multirow{2}{*}{} & \multicolumn{3}{c|}{\textbf{Sure Winning} (Table 1)} & \multicolumn{3}{c|}{\textbf{Almost Sure Winning}(Table 3)} \\
\cline{2-7}
 & \textbf{RMDP} & \textbf{1s RPOMDP} & \textbf{RPOMDP} & \textbf{RMDP} & \textbf{1s RPOMDP} & \textbf{RPOMDP} \\
\hline
{\small Safe}       & \nc{Linear-time}    & \nc{EXPTIME-c} & \nc{EXPTIME-c} & \nc{Linear-time}    & \nc{EXPTIME-c} & \nc{EXPTIME-c} \\
{\small Reach} & \nc{Linear-time}    & \nc{EXPTIME-c} & \nc{EXPTIME-c} &\nc{\small Linear-time}    & \nc{EXPTIME-c} & \nb{Open}\\
{\small Büchi}        & \nc{Quad-time} & \nc{EXPTIME-c} & \nc{EXPTIME-c} & \nc{\small Quad-time} & \nc{EXPTIME-c} & \nb{Open} \\
{\small co-Büchi}      & \nc{Quad-time} & \nc{EXPTIME-c} & \nc{EXPTIME-c} & \nc{\small Quad-time} & \nc{Undecidable}   & \nc{Undecidable}   \\
{\small $\omega$-regular}       & \nc{NP $\cap$ coNP} & \nc{EXPTIME-c} & \nc{EXPTIME-c} & \nc{NP $\cap$ coNP} & \nc{Undecidable}   & \nc{Undecidable}   \\
\hline
\hline
& \multicolumn{3}{c|}{\textbf{Limit Sure Winning} (Table 4)} & \multicolumn{3}{c|}{\textbf{Quantitative Winning}(Table 5)} \\
\cline{2-7}
& \textbf{RMDP} & \textbf{1s RPOMDP} & \textbf{RPOMDP} & \textbf{RMDP} & \textbf{1s RPOMDP} & \textbf{RPOMDP} \\
\hline
Safe      & \nc{Linear-time}    & \nc{EXPTIME-c} & \nc{EXPTIME-c} & \nc{NP $\cap$ coNP} & \nc{Undecidable} & \nc{Undecidable} \\
Reach & \nc{Linear-time}    & \nc{Undecidable} & \nc{Undecidable} & \nc{NP $\cap$ coNP} & \nc{Undecidable} & \nc{Undecidable} \\
Büchi        & \nc{Quad-time} & \nc{Undecidable} & \nc{Undecidable} & \nc{NP $\cap$ coNP} & \nc{Undecidable} & \nc{Undecidable} \\
co-Büchi      & \nc{Quad-time} & \nc{Undecidable} & \nc{Undecidable} & \nc{NP $\cap$ coNP} & \nc{Undecidable} & \nc{Undecidable} \\
$\omega$-regular       & \nc{NP $\cap$ coNP} & \nc{Undecidable} & \nc{Undecidable} & \nc{NP $\cap$ coNP} & \nc{Undecidable} & \nc{Undecidable} \\
\hline
\end{tabular}

\smallskip
\caption{\small Summary of complexity results that follow from our equivalence of RPOMDPs and POSGs under $\omega$-regular objectives. We show results both for general $\omega$-regular objectives as well as for special instances of objectives (reachability, safety, B\"uchi, co-B\"uchi) for which more efficient algorithms are known. The '-c' suffix denotes completeness, otherwise shown results denote upper complexity bounds. The \nc{red colored} entries are our novel results that were previously unknown and that follow from our equivalence result with POSGs~\citep{DBLP:journals/fmsd/Chatterjee0H13}. The table considers RMDPs (full observability for both the agent and the environment), 1s-RPOMDPs (partial observability for the agent only), and RPOMDPs (partial observability for both). The \nb{blue colored} entries remain open problems since the decidability and complexity of solving POSGs under almost sure winning reachability and B\"uchi objectives is a well-known open problem~\citep{DBLP:journals/fmsd/Chatterjee0H13}. The table numbers correspond to the tables in \citep{DBLP:journals/fmsd/Chatterjee0H13} from which the respective complexity results on POSGs were obtained.}
\label{tab:combined_complexity}
\end{table*}

\noindent {\bf Related Work.} Recent years have seen significant advances in algorithmic approaches to solving RMDPs under reward objectives, with efficient algorithms being proposed for expected discounted-sum reward 
and long-run average reward, 
as discussed above. These works assume knowledge of the RMDP model and provide guarantees on the correctness of their results. In addition, reinforcement learning algorithms for solving RMDPs in a model-free setting but without guarantees on the correctness of results have also been 
proposed~\citep{RoyXP17,TesslerEM19,WangZ21,WangZ22c,WangVAPZ24}. Algorithmic approaches for solving RPOMDPs under reward objectives were considered in~\citep{Osogami15,ChamieM18,Suilen0CT20,NakaoJS21,Cubuktepe0JMST21,GaleslootA0J025,GaleslootSS0ST025,krale2025on}. See~\citep{SuilenBB0025} for a recent survey. We study R(PO)MDPs under $\omega$-regular objectives, which have received much less attention. To the best of our knowledge, no existing work considers solving RPOMDPs with $\omega$-regular objectives. Literature on RMDPs with $\omega$-regular objectives either makes significant assumptions on the structure of uncertainty sets such as interval MDPs~\citep{ChatterjeeSH08,SenVA06}, or is restricted to probability~$1$ or limit-sure satisfaction of $\omega$-regular specifications~\citep{AsadiCGKS26} where the two objectives are shown to have the same value, but without exact complexity bounds that we establish. The one-sided POSG notion we consider is from ~\citep{DBLP:journals/fmsd/Chatterjee0H13} where observations are on states; \cite{horak2023solving} considers a notion of one sided stochastic games where observations are on transitions. 

The existence of a reduction from R(PO)MDPs to (PO)SGs under reward objectives has been observed in prior work~\citep{NilimG03,Grand-ClementPV23}. The work~\citep{DBLP:conf/ijcai/2024} formalizes the reduction from RMDPs to turn-based stochastic games under both expected discounted-sum and long-run average objectives. The work~\citep{BovySJ024} formalizes the reduction from RPOMDPs to POSGs under discounted-sum objectives. However, none of these works consider $\omega$-regular objectives. Moreover, we for the first time establish a polynomial-time reduction in the opposite direction -- {\em from (PO)SGs to R(PO)MDPs} -- under $\omega$-regular objectives.

\section{Model Definition and Preliminaries}
\label{sec:prelim}

For a set \( X \), we denote by \( \dist(X) \) the set of all probability distributions over \( X \) and by $2^X$ the set of all subsets of $X$. 
For $x\in X$, $\dirac{x}\in\dist(X)$ is the Dirac distribution assigning probability~$1$ to $x$.

For a sequence $b = b_0,b_1,b_2,\cdots,b_n$, we write $\Last(b) = b_n$. For $x= (x_1,x_2,\cdots x_k)$, $x(i)$ denotes the element at $i$-th index. 
Let $\nat$ represent the set of natural numbers.

\noindent{\bf RPOMDPs.}
 A {\em Robust POMDP (RPOMDP)} is a tuple \( (S, A, \uncert, \obsa,\obse,\obfna,\obfne,  s_0) \), where $S$ and $A$ are finite sets of states and actions, $s_0 \in S$ is the initial state, and \( \uncert \subseteq (S \times A \rightarrow \dist(S)) \), called the \emph{uncertainty set}, is a subset 
of all functions from $S \times A$ to $\Delta(S)$.
 \(\obsa,\obse\) respectively are finite sets of \emph{observations} for the agent and environment, \(\obfna:S\rightarrow \obsa, \obfne:S\rightarrow\obse\) are \emph{observation functions} mapping states to observations.  A \emph{robust MDP (RMDP)} arises when both the agent and the environment
have full observability: for $\star \in \{a,e\}$, $\obs_\star = \{o^\star_s \mid s \in S\}$
with $\obfna(s) = o^a_s$ and $\obfne(s) = o^e_s$. A \emph{one-sided RPOMDP} has partial observability for the agent but full for the environment, i.e.~$\obse = \{o_s \mid s \in S\}$ with $\obfne(s) = o_s$.

\noindent{\bf Assumptions: $(s,a)$-rectangularity and Polytopic Uncertainty Sets.} As is common in the algorithmic study of RMDPs  and uncertain POMDPs ~\citep{NilimG03,iyengar2005robust,bovy2023underlying}, we restrict our attention to RPOMDPs with {\em $(s,a)$-rectangular uncertainty sets}. That is, we assume that the uncertainty set is of the form $\uncert = \prod_{(s,a)\in S \times A} \uncertsa{(s,a)}$ with each $\uncertsa{(s,a)} \subseteq \dist(S)$, meaning that the environment can choose transition probabilities independently across state-action pairs. Moreover, we assume that each \(\uncertsa(s,a)\) is a {\em polytope} with a finite vertex set \(V_{s,a}\).
We take every $V_{s,a}$ to be given explicitly as part of the input. Accordingly, all model sizes and linear-time claims are measured with respect to these explicit vertex lists. 
A point in $\uncertsa(s,a)$ is a distribution over $S$, see Figure \ref{fig:rpomdp_red-main}(a). 
Standard examples include $L^1$ and interval uncertainty sets ~\citep{HoPW21,GivanLD00}.
In the rest of the paper, RPOMDPs refer to $(s,a)$-rectangular polytopic RPOMDPs.

\noindent{\bf Semantics of RPOMDPs.} Informally, the semantics of RPOMDPs are defined as follows. The process begins in state $s_0$ with observations $(\obfna(s_0), \obfne(s_0))$. At each step $t$, an agent chooses an action $a_t$; the agent does not know the exact state $s_t$, but only the agent observation $o^a_t=\obfna(s_t)$.
In response, using its observation history and $a_t$, the adversarial environment selects a complete transition assignment $P_t\in\uncert$; the transition from the actual state uses its component $P_t(s_t,a_t)\in\uncertsa(s_t,a_t)$.
The environment observes only $o^e_t=\obfne(s_t)$. The next state $s_{t+1}$ is sampled from $P_t(s_t,a_t)$, the agent observes $\obfna(s_{t+1})$ and the process continues. 
This process is formalized via agent and environment strategies. Following the terminology of \citep{BovySJ024}, our semantics model {\em dynamic uncertainty} where the environment may select a new probability assignment at each time step.
A \emph{history} is a finite sequence  
$h_t = s_0,(o^a_0,o^e_0), s_1, \dots, s_t,(o^a_t,o^e_t) \in (S \times \obsa \times \obse)^\ast
$.  An \emph{agent (resp., environment) observation history} for $h_t$ is a sequence  
$
oh^a_t = o^a_0,\dots,o^a_t \in (\obsa)^* \times \obsa
(\text{resp., } 
oh^e_t = o^e_0,\dots,o^e_t \in (\obse)^* \times \obse).
$
Let \(\OH_a\) and \(\OH_e\) denote the sets of agent and environment observation histories.   
An \emph{agent} and \emph{environment strategy} are mappings
\[
\sigma: \OH_a \to
\bigcup_{o\in\obsa}
\prod_{s:\,\obfna(s)=o}\dist(A),
\qquad
\pi: \OH_e \times A \to \uncert,
\]
such that
$\sigma(oh)\in\prod_{s:\,\obfna(s)=\Last(oh)}\dist(A)$. 
Given a RPOMDP $\rmdp$, let \(\Sigma_{\rmdp}\) and \(\Pi_{\rmdp}\) denote all agent and environment strategies.  Note that our strategies are ``action invisible''
since they map observation histories which do not record actions. 
A strategy is \emph{finite-memory} if it can be implemented by a finite-state transducer that reads the player's observation history and outputs the required assignment. It is \emph{memoryless} if one memory state suffices; strategies not restricted to finite memory are called \emph{infinite-memory} strategies.
A pair of strategies \((\sigma,\pi) \in \Sigma_{\rmdp} \times \Pi_{\rmdp} \) induces a \emph{play}  
$
\rho = s_0,a_0,s_1,a_1,\dots \in (S \times A)^\omega
$
starting from \(s_0\).  At time \(t\), let $\alpha_t=\sigma(oh^a_t)$ and $P_t=\pi(oh^e_t,a_t)$ be the assignments selected from the players' observation histories.  The action $a_t$ is sampled according to the distribution $\alpha_t(s_t)$. The next state satisfies $\Pr(s_{t+1}\mid h_t,a_t)=P_t(s_t,a_t)(s_{t+1})$. 
Let \(\Playsr\) denote the set of all plays in $\rmdp$. Given an initial state $s_0$, each pair of strategies \((\sigma,\pi) \in \Sigma_{\rmdp} \times \Pi_{\rmdp} \) induces a probability measure $\probm_{s_0}^{\sigma,\pi}(\cdot)$ over the set \(\Playsr\)~\citep{bill}. 
We write $\Plays_{\rmdp}^{\sigma,\pi}$ for the set of \emph{supported plays}: those plays for which every finite prefix has positive probability under $(\sigma,\pi)$.

\noindent{\bf $\omega$-regular Objectives.} A logical objective in an RPOMDP $\rmdp$ is a measurable set $W \subseteq \Playsr$. 
We consider $\omega$-regular objectives~\citep{ClarkeHVB18}, specified via a parity automaton. Standard examples are reachability (visit a target state at least once), safety (avoid a bad state forever), B\"uchi (visit a target state infinitely often), and co-B\"uchi (visit a bad state only finitely often). It is a classical result in formal methods that every $\omega$-regular objective can be represented via a parity automaton~\citep{thomas}. We assume we have already taken the product of the RPOMDP and the parity automaton, and that the RPOMDP is equipped with a priority function $p:S\rightarrow \{0,1,\dots ,d\}, d \in \nat$ assigning a priority to each state. The $\omega$-regular objective is then the set of plays in which the minimum priority that is visited infinitely often is even, i.e.
    $
    \parity{\rmdp}{\prfn} = \left\{ \rho \in \Plays_\rmdp \middle| \min\{\prfn(s) : s \in \Inf(\rho)\}  \text{ is even}\right\}
    $.
  Here $\Inf(\rho)$ is the set of states that occur infinitely often in $\rho$.

\noindent{\bf Partially Observable Stochastic Games (POSGs).}
A turn-based, two-player (\(\pone,\ptwo\)) \emph{POSG}~\cite{DBLP:journals/fmsd/Chatterjee0H13} is a tuple $
\sg = (\sone,\stwo,\aone,\atwo,\delta,s_0,\obsone,\obstwo, \obfnone,\obfntwo),
$
where \(\sone\) and \(\stwo\) are the sets of \(\pone\)- and \(\ptwo\)-states, respectively. Write $S^\sg=\sone\cup\stwo$, with initial state $s_0\in S^\sg$. The action sets of the two players are $\aone$ and $\atwo$, 
and we write  $A^\sg=\aone\cup\atwo$. The transition function is
$
\delta : (\sone \times \aone) \cup (\stwo \times \atwo) \to \dist(S^\sg).
$
Observations are given by sets \(\obsone,\obstwo\) and functions \(\obfnone:S^\sg\to\obsone\), \(\obfntwo:S^\sg\to\obstwo\) for two players. A POSG is \emph{alternating control (ac-POSG)} if $\delta(s,a)\in\dist(\stwo)$ for every $(s,a)\in\sone\times\aone$ and $\delta(s,a)\in\dist(\sone)$ for every $(s,a)\in\stwo\times\atwo$. Throughout, ac-POSGs start in a max-state, $s_0\in\sone$; otherwise, a fresh deterministic initial max-state can be prepended.

A \emph{history} of \(\sg\) is a sequence
$
h_t = s_0,(o^1_0,o^2_0),s_1,\dots,s_t,(o^1_t,o^2_t)
$
in \((S^\sg \times \obsone \times \obstwo)^*\) such that for all \(0\le i \le t\),
$
o^1_i=\obfnone(s_i),o^2_i=\obfntwo(s_i).
$
Moreover, for every $0\leq i<t$, there is an action $a\in\aone$ if $s_i\in\sone$, or $a\in\atwo$ if $s_i\in\stwo$, such that $\delta(s_i,a)(s_{i+1})>0$. For $\star\in\{\pone,\ptwo\}$, if $h_t$ ends in an $S_\star$-state, its \emph{$\star$-observation history} is the sequence of that player's observations $oh^\star=o^\star_0,\ldots,o^\star_t$. We write $\OH_\star^\sg$ for the set of all such histories.
A $\star$-strategy, for $\star\in\{\pone,\ptwo\}$, maps each observation history as follows: 
\[
\sigma_\star:\OH_{\star}^\sg\to
\bigcup_{o\in\obsstar}
\prod_{s\in\sstar:\,\obfnstar(s)=o}\dist(\astar),
\]
such that $
\sigma_\star(oh)\in
\prod_{s\in\sstar:\,\obfnstar(s)=\Last(oh)}\dist(\astar).
$
Thus the strategy sees only $oh$; the game subsequently evaluates the component of the selected assignment belonging to its current state. We write $\sigma$ and $\pi$ for the $\pone$- and $\ptwo$-strategies, respectively. Note that these are referred to as \emph{action-invisible} strategies in the literature \cite{DBLP:journals/fmsd/Chatterjee0H13}.  Plays induced by $(\sigma,\pi)$ are defined analogously to RPOMDP plays. \(\Playssg\) denotes the set of all plays. Probability measures of plays, objectives $W \subseteq \Playssg$ as well as the value $\val_{\sg}(W)$
are defined in a similar manner to RPOMDPs. Let $\Sigma_\sg$ and $\Pi_\sg$ denote the sets of all $\pone$- and $\ptwo$-strategies, respectively.

\noindent{\bf Problem.} Given an RPOMDP $\rmdp$ with an $\omega$-regular objective $W$, its {\em value} and (its value under a fixed agent strategy $\sigma$) are defined as $\val_{\rmdp}(W) = \sup_{\sigma \in \Sigma_\rmdp} \inf_{\pi \in \Pi_\rmdp} \probm_{s_0}^{\sigma, \pi}(W)$ and ($\val^\sigma_{\rmdp}(W) = \inf_{\pi \in \Pi_\rmdp} \probm_{s_0}^{\sigma, \pi}(W)$), respectively.
We consider the four classical $\omega$-regular analysis problems:
\begin{compactitem}
\item \emph{Almost-sure} analysis asks whether there exists an agent strategy $\sigma$ such that $\val^\sigma_{\rmdp}(W)=1$.

\item \emph{Limit-sure} analysis asks whether, for every $\epsilon>0$, there exists an agent strategy $\sigma_\epsilon$ such that $\val^{\sigma_\epsilon}_{\rmdp}(W)\geq1-\epsilon$, equivalently, whether $\val_{\rmdp}(W)=1$.

\item \emph{Sure} analysis asks whether there exists an agent strategy~$\sigma$ such that $\Plays^{\sigma, \pi}_{\rmdp} \subseteq W$ for every environment strategy~$\pi$.

\item \emph{Quantitative} analysis asks, for a given probability threshold $p\in[0,1]$, if $\val_{\rmdp}(W)\geq p$.
\end{compactitem}

 The computational analysis 
problems above have been investigated for POSGs \citep{DBLP:journals/fmsd/Chatterjee0H13}. 

\section{Equivalence with Partially Observable Stochastic Games (ac-POSG)}

This section presents the main result of the paper. We provide reductions between RPOMDPs and 
alternating control POSGs for each problem above. Throughout, we make the standard assumption that every action in a player's action alphabet is enabled at every state owned by that player. An action is \emph{effective} at a state if the construction explicitly specifies its transition there, and an \emph{effective edge} is a positive-probability transition under an effective action. We only display the effective actions and edges; every omitted state-action pair is implicitly completed by an absorbing outcome losing for its owner. This completion does not affect the analysis problems below.

Both reductions are linear in the model size (measured in the number of states, effective actions and transitions).

\begin{figure*}[t]
\centering
\begin{subfigure}{0.4\textwidth}
\centering
    \begin{tikzpicture}[
    >={stealth[scale=1.3]},
    state/.style={circle, draw=maxcolor, thick, minimum size=0.75cm},
    stochNode/.style={circle, fill=mincolor, inner sep=0pt, minimum size=1.5mm},
    edge/.style={thick, ->, >=stealth,draw=black!80},
    scale = 0.5
]
    \node[state] (s1) at (3,3) {$s_1$};
    \node[state] (s2) at (0,0) {$s_2$};
    \node[state] (s3) at (6,0) {$s_3$};

    \node[stochNode] (j1) at (3, 5)  {};   
    \node[stochNode] (j2) at (1.75, 0)  {};   
    \node[stochNode] (j3) at (4.25, 0) {};  

    \draw[edge] (1.5,3) -- (s1) ;

    \draw[edge] (s1) -- (j1)  node[near start, left] {\tiny $a$};
    \draw[edge] (j1) to[bend right = 30] node[pos=0.1, above] {\tiny $p_1$} (s2);
    \draw[edge] (j1) to[bend left=30] node[pos=0.1, right] {\tiny $r_1$} (s3);

    \draw[edge] (s2) -- (j2)  node[near start, above] {\tiny $a$};
    \draw[edge] (j2) to[bend right=-25] node[pos=0.1,right] {\tiny $p_2$} (s1);
    \draw[edge] (j2) to[bend right=-50] node[pos=0.3,below] {\tiny $q_2$} (s2);
    \draw[edge] (j2) to[bend right=35] node[pos=0.1,right] {\tiny $r_2$} (s3);

    \draw[edge] (s3) -- (j3)  node[midway, below] {\tiny $a$};
    \draw[edge] (j3) to[bend right=-40] node[pos=0.5, above] {\tiny $r_3$} (s3);
    \draw[edge] (j3) to[bend right=30] node[pos=0.5, right] {\tiny $p_3$} (s1);

    \draw[edge] (s1) edge[loop below] node {\tiny $b,1$} (s1);
    \draw[edge] (s2) edge[loop below] node {\tiny $b,1$} (s2);
    \draw[edge] (s3) edge[loop below] node {\tiny $b,1$} (s3);

\end{tikzpicture}
\label{fig:sarrpomdp}
\end{subfigure}
\hfill
\begin{subfigure}{0.5\textwidth}
\centering
\begin{tikzpicture}[
    >={stealth[scale=1.2]},
    maxstate/.style={circle, draw=maxcolor, thick, minimum size=0.75cm},
    minstate/.style={rectangle, draw=mincolor, thick, minimum size=0.2cm},
    dn/.style={circle, fill=black, inner sep=1pt, minimum size=0.75mm},
    edge/.style={->, thick},
    scale = 0.5
]

\node[maxstate] (s1) at (0,0) {$s_1$};
\node[maxstate] (s2) at (3,0) {$s_2$};
\node[maxstate] (s3) at (-3,0) {$s_3$};

\node[minstate] (s1a) at (0,3) {\tiny $(s_1,a)$};
\node[minstate] (s2a) at (3,-3) {\tiny $(s_2,a)$};
\node[minstate] (s3a) at (-3,-3) {\tiny $(s_3,a)$};

\node[dn] (s1av1) at (-1,4) {};
\node[dn] (s1av2) at (1,4) {};
\node[dn] (s2av1) at (5,-2) {};
\node[dn] (s2av2) at (5,-3) {};
\node[dn] (s2av3) at (5,-4) {};
\node[dn] (s3av1) at (-5,-2) {};
\node[dn] (s3av2) at (-5,-4) {};

\node[minstate] (s1b) at (0,-3) {\tiny $(s_1,b)$};
\node[minstate] (s2b) at (3,3) {\tiny $(s_2,b)$};
\node[minstate] (s3b) at (-3,3) {\tiny $(s_3,b)$};

\draw[->,thick] (-1.5,0) -- (s1);

\draw[->,thick] (s1) -- node[left] {$a$} (s1a);
\draw[->,thick] (s1) to[bend right=30] node[midway, left] {$b$}  (s1b);

\draw[->,thick] (s2) -- node[pos = 0.4, right] {$a$} (s2a);
\draw[->,thick] (s2) to[bend left=30] node[midway, left, inner sep=0pt] {$b$}  (s2b);

\draw[->,thick] (s3) -- node[left] {$a$} (s3a);
\draw[->,thick] (s3) to[bend right=30] node[midway, right,inner sep=0pt] {$b$}  (s3b);

\draw[->,thick] (s1a) -- node[pos=0.8,left] {\tiny$v_{11}$} (s1av1);
\draw[->,thick] (s1a) -- node[pos=0.8,right] {\tiny$v_{12}$} (s1av2);
\draw[->,thick] (s2a.east) -- node[pos=1,right] {\tiny$v_{21}$} (s2av1);
\draw[->,thick] (s2a.east) -- node[pos=1,right] {\tiny$v_{22}$} (s2av2);
\draw[->,thick] (s2a.east) -- node[pos=1,right] {\tiny$v_{23}$} (s2av3);
\draw[->,thick] (s3a.west) -- node[pos=1,left] {\tiny$v_{31}$} (s3av1);
\draw[->,thick] (s3a.west) -- node[pos=1,left] {\tiny$v_{32}$} (s3av2);

\draw[->,dashed,thick] (s1av1) .. controls (2,6) and (6,5) .. (s2.east);
\draw[->,dashed,thick] (s1av1) -- (s3.east);
\draw[->,dashed,thick] (s1av2) -- (s2.west);
\draw[->,dashed,thick] (s1av2) .. controls (-2,6) and (-6,5) .. (s3.west);

\draw[->,thick] (s1b) to[bend right=30] node[midway, right] {\tiny $(1,0,0)$} (s1);
\draw[->,thick] (s2b) to[bend left=30] node[midway, right] {\tiny $(0,1,0)$} (s2);
\draw[->,thick] (s3b) to[bend right=30] node[midway, left] {\tiny $(0,0,1)$} (s3);

\end{tikzpicture}

\end{subfigure}
 \caption{\small (a) 
  RPOMDP $\rmdp$ with states $s_1, s_2, s_3$ and actions $a,b$. Each transition arrow consists of two parts, the first denotes an action ($a$ or $b$) whereas the second denotes the probability of moving to a state upon taking that action (e.g.~$p_1$ is the probability of moving to $s_2$ upon taking action $a$ in $s_1$). The uncertainty set of the RPOMDP is defined by specifying the vertex set of the uncertainty polytope for each state-action pair. We let $V_{s_1,a} {=} \{v_{11}, v_{12}\}$ and $(0,p_1,r_1) \in \uncertsa(s_1,a)$. 
 $V_{s_2,a} {=} \{v_{21}$,$v_{22}$,$v_{23}\}$ and 
 $(p_2, q_2, r_2) \in \uncertsa(s_2,a)$.  $V_{s_3,a} {=} \{v_{31}, v_{32}\}$
  and $(p_3,0,r_3) \in \uncertsa(s_3,a)$. Note that on $b$, all transitions are deterministic, so  $V_{s_1,b}=\{(1,0,0)\}$, $V_{s_2,b}=\{(0,1,0)\}$ and 
  $V_{s_3,b}=\{(0,0,1)\}$. 
 (b) The ac-POSG constructed from the RPOMDP, circular and rectangular states represent max-player and min-player states, respectively.
  The action set of the max-player is the same as the action set in the RPOMDP on the left, namely $\{a,b\}$.
  Min-player states are all combinations of state-action pairs in the RPOMDP on the left. The effective actions at each such state are the vertices of the corresponding uncertainty polytope. For instance, at $(s_1,a)$ these are $v_{11}$ and $v_{12}$ from $V_{s_1,a}$; randomizing with probabilities $(\alpha_1,\alpha_2)$ mimics the point $\alpha_1v_{11}+\alpha_2v_{12}$ in $\uncertsa(s_1,a)$. We omit the remaining completed actions and the effective edges from $(s_2,a)$ and $(s_3,a)$ for clarity.}
\label{fig:rpomdp_red-main}
\end{figure*}

\subsection{Reduction from RPOMDPs to ac-POSGs}
\label{sec:forward}

Our reduction from RPOMDPs to ac-POSGs draws insight from and generalizes the reduction from RMDPs to stochastic games that was presented in~\citep{DBLP:conf/ijcai/2024}. In this work, we generalize this reduction to the setting with partial observability. In what follows, given a polytopic RPOMDP $\rmdp = (S, A, \uncert, \obsa, \obse, \obfna, \obfne, s_0)$ with any of the objectives defined in Section~\ref{sec:prelim}, 
we outline our construction of the equivalent ac-POSG~$\sg = (\sone^{\sg},\stwo^{\sg},\aone^{\sg},\atwo^{\sg},\delta^{\sg},s_0^{\sg},\obsone^{\sg},\obstwo^{\sg},\obfnone^{\sg},\obfntwo^{\sg})$ and the corresponding objective. The formal construction is deferred to Appendix~\ref{app:rpomdp-posg}. 

\noindent{\bf POSG Construction.} The POSG $\sg$ is defined as follows. Intuitively, the ac-POSG models the interaction between the agent and the environment in the original RPOMDP, where the max-player corresponds to the agent, the min-player to the environment, and the two players alternate in moves:

\emph{States:} max-player states $\sone^{\sg} = S$ are the states of
the original RPOMDP, min-player states $\stwo^{\sg}$ are state--action
pairs $(s,a) \in S \times A$, and the initial state is $s_0^{\sg} =
s_0 \in \sone^{\sg}$. \emph{Actions:} the max-player's actions are
those of the RPOMDP, $\aone^{\sg}=A$.  At min-state $(s,a)$ the effective actions are the vertices $V_{s,a}$, and the global min-action alphabet is $\atwo^{\sg}=\bigcup_{(s,a)\in S\times A}V_{s,a}$. \emph{Observations:}
$\obsone^{\sg} = \obsa$ and $\obstwo^{\sg} = \obse$, with
$\obfnone^{\sg}(s) = \obfnone^{\sg}((s,a)) = \obfna(s)$ and $\obfntwo^{\sg}(s) = \obfntwo^{\sg}((s,a)) =
\obfne(s)$. \emph{Transitions:} from max-state $s$ under action $a$,
the POSG moves deterministically to the min-state $(s,a)$, i.e.\
$\delta^{\sg}(s,a) = \dirac{(s,a)}$; from min-state $s' = (s,a)$
under an effective action $v \in V_{s,a}$, $\delta^{\sg}(s', v) = v$ is the distribution over $\sone^{\sg}$ specified by the vertex $v$ of the uncertainty polytope $\uncertsa(s,a)$.
Randomizing over effective min-actions
therefore produces every distribution in $\uncertsa(s,a)$.

The constructed POSG is {\em linear} in the size of the RPOMDP and the number of vertices of the uncertainty polytopes. Figure~\ref{fig:rpomdp_red-main}(b) shows an example of our construction.

\noindent{\bf Parity Objectives:} The $\omega$-regular objective in the ac-POSG $\sg$ is specified by the priority function $p' : S^{\sg} \to \{0,1,\ldots,d\}$ defined via $p'(s') = p(s)$ 
for any $s' \in S^{\sg}$, where $s$ is the state component of $s'$ ($s'=s$ or $s'=(s,a)$) and $p$ is the priority function of $\rmdp$.  

\begin{theorem}[Proof in Appendix \ref{app:thm1}]
Let $\rmdp$ be a RPOMDP and $\sg$ the induced ac-POSG. 
Then $\val_\rmdp(W) {=} \val_\sg(W')$
for any $\omega$-regular objective $W$ in $\rmdp$ and corresponding objective 
$W'$ in $\sg$.\\
Consequently, sure, almost-sure, limit-sure, and quantitative
analysis of any objective above on $\rmdp$ is inter-reducible in linear time, and therefore in polynomial time, with the corresponding analysis on $\sg$, under the explicit vertex representation fixed in Section~\ref{sec:prelim}.
\label{rpomdp-posg}
\end{theorem}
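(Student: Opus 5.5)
The plan is to build explicit, measure-preserving translations of strategies in both directions, and to check that corresponding plays get the same priorities. The equality of values, and the sure-winning statement, then follow directly.

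Let $\rho=s_0,a_0,s_1,a_1,\dots$ be an RPOMDP play. It corresponds to the game play $s_0,(s_0,a_0),s_1,(s_1,a_1),\dots$. The map $\rho\mapsto\hat\rho$ is a bijection between RPOMDP plays and ac-POSG plays that alternate correctly. By construction $p'(s)=p'((s,a))=p(s)$. Every state visited infinitely often in $\rho$ therefore contributes to $\hat\rho$ exactly the priorities occurring among $\{s\}\cup\{(s,a)\}$, all equal to $p(s)$. Hence the minimal priority seen infinitely often is the same in both plays, and $\rho\in W$ iff $\hat\rho\in W'$.

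\textbf{Agent/max strategies.} The max player's observation history in $\sg$ records only max-states. Because $\obfnone^{\sg}((s,a))=\obfna(s)$, it also records min-states, and every such observation simply repeats the preceding one. There is thus a canonical bijection between $\OH_a$ and the max-histories of $\sg$ that end in max-states: drop every second observation. So $\sigma\in\Sigma_\rmdp$ and $\sigma'\in\Sigma_\sg$ correspond one-to-one by setting $\sigma'(\text{max-history})=\sigma(oh^a)$, componentwise over the states with that observation.

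\textbf{Environment/min strategies.} Consider an environment strategy $\pi(oh^e,a)\in\uncert$. Each component $\pi(oh^e,a)(s,a)\in\uncertsa(s,a)$ is written as a convex combination $\sum_{v\in V_{s,a}}\lambda_v v$ of its vertices; I fix one such choice, say lexicographically. The min player at $(s,a)$ plays $v$ with probability $\lambda_v$, so the next-state distribution is exactly $\pi(oh^e,a)(s,a)$. This direction needs the most care. The min player sees $\obfne(s)$ at state $(s,a)$ but does not see $a$, since strategies are action-invisible. An assignment indexed by $oh^e$ alone must nevertheless prescribe a component for every $(s,a)$ whose observation is the current one. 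Because min strategies are defined as products over all states sharing the observation, the component at $(s,a)$ may depend on $a$. So I set the component at $(s,a)$ to be $\pi(oh^e,a)(s,a)$, which is legitimate. Conversely, a min strategy $\pi'$ yields an environment strategy: at $(oh^e,a)$, take the component at $(s,a)$ to be $\sum_v\pi'(\cdot)(s,a)(v)\,v\in\uncertsa(s,a)$. Rectangularity guarantees that this combination of components is an element of $\uncert$. Components at pairs $(s,b)$ with $b\neq a$ are irrelevant to the play, so they can be filled arbitrarily.

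\textbf{Conclusion.} For corresponding pairs $(\sigma,\pi)\leftrightarrow(\sigma',\pi')$, I prove by induction on prefix length that cylinder probabilities agree: $\probm^{\sigma,\pi}(\cyl{h})=\probm^{\sigma',\pi'}(\cyl{\hat h})$. By Carathéodory extension the two measures then agree under the play bijection, giving $\probm^{\sigma,\pi}_{s_0}(W)=\probm^{\sigma',\pi'}_{s_0}(W')$, and supported plays correspond as well. Since the translations are surjective in each direction, the sup–inf values coincide. Almost-sure and limit-sure winning transfer through the bijection on $\sigma$ together with the equality of $\val^\sigma$. Sure winning transfers because supports coincide; the only subtlety is that the vertices used with positive weight generate exactly the support of the combination. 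Completed non-effective actions are losing for their owner, so they never help and can be removed without loss. The size of $\sg$ is linear in the input, which yields the complexity claims. The main obstacle is the bookkeeping of the observation-history correspondence for the min player, together with checking that the measures really coincide under action-invisibility.
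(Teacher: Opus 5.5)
Your proposal is correct and follows essentially the same route as the paper: the same $(s,a)$-min-state construction, the agent-strategy bijection obtained by deleting the duplicated min-state observations (the paper's $\Theta_a,\Upsilon_a$), environment translations via a fixed choice of barycentric coordinates in one direction and the barycenter in the other, a cylinder-set induction giving equal measures, equality of the fixed-agent infima followed by the outer supremum, and preservation of supports for sure winning. The only imprecision is calling the play correspondence a bijection. Game plays also record the min-player's vertex choices, so the paper uses the many-to-one projection $\Upsilon$ (with set-valued $\Theta$). Your state-sequence cylinders amount to exactly this once you sum over the vertices.
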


\subsection{Reduction from ac-POSGs to RPOMDPs}
\label{sec:ac2rpomdp}
\noindent{\bf Key Challenge and Our Solution.} Reduction from ac-POSGs to RPOMDPs turns out to be much more challenging due to the following subtle reason. Given an ac-POSG, if the max-player in a state $s$ chooses action~$a$, the min-player gets to see the observation $\obfntwo(s')$ where $\delta(s,a)(s')>0$ before choosing its own action.
On the contrary, in an RPOMDP,  after the agent chooses an action $a$ at a state $s$, the environment at $(s,a)$ commits to a polytope ~$P(s,a)$ immediately, with nothing observed in between.  Thus, at a first glance, it seems that the min-player in ac-POSGs has more power compared to the environment in RPOMDPs. Somewhat surprisingly, we show that this is not the case and that it is still possible to construct a reduction from ac-POSGs to RPOMDPs. This observation motivates us to construct a two-step reduction. The first step of the reduction reduces ac-POSGs to an intermediate model which we call {\em pre-min-transformed POSG}, which resolves this semantic discrepancy. The second step then reduces a pre-min-transformed POSG to an RPOMDP.

\subsubsection{Step 1: Reduction from ac-POSGs to Pre-min-transformed POSGs}
\label{sec:step1}
The pre-min transformation inserts,
before every $\ptwo$-state $s \in \stwo$, a fresh \emph{dummy}
max-state $d_s$ whose sole effective action leads
deterministically to~$s$, and whose observations to both players
are copies of the observations of~$s$. Neither player gains
strategic power: the max-player has nothing to choose at $d_s$
(every other globally enabled action takes the max-losing default transition), while the
min-player sees the same information at $s$ as in $\sg$, the
observation at $d_s$ is determined by the observation at $s$ and so
reveals nothing new. The following construction formalizes this intuition.

\noindent{\bf Pre-min-transformed POSG.} Let $\sg = (\sone,\stwo,\aone,\atwo,\delta,s_0,\obsone,\obstwo,\obfnone,\obfntwo)$ be an ac-POSG, $D=\{d_s \mid s \in \stwo\}$ be the set of dummy states
, and $\Obsnewone = \{\obsnewone_x \mid x \in \obsone\}$ and $\Obsnewtwo = \{\obsnewtwo_x \mid x \in \obstwo\}$ be copies of observations for both players. The corresponding pre-min- transformed POSG $\sg' = (\sone', \stwo', \aone', \atwo', \tran', s_0, \obsone', \obstwo', \obfnone', \obfntwo')$ is defined via:
\begin{compactitem}
    \item {\em States}.  $\sone'= \sone \cup D$ and $\stwo'=\stwo$, with initial state $s_0$.
    \item {\em Actions.} $\aone' = \aone \cup \{\anew\}$ and $\atwo'=\atwo$, with all respective actions enabled throughout.  The effective max-actions are $\aone$ at original max-states and the fresh action $\anew$ at dummy states.
    \item {\em Observations.} $\obsone' =\ \obsone \cup {\Obsnewone}$ and $\obstwo' = \obstwo \cup {\Obsnewtwo}$, with $\obfnone'(s) = \obfnone(s)$ and $\obfntwo'(s) = \obfntwo(s)$ for all $s \in \sone \cup \stwo$, while $\obfnone'(d_s) = \obsnewone_{\obfnone(s)}$ and $\obfntwo'(d_s) = \obsnewtwo_{\obfntwo(s)}$ for all $d_s \in D$ are the newly introduced observation copies.
    \item {\em Transition function.} (i)  For $s_1 \in \sone$, $a_1 \in \aone$, and $s_2 \in \stwo$ such that 
    $\tran(s_1, a_1)(s_2)>0$, 
        the pre-min-transformed POSG transitions to the min-player copy $d_{s_2}$ rather than to the min-player state, i.e.~$\tran'(s_1, a_1)(d_{s_2}) = \tran(s_1, a_1)(s_2)$. (ii)  For $s_2 \in \stwo$ and $a_2 \in \atwo$, the transition function is the same as in the original ac-POSG, i.e.~$\tran'(s_2, a_2) = \tran(s_2, a_2)$. (iii) For the newly introduced copy states $d_s \in D$ and action $\anew$, we define $\tran'(d_s, \anew) = \dirac{s}$.
\end{compactitem}

\noindent{\bf Pre-min-transformed POSG objective.} 
Plays of $\sg$ and $\sg'$ are related by a bijection $\Gamma :
\Plays_\sg \to \Plays_{\sg'}$. Since $\sg$ is alternating control,
every $h = s_0, a_0, s_1, a_1, \dots \in \Plays_\sg$ has $s_{2k} \in
\sone$ and $s_{2k+1} \in \stwo$, and $\Gamma$ inserts the pair
$(d_{s_{2k+1}}{,}\anew)$ between every $\sone$-step and the following
$\stwo$-step. 
Observation histories lift the same way via $\GammaOpone,
\GammaOptwo$. Strategies in $\sg, \sg'$ are functions of observation
histories. We lift them between the two games as follows. For
$\star \in \{\pone, \ptwo\}$, $\Phi_\star$ takes a $\sg$-strategy $\sigma$ to the
$\sg'$-strategy that, given a $\sg'$-observation history $oh'$ ending in non-dummy state, strips the inserted
observations and queries $\sigma$ at the underlying $\sg$-history:
$\Phi_\star(\sigma)(oh') = \sigma((\Gamma_O^\star)^{-1}(oh'))$;
for ending at dummy states $\Phi_\pone(\sigma)$ plays $\anew$ deterministically.
The reverse $\Psi_\star$  goes
the other way: $\Psi_\star(\sigma')(oh) =
\sigma'(\Gamma_O^\star(oh))$. The pairs are mutual inverses,
$\Phi_\star = (\Psi_\star)^{-1}$, which helps us lift objectives
across to $\sg'$.

  \noindent  $\bullet$ \textbf{Parity objective.}  Let $\prfn: \sone \cup \stwo \to \{0,1,\dots,d\}$ be
          the priority function for $\sg$.  Define $\prfn': \sone' \cup \stwo' \to \{0,1,\dots,d\}$
          by $\prfn'(s) = \prfn(s)$ for $s \in \sone \cup \stwo$ and
          $\prfn'(d_s) = \prfn(s)$ for $d_s \in D$.

\noindent {\bf{Size and memory preservation under Pre-$\ptwo$}}. It is easy to see that  $|\sg'| = \mathcal{O}(|\sg|)$ and the
        construction runs in linear time. Moreover, $|\Gamma_O^\star(oh)|$ ($|\Gamma_O^\star)^{-1}(oh')|$)  is linear in $|oh|$ ($|oh'|$). 
 Finally, the  lifting maps
        $\Phi_\star, \Psi_\star$ preserve memory class exactly: 
        memoryless, finite-memory
         and infinite-memory regimes correspond exactly
        between $\sg$ and $\sg'$. See details in Appendix~\ref{app:premin-size-memory}.

\begin{lemma}[Equivalence between $\sg$ and $\sg'$, Appendix \ref{app:thm2}]
Let $\sg$ be an  ac-POSG and $\sg'$ its Pre-$\ptwo$
transformation. The play map
$\Gamma$  and the strategy maps
$\Phi_\star, \Psi_\star$ ($\star \in \{\pone,\ptwo\}$)  satisfy the following: for every
 priority function
$\prfn$ on $\sg$, every $\omega$-regular objective 
$W$ in $\sg$ and 
the lifted objective $W'$ in $\sg'$ 
and every $\rho \in \Plays_{\sg}, \sigma \in \stratpone_\sg, \pi \in \stratptwo_\sg$: \\
\textup{(I, obj.)} $\rho \in W \!\iff\! \Gamma(\rho) \in W'; \qquad \qquad\qquad\,\,\,$
\textup{(II, plays)} $\Gamma(\Plays_\sg^{\sigma,\pi}) = \Plays_{\sg'}^{\Phi_\pone(\sigma),\Phi_\ptwo(\pi)};$\\
\textup{(III, prob.)} $\probm_\sg^{\sigma,\pi}(W) = \probm_{\sg'}^{\Phi_\pone(\sigma),\Phi_\ptwo(\pi)}(W'); \qquad\!$\;
\textup{(IV, val.)} $\val_\sg(W) = \val_{\sg'}(W')$. \\ 
Consequently, sure, almost-sure, limit-sure, and quantitative
analysis of any objective above on $\sg$ is inter-reducible in linear
time with the corresponding analysis on $\sg'$. 
\label{thm:prenorm-main}
\end{lemma}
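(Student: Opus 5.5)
The plan is to prove (I)--(IV) in order, each resting on the previous one, and then read off the inter-reducibility claims.

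\textbf{(I).} Write $\rho = s_0,a_0,s_1,a_1,\dots$. By construction $\Gamma(\rho)$ is $\rho$ with the pair $(d_{s_{2k+1}},\anew)$ inserted before each min-state $s_{2k+1}$. The lifted priority gives $\prfn'(d_{s}) = \prfn(s)$, and each dummy occurrence $d_{s_{2k+1}}$ sits immediately before $s_{2k+1}$. Hence $\Inf(\Gamma(\rho)) = \Inf(\rho)\cup\{d_s : s\in \Inf(\rho)\cap\stwo\}$. The priorities appearing infinitely often are therefore the same set in both plays, so their minimum agrees. This shows $\rho\in W \iff \Gamma(\rho)\in W'$.

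\textbf{(II) and (III).} I will first show that cylinder probabilities are preserved: for every finite prefix $h$ of a $\sg$-play,
\[
\probm_\sg^{\sigma,\pi}(\cyl{h}) = \probm_{\sg'}^{\Phi_\pone(\sigma),\Phi_\ptwo(\pi)}(\cyl{\Gamma(h)}).
\]
The proof is by induction on $|h|$, with one case per kind of step.
\begin{compactitem}
\item A max-step $s_{2k}\to s_{2k+1}$ in $\sg$ has probability $\sum_a \sigma(oh)(s_{2k})(a)\,\delta(s_{2k},a)(s_{2k+1})$. In $\sg'$ this becomes the step $s_{2k}\to d_{s_{2k+1}}$. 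Its probability is the same, because $\Phi_\pone(\sigma)$ queries $\sigma$ at $(\GammaOpone)^{-1}(oh')=oh$ and $\tran'(s,a)(d_{s'})=\delta(s,a)(s')$.
\item The dummy step $d_{s_{2k+1}}\to s_{2k+1}$ has probability $1$. This holds because $\Phi_\pone(\sigma)$ plays $\anew$ deterministically at dummy histories and $\tran'(d_s,\anew)=\dirac{s}$.
\item A min-step uses $\tran'=\delta$. The min-player's observation history in $\sg'$ is $\GammaOptwo$ of the one in $\sg$, so $\Phi_\ptwo(\pi)$ returns the same assignment as $\pi$.
\end{compactitem}
It then remains to note that every prefix of a $\sg'$-play with positive probability under the lifted strategies lies in the image of $\Gamma$ on prefixes. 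The only thing to rule out is a non-$\anew$ action at a dummy state, and such actions are never played under $\Phi_\pone(\sigma)$. This gives (II). Both measures are determined by their values on cylinders, and $\Gamma$ is a measurable bijection mapping cylinders to cylinders. Hence the pushforward of $\probm_\sg^{\sigma,\pi}$ under $\Gamma$ equals $\probm_{\sg'}^{\Phi_\pone(\sigma),\Phi_\ptwo(\pi)}$, and combining this with (I) yields (III).

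\textbf{(IV).} I expect this to be the main obstacle. The difficulty is that $\Phi_\pone$ is not onto $\stratpone_{\sg'}$: a max-strategy $\sigma'$ in $\sg'$ may randomize at dummy histories, putting positive weight on non-$\anew$ actions.

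For the direction $\ge$ I would argue as follows. Such off-$\anew$ mass leads to the completed absorbing outcome, which is max-losing. Therefore replacing $\sigma'$ by the strategy $\hat\sigma'$ that plays $\anew$ at dummy histories and agrees with $\sigma'$ elsewhere can only increase $\probm(W')$ against every $\pi'$. To make "can only increase" precise, I would couple the two runs up to the first deviation: on plays where a non-$\anew$ action is chosen, $\sigma'$ loses surely, while the two strategies agree on all other plays.

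Next, $\hat\sigma' = \Phi_\pone(\Psi_\pone(\sigma'))$. On the min side, every $\pi'\in\stratptwo_{\sg'}$ equals $\Phi_\ptwo(\Psi_\ptwo(\pi'))$ on reachable histories, since dummies are max-states and the min-player never acts at them. So
\[
\inf_{\pi'} \probm_{\sg'}^{\hat\sigma',\pi'}(W') = \inf_{\pi} \probm_\sg^{\Psi_\pone(\sigma'),\pi}(W)
\]
by (III), and taking the supremum over $\sigma'$ gives $\val_{\sg'}(W')\le\val_\sg(W)$. The reverse inequality is immediate from (III) applied to $\Phi_\pone(\sigma)$, using again that $\Phi_\ptwo$ is onto on reachable histories.

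\textbf{Consequences.} The same normalization argument handles the other analysis problems. For almost-sure and quantitative analysis, a witnessing strategy transfers via $\Phi_\pone$ or via $\Psi_\pone$ after normalization. For limit-sure analysis, which is the statement $\val_\sg(W)=1$, it follows directly from (IV). For sure winning, transfer uses (II) together with (I); in the $\sg'\to\sg$ direction one first notes that a surely winning $\sigma'$ can never play a non-$\anew$ action at a dummy state, since that would produce a supported losing play. Finally, linear time follows from the size bound $|\sg'|=\mathcal{O}(|\sg|)$ established above.
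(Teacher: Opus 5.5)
Your proposal is correct and follows essentially the paper's route. (I) uses the repetition of priorities at the inserted dummy states, (II)/(III) use cylinder induction over the max-, dummy- and min-steps, and (IV) equates the fixed-strategy infima and then takes the outer supremum. Your explicit normalization of $\pone$-behaviour at dummy histories is exactly what the paper handles by its blanket convention that both players may be restricted to effective actions; its assertion that $\Phi_\pone$ is a bijection holds only under that restriction. The same dominance argument must also move any mass that $\sigma'$ places on $\anew$ at \emph{original} max-states onto actions of $\aone$. Otherwise $\Psi_\pone(\sigma')$ is not a $\sg$-strategy, and the identity $\hat\sigma'=\Phi_\pone(\Psi_\pone(\sigma'))$ is not well typed.
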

    
\begin{figure*}[t]
\centering
\begin{adjustwidth}{0pt}{0pt}
\begin{subfigure}[t]{0.3\textwidth}
    \begin{tikzpicture}[
    maxNode/.style={circle, draw=maxcolor, thick, minimum size=0.25cm},
    minNode/.style={rectangle, draw=mincolor, thick, minimum size=0.25cm},
    stochNode/.style={circle, fill=black, inner sep=0pt, minimum size=1mm},
    edge/.style={thick, ->, >=stealth,draw=black!80},
    scale = 0.4,
]
    \node[maxNode] (s1) at (0, 0)   {\tiny $s_1$};
    \node[maxNode] (s2) at (8, 0)  {\tiny $s_2$};
    \node[minNode] (t1) at (3.5, 2.5) {\tiny $t_1$};
    \node[minNode] (t2) at (3.5,-2.5) {\tiny $t_2$};
    \node[stochNode] (ja)  at (1.5,  1.25) {};
    \node[stochNode] (jb)  at (1.5, -1.25) {};
    \node[stochNode] (2ja) at (6,  0)    {};
    \node[stochNode] (2jy) at (6, -2.50) {};
    \node[stochNode] (2jx) at (1.5, -3) {};
    \node[stochNode] (1jx) at (6,  2.50) {};
    \draw[edge] (-1.5,0) -- (s1);
    \draw[edge] (s1) -- (ja)  node[pos=0.1,above]  {\tiny $a$};
    \draw[edge] (s1) -- (jb)  node[pos=0.5,above]  {\tiny $b$};
    \draw[edge] (s2) -- (2ja) node[midway,above]       {\tiny $a$};
    \draw[edge] (t2) -- (2jx) node[pos=0.4,above,inner sep=0.5pt] {\tiny $x$};
    \draw[edge] (t2) -- (2jy) node[pos=0.3,below]       {\tiny $y$};
    \draw[edge] (t1) -- (1jx) node[pos=0.2,above]       {\tiny $x$};
    \draw[edge] (ja) -- (t1); \draw[edge] (ja) -- (t2);
    \draw[edge] (jb) -- (t1); \draw[edge] (jb) -- (t2);
    \draw[edge] (2ja) -- (t1); \draw[edge] (2ja) -- (t2);
    \draw[edge] (2jx) -- (s1);
    \draw[edge] (2jx) to[in = 270, out = 315] (s2);
    \draw[edge] (2jy) -- (s2);
    \draw[edge] (1jx) -- (s2);
    \draw[edge] (1jx) to[in = 100, out = 75] (s1);
\end{tikzpicture}
\label{fig:posg}
\end{subfigure}
\begin{subfigure}[t]{0.3\textwidth}
\begin{tikzpicture}[
    maxNode/.style={circle, draw=maxcolor, thick, minimum size=0.25cm},
    minNode/.style={rectangle, draw=mincolor, thick, minimum size=0.25cm},
    botNode/.style={circle, draw=maxcolor, dashed, thick, minimum size=0.2cm},
    stochNode/.style={circle, fill=black, inner sep=0pt, minimum size=1mm},
    edge/.style={->, thick, draw=black!80},
    botEdge/.style={->, thick, dashed, draw=black!60},
    scale=0.4,
]
    \node[maxNode] (s1) at (0,0)    {\tiny $s_1$};
    \node[maxNode] (s2) at (10,0)   {\tiny $s_2$};
    \node[botNode] (dt1) at (4, 2)  {\tiny $d_{t_1}$};
    \node[botNode] (dt2) at (4,-2)  {\tiny $d_{t_2}$};
    \node[minNode] (t1) at (7, 2.5)  {\tiny $t_1$};
    \node[minNode] (t2) at (7,-2.5)  {\tiny $t_2$};
    \node[stochNode] (ja)  at (2.5, 1.25) {};
    \node[stochNode] (jb)  at (2.5,-1.25) {};
    \node[stochNode] (2ja) at (7.5, 0)    {};
    \node[stochNode] (2jy) at (8.5,-2.5)  {};
    \node[stochNode] (2jx) at (3,-4)  {};
    \node[stochNode] (1jx) at (8.5, 2.5)  {};
    \draw[edge] (-1.5,0) -- (s1);
    \draw[edge] (s1) -- (ja)  node[pos=0.4,above left, inner sep=0.5pt]  {\tiny $a$};
    \draw[edge] (s1) -- (jb)  node[pos=0.4,below left, inner sep=0.5pt]  {\tiny $b$};
    \draw[edge] (s2) -- (2ja) node[midway,above]       {\tiny $a$};
    \draw[edge] (ja)  -- (dt1); \draw[edge] (ja)  -- (dt2);
    \draw[edge] (jb)  -- (dt1); \draw[edge] (jb)  -- (dt2);
    \draw[edge] (2ja) -- (dt1); \draw[edge] (2ja) -- (dt2);
    \draw[botEdge] (dt1) -- (t1) node[midway,above, inner sep = 0.5pt] {\tiny $\anew$};
    \draw[botEdge] (dt2) -- (t2) node[midway,above, inner sep = 0.5pt] {\tiny $\anew$};
    \draw[edge] (t1) -- (1jx) node[midway,above] {\tiny $x$};
    \draw[edge] (t2) to[in=5, out = 225] node[pos=0.1,below right,inner sep=0] {\tiny $x$} (2jx);
    \draw[edge] (t2) -- (2jy) node[midway,below] {\tiny $y$};
    \draw[edge] (1jx) -- (s2);
    \draw[edge] (1jx) to[in = 80, out = 90] (s1);
    \draw[edge] (2jx) -- (s1);
    \draw[edge] (2jx) to[in = 290, out = 290] (s2);
    \draw[edge] (2jy) -- (s2);
\end{tikzpicture}

\label{fig:gprime}
\end{subfigure}
\begin{subfigure}[t]{0.5\textwidth}
\centering
\begin{tikzpicture}[,
    maxNode/.style={circle, draw=maxcolor, thick, minimum size=0.25cm},
    botNode/.style={circle, draw=maxcolor, dashed, thick, minimum size=0.25cm},
    uncSet/.style={rounded corners=3.5pt, dotted, draw=mincolor, thick, inner sep=3pt},
    stochNode/.style={circle, fill=mincolor, inner sep=0pt, minimum size=1.5mm},
    edge/.style={->, thick, draw=black!80},
    botEdge/.style={->, thick, dashed, draw=black!60},
    scale=0.4,
]
    \pgfdeclarelayer{background}
    \pgfsetlayers{background,main}
    \node[maxNode] (s1x)  at (0, 0)    {\tiny $s_1$};
    \node[maxNode] (s2x)  at (10, 0)  {\tiny $s_2$};
    \node[botNode] (dt1) at (5, 2)  {\tiny $d_{t_1}$};
    \node[botNode] (dt2) at (5,-2)  {\tiny $d_{t_2}$};
    \node[stochNode] (ja)  at (2.5, 1.25) {};
    \node[stochNode] (jb)  at (2.5,-1.25) {};
    \node[stochNode] (2ja) at (7.5, 0)    {};
    \node[stochNode] (1jx) at (7.5, 2.5)  {};
    \node[stochNode] (2jx) at (5,-5) {};
    \node[stochNode] (2jy) at (5,-5) {};
    \begin{pgfonlayer}{background}
      \node[uncSet,fit=(ja)]   (uja)  {};
      \node[uncSet,fit=(jb)]   (ujb)  {};
      \node[uncSet,fit=(2ja)]  (u2ja) {};
      \node[uncSet,fit=(1jx)]  (u1jx) {};
      \node[uncSet,fit=(2jx)(2jy)] (u2jxy) {};
      \node[right=0.5pt of u2jxy.south,align=center]{\tiny$\mathcal{P}(d_{t_2},\anew)$};
      \node[right=0.5pt of u1jx.east, align=right]{\tiny$\mathcal{P}(d_{t_1},\anew)$};
      \node[above=2pt of ja.north, align=center]{\tiny$\mathcal{P}(s_1,a)$};
      \node[below=2pt of jb.south, align=center]{\tiny$\mathcal{P}(s_1,b)$};
      \node[left=1pt of 2ja.west, align=center]{\tiny$\mathcal{P}(s_2,a)$};
    \end{pgfonlayer}
    \draw[edge] (-1.8,0) -- (s1x);
    \draw[edge] (s1x) -- (uja)  node[pos=0.5,above left, inner sep=0.5pt]  {\tiny $a$};
    \draw[edge] (s1x) -- (ujb)  node[pos=0.5,below left, inner sep=0.5pt]  {\tiny $b$};
    \draw[edge] (s2x) -- (u2ja) node[midway,above]       {\tiny $a$};
    \draw[botEdge] (dt1) -- (u1jx)  node[midway,below] {\tiny $\anew$};
    \draw[botEdge] (dt2) -- (u2jxy) node[midway,left]  {\tiny $\anew$};
    \draw[edge] (ja)  -- (dt1); \draw[edge] (ja)  -- (dt2);
    \draw[edge] (jb)  -- (dt1); \draw[edge] (jb)  -- (dt2);
    \draw[edge] (2ja) -- (dt1); \draw[edge] (2ja) -- (dt2);
    \draw[edge] (1jx)  -- (s2x)  node[midway,above right] {};
    \draw[edge] (1jx) to[in = 100, out = 75] (s1);
    \draw[edge] (2jx) to[out=180,in=-90] node[midway,below left] {} (s1x);
    \draw[edge] (2jy) to[out=0,in=-90]   node[midway,below right]{} (s2x);
\end{tikzpicture}
\end{subfigure}
\end{adjustwidth}
\caption{\small (a) A POSG $\sg$.  $\pone$-states $s_1,s_2$ are circles, while  $\ptwo$-states $t_1,t_2$ are  rectangles. (b) The Pre-$\ptwo$ transformed game $\mathcal{G}'$ with dashed circles representing new dummy states- $d_{t_1},d_{t_2}$ and dashed arrows represent the $\anew$-action. (c) The reduced RPOMDP $\mathfrak{R}(\mathcal{G})$ with green dotted circles represent uncertainty sets $P(s,a)$. $P(s_1,a)$ is a singleton consisting of the distribution $\delta(s_1,a)$, likewise for $P(s_1,b)$ and $P(s_2,a)$. $P(d_{t_1}, \top)$ is also a singleton $\{\delta'(d_{t_1},x)\}$.  $P(d_{t_2}, \top)$ is the polytope with two vertices $\{\delta'(d_{t_2},x), \delta'(d_{t_2},y)\}$. }
\label{fig:converse-main}
\end{figure*}

\subsubsection{Step 2: Reduction from Pre-min-transformed POSGs to RPOMDPs}
\label{sec:sg2rmdp}
We translate a
Pre-min-transformed POSG $\sg'$ into a value-equivalent RPOMDP $\rsg$
by {\em eliminating the $\ptwo$-player's states} and folding the
$\ptwo$-player's choices into polytope choices for the
RPOMDP environment. In $\sg'$, every play has the periodic structure
$
  s_1 \xrightarrow{a_1} d_{s_2} \xrightarrow{\anew}
  s_2 \xrightarrow{a_2} s_3 \xrightarrow{a_3}
  d_{s_4} \xrightarrow{\anew} s_4 \cdots,
$ 
in which $\sone \cup D$ states alternate with $\stwo$-states. The
key observation is that the $\ptwo$-player's only role is to pick an
action $a \in \atwo$ at each $\stwo$-state $s$, and this choice
deterministically fixes a transition distribution $\tran'(s,a) \in
\dist(\sone)$. This is precisely the role of the environment in an
RPOMDP: given a state and an action, it selects a distribution from
the uncertainty set. We exploit this correspondence by collapsing
each $\stwo$-state $s$ into the dummy state $d_s$ that immediately
precedes it: the polytope $\uncertsa(d_s, \anew)$ has 
as its vertices, the distributions $\tran'(s,a)$, one per $\ptwo$-action $a \in \atwo$.  Picking a point in this polytope is a convex
combination $\sum_a \alpha_a\, \tran'(s,a)$, where $\alpha_a$ is the distribution value induced by $\ptwo-$action $a$ at $s$. 
The RPOMDP environment at $(d_s, \anew)$
therefore has exactly the choices available to the $\ptwo$-player at
$s$ in $\sg'$. The following construction formalizes this intuition.

\noindent{\bf RPOMDP construction.} Given a Pre-min-transformed POSG
$\sg' = (\sone \cup D, \stwo, \aone \cup \{\anew\}, \atwo, \tran',
s_0, \obsone \cup \Obsnewone, \obstwo \cup \Obsnewtwo, \obfnone',
\obfntwo')$, the corresponding RPOMDP $\rsg = (S_\rp, A_\rp,
\uncert_\rp, \obsa^\rp, \obse^\rp, \obfna^\rp, \obfne^\rp,
s_0^\rp)$ is defined as follows:
\begin{compactitem}
  \item {\em States.} $S_\rp = \sone \cup D$, with initial state
        $s_0^\rp = s_0$.
  \item {\em Actions.} $A_\rp = \aone \cup \{\anew\}$, with every action enabled at every RPOMDP state.  The construction lists $\aone$ as effective at original max-states and $\anew$ at dummy states.
  \item {\em Uncertainty set.} $\uncert_\rp = \prod_{(s,a) \in S_\rp
        \times A_\rp} \uncertsa(s,a)$.  The polytopes have vertex sets $V_{s,a} = \{\tran'(s,a)\}$ if $s,a \in \sone\times\aone$, $V_{s,a} = \{\tran'(s', a') \mid a' \in \atwo\}$ if $s = d_{s'},\ a=\anew$.
    \item {\em Observations.} $\obsa^\rp = \obsone \cup \Obsnewone$
        and $\obse^\rp = \obstwo \cup \Obsnewtwo$, with
        $\obfna^\rp(s') = \obfnone'(s')$ and $\obfne^\rp(s') =
        \obfntwo'(s')$ for all $s' \in S_\rp$.
\end{compactitem}

\noindent{\bf RPOMDP objectives.} We first relate plays of $\sg'$ and
$\rsg$. For $\rho'{\in} \Plays_{\sg'}$,
$\Lambda : \Plays_{\sg'} {\to} \Plays_{\rsg}$ 
removes every $\stwo$-state and every
$\atwo$-action from $\rho'$.
The map $\Lambda$ is \emph{many-to-one}:
$\sg'$-plays that agree on $\sone \cup D$ states and $\aone \cup
\{\anew\}$ actions but differ in the $\atwo$-action chosen at
$\stwo$-states map to the same $\rsg$-play. Its set-valued inverse
$\Lambda^{-1}(\rho^\rp)$ recovers the pre-image by re-introducing an
arbitrary $\atwo$-action consistent with the realised next state.
Observation histories lift via $\LambdaOpone, \LambdaOptwo$: the
agent map $\LambdaOpone$ is a bijection between $\OH_\pone^{\sg'}$
and $\OH_\pa^{\rsg}$, while the environment map $\LambdaOptwo$ is a
bijection onto the co-domain 
of
environment histories ending  at a dummy state.

Strategies in $\sg'$ and $\rsg$ are functions of observation
histories. We lift strategies between the two models as follows.
$\Omega_\pone : \stratpone_{\sg'} \to \Sigma_{\rsg}$ converts a
$\sg'$-$\pone$-strategy $\sigma'$ into an agent strategy. Given an
agent observation history $oh^\rp$, the new strategy first lifts
$oh^\rp$ to its $\sg'$-counterpart $(\LambdaOpone)^{-1}(oh^\rp)$ and
queries $\sigma'$ there:
$\Omega_\pone(\sigma')(oh^\rp)=
\sigma'\!\left((\LambdaOpone)^{-1}(oh^\rp)\right)$.
$\Omega_\ptwo : \stratptwo_{\sg'} \to \Pi_{\rsg}$ converts a
$\sg'$-$\ptwo$-strategy $\pi'$ into an environment strategy. At a
dummy state $d_{s'}$ under $\anew$, with environment history
$oh^\rp$ ending in $\Obsnewtwo$, write
$\pi'\!\left((\LambdaOptwo)^{-1}(oh^\rp)\right)(s')(a) = \alpha_a$
for the randomised $\ptwo$-action $a$ at $s'$; then $\Omega_\ptwo(\pi')$
selects the convex combination $\sum_{a \in \atwo} \alpha_a \cdot
\tran'(s',a) \in \uncertsa(d_{s'}, \anew)$. At non-dummy
state-action pairs, the polytope is a singleton, and
$\Omega_\ptwo(\pi')$ picks its unique element. The reverse maps
$\Xi_\pone : \Sigma_{\rsg} \to \stratpone_{\sg'}$ and $\Xi_\ptwo :
\Pi_{\rsg} \to \stratptwo_{\sg'}$ are defined analogously:
$\Xi_\pone(\sigma^\rp)(oh') = \sigma^\rp(\LambdaOpone(oh'))$; the
environment map is defined analogously from the convex coefficients
of the selected point.  The formal strategy maps are given in the Appendix.

  \noindent $\bullet$ \textbf{Parity objective.} Let $\prfn'$ be the
  priority function for $\sg'$. Define  the priority function for $\rsg$, $\prfn^\rp : S_\rp \to
  \{0, 1, \dots, d\}$ by $\prfn^\rp(s') = \prfn'(s')$ for all $s'
  \in S_\rp$.

\noindent{\bf{Size and memory preservation under RPOMDP construction}}. It is easy to see that  $|\rsg| = \mathcal{O}(|\sg'|)$ and the
        construction runs in linear time. Moreover, observation
        histories on the two sides are linear in each other. The lifting maps
         $\Omega_\star, \Xi_\star$ preserve memory class exactly: 
        memoryless, finite-memory, and infinite-memory regimes correspond exactly
        between $\sg'$ and $\rsg$ (Appendix~\ref{app:last-size-memory}).

\begin{lemma}[Equivalence between $\sg'$ and $\rsg$, Proof in Appendix \ref{app:thm3}]
Let $\sg'$ be a Pre-min-transformed POSG and $\rsg$ its RPOMDP
reduction. The play map $\Lambda$
and the strategy maps $\Omega_\star, \Xi_\star$
($\star \in \{\pone, \ptwo\}$) 
satisfy the following: for every priority function $\prfn'$ on $\sg'$, every $\omega$-regular objective
$W'$ in $\sg'$ with the lifted objective  $W^\rp$ in $\rsg$ 
and for every $\rho' \in \Plays_{\sg'}$, 
$\sigma' \in \stratpone_{\sg'}$  and $\pi' \in \stratptwo_{\sg'}$: \\
\textup{(I, obj.)} $\rho' \in W' \!\iff\! \Lambda(\rho') \in W^\rp;$ $\quad\quad\quad\quad\,\,\,\,\,$ 
\textup{(II, plays)} $\Lambda(\Plays_{\sg'}^{\sigma',\pi'}) = \Plays_{\rsg}^{\Omega_\pone(\sigma'),\Omega_\ptwo(\pi')}$;\\
\textup{(III, prob.)} $\probm_{\sg'}^{\sigma',\pi'}(W') = \probm_{\rsg}^{\Omega_\pone(\sigma'),\Omega_\ptwo(\pi')}(W^\rp)$; 
\textup{(IV, val.)} $\val_{\sg'}(W') = \val_{\rsg}(W^\rp)$\\
Consequently, sure, almost-sure, limit-sure, and quantitative
analysis of any objective defined above on $\sg'$ is inter-reducible in
linear time with the corresponding analysis on $\rsg$.
\label{thm:sg2rmdp-main} 
\end{lemma}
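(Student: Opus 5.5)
The proof follows the order (I)$\to$(II)$\to$(III)$\to$(IV), with the analysis consequences read off at the end.

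\textbf{(I), objectives.} By construction, $\Lambda$ only deletes $\stwo$-states. Every deleted $\stwo$-state $s'$ is immediately preceded in $\rho'$ by $d_{s'}$. In the transformed game, $\prfn'(d_{s'})=\prfn(s')=\prfn'(s')$, and $\prfn^\rp$ agrees with $\prfn'$ on $S_\rp$. Hence the multiset of priorities occurring infinitely often is unchanged on the relevant set: $s'$ occurs infinitely often in $\rho'$ iff $d_{s'}$ occurs infinitely often in $\Lambda(\rho')$. So $\min \prfn'(\Inf(\rho')) = \min \prfn^\rp(\Inf(\Lambda(\rho')))$, and $W'$ is exactly $\Lambda^{-1}(W^\rp)$.

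\textbf{(II) and (III), plays and probabilities.} We fix $\sigma',\pi'$ and compare the two induced stochastic processes by induction on prefix length. The agent component is immediate: $\LambdaOpone$ is a bijection of observation histories, and $\Omega_\pone(\sigma')$ queries $\sigma'$ at the preimage, so action distributions agree at every $\sone\cup D$ state. At $\sone$-states, the uncertainty polytope is the singleton $\{\tran'(s,a)\}$, so transitions match. At a dummy state $d_{s'}$ under $\anew$, the environment history in $\rsg$ ends in $\Obsnewtwo$. The $\ptwo$-history in $\sg'$ at $s'$ is obtained by appending $\obfntwo(s')$. This observation is determined by the last observation $\obsnewtwo_{\obfntwo(s')}$, so the preimage under $\LambdaOptwo$ is well defined. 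Then
\[
\sum_{a}\alpha_a\,\tran'(s',a)(t)=\sum_a \pi'(\cdot)(s')(a)\,\tran'(s',a)(t),
\]
which is exactly the $\sg'$-probability of the two-step move $d_{s'}\to s'\to t$ after marginalizing out the $\atwo$-action.

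It follows that, for every cylinder set of $\rsg$, its probability equals the $\sg'$-probability of its $\Lambda$-preimage. That preimage is a finite union of cylinders over all choices of $\atwo$-actions. The two measures therefore agree on cylinders, so $\probm_{\rsg}=\probm_{\sg'}\circ\Lambda^{-1}$ by Carath\'eodory uniqueness. Combined with (I), this gives (III). For (II), a prefix has positive probability iff its $\Lambda$-image does, because the marginal is positive iff some $\atwo$-action consistent with the realised next state has positive weight. This yields equality of supported play sets.

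\textbf{(IV), values.} (III) alone only gives $\val_{\sg'}\le\val_{\rsg}$ along the image of $\Omega$. We need the maps to be surjective up to outcome. For $\Xi_\ptwo$, we pick any representation of the environment's chosen point of $\uncertsa(d_{s'},\anew)$ as a convex combination of the vertices $\tran'(s',a)$. To keep measurability and history-dependence clean, we fix one canonical choice per point. Then $\Omega_\ptwo(\Xi_\ptwo(\pi))$ chooses the same point as $\pi$ at every history, and $\Omega_\pone\circ\Xi_\pone=\mathrm{id}$. Hence every RPOMDP strategy pair yields the same measure as some $\sg'$ pair, and conversely.

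From this, $\sup_\sigma\inf_\pi$ agrees on both sides. For each $\sigma'$, $\inf_{\pi'}\probm_{\sg'}^{\sigma',\pi'}=\inf_{\pi}\probm_{\rsg}^{\Omega_\pone(\sigma'),\pi}$. Since $\Omega_\pone$ is a bijection, taking suprema gives (IV). The same correspondence, applied pointwise with (II), transfers sure, almost-sure and limit-sure winning as well. Linear-time inter-reducibility then follows from the size bound stated above.

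\textbf{Main obstacle.} I expect the delicate step to be the environment side. The issue is that $\LambdaOptwo$ is only a bijection onto histories ending at dummy states, and the environment in $\rsg$ must act without seeing $s'$ itself. The resolution rests on the observation copies at dummy states: $\obfntwo'(d_{s'})$ determines $\obfntwo(s')$. This is precisely what the Pre-$\ptwo$ transformation of Lemma~\ref{thm:prenorm-main} was designed to ensure, and I would verify it carefully. A second, smaller point to settle is the non-uniqueness of convex coefficients when vertices coincide; it is harmless, since only the resulting point affects the measure.
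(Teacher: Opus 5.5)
Your proposal is correct and follows essentially the same route as the paper. That route is: priority copying at dummy states for (I); a cylinder induction with the singleton-polytope, dummy-state and barycenter cases for (II)/(III); and, for (IV), equality of the fixed-agent-strategy infima obtained from (III) together with the coefficient-selection map $\Xi_\ptwo$, followed by the outer supremum over the bijection $\Omega_\pone$. The key property you use for $\Xi_\ptwo$ is exactly the paper's identity that $\Omega_\ptwo\circ\Xi_\ptwo$ reproduces the chosen polytope point.

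One small slip: you have the direction reversed when you say what (III) alone gives. (III) by itself yields $q_{\rsg}(\Omega_\pone(\sigma'))\le q_{\sg'}(\sigma')$, i.e.\ $\val_{\rsg}(W^\rp)\le\val_{\sg'}(W')$, and it is the reverse inequality that needs $\Xi_\ptwo$. Your final two-sided equality of infima is unaffected.
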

\noindent\textbf{Proof outline.}
The four clauses are proved in sequence, each using the previous
one. (1) holds because $\Lambda$ leaves the underlying $\sone \cup
D$-state sequence unchanged: target states are visited at exactly
the same positions, and $\prfn^\rp$ is just the restriction of
$\prfn'$ to $S_\rp$, so the only states removed from $\rho'$ are
$\stwo$-states whose priority by construction equals that of the
preceding dummy state -- which does not change the $\liminf$ used
in the parity condition. (2): Take $\rho' \in \Plays_{\sg'}^{\sigma',\pi'}$ and check, step by
step, that $\Lambda(\rho')$ could have been produced by the strategy
pair $(\Omega_\pone(\sigma'), \Omega_\ptwo(\pi'))$ in $\rsg$. At
each $\sone$-state, the singleton uncertainty set forces the unique
transition; at each dummy state, the convex combination
$\sum_a \alpha_a \tran'(s,a)$ chosen by $\Omega_\ptwo(\pi')$ is
exactly the next-state distribution induced by $\pi'$ at $s$.
Conversely, every play in $\Plays_{\rsg}^{\Omega_\pone(\sigma'),
\Omega_\ptwo(\pi')}$ has a $\sg'$-pre-image in $\Lambda^{-1}$
obtained by drawing, at each dummy step, a $\ptwo$-action according
to the coefficients $(\alpha_a)$.
(3) is a cylinder-set induction on history length with three cases
($\sone$-extension, dummy-state extension, and $\stwo \to \sone$
traversal): in each case the probability of the next step in
$\rsg$ reduces to the corresponding $\sg'$-probability,
with the deterministic $\anew$-transition contributing $1$ and the
polytope coefficients matching the $\ptwo$-action distribution.
(4) first applies (3) in both mapping directions to show equality of
the inner infima for each fixed agent strategy, and then takes the
outer supremum; only agent maps need to be bijective. The ``consequently''
 is immediate: quantitative and limit-sure analysis follow
from clause~(4); almost-sure follows from~(3) by taking the infimum
over $\pi'$ on each side and matching witnesses through $\Omega$ and
$\Xi$; and sure analysis follows from~(2) and (1) since a
strategy keeping all $\sg'$-plays inside $W'$ corresponds via
$\Omega$ to a strategy keeping all $\rsg$-plays inside $W^\rp$, and
conversely via $\Xi$. Thus from the above and from Lemmas \ref{thm:sg2rmdp-main} and \ref{thm:prenorm-main}, we conclude:

\begin{theorem}[Equivalence of $\sg$ and $\rsg$]
Let $\sg$ be an ac-POSG and $\rsg$ be the constructed RPOMDP as in Section \ref{sec:ac2rpomdp}. 
The sure, almost-sure, limit-sure, and quantitative
analysis of any $\omega$-regular objective on $\sg$ is inter-reducible in
linear time with the corresponding analysis on $\rsg$.
\end{theorem}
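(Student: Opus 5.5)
The theorem follows by composing the two reductions of Section~\ref{sec:ac2rpomdp}, so the plan is to chain Lemma~\ref{thm:prenorm-main} and Lemma~\ref{thm:sg2rmdp-main}. Given an ac-POSG $\sg$ with priority function $\prfn$, I would first build the Pre-$\ptwo$ transformation $\sg'$ with lifted priority $\prfn'$. Then I would build $\rsg$ from $\sg'$ with priority $\prfn^\rp=\prfn'|_{S_\rp}$. Both constructions run in linear time and produce objects of linear size, as recorded in the size paragraphs preceding each lemma. Their composition is therefore a linear-time map $\sg\mapsto\rsg$, and the objective $W$ is carried to $W^\rp$ through $W'$.

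For the four analysis problems I would compose the relevant clauses of the two lemmas. For \emph{quantitative} and \emph{limit-sure} analysis, clause (IV) of both lemmas gives $\val_\sg(W)=\val_{\sg'}(W')=\val_\rsg(W^\rp)$. Hence $\val_\sg(W)\ge p$ iff $\val_\rsg(W^\rp)\ge p$, and $\val_\sg(W)=1$ iff $\val_\rsg(W^\rp)=1$.

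For \emph{almost-sure} analysis, a single witness strategy must be transported rather than a value. I would use the composite agent maps $\Omega_\pone\circ\Phi_\pone$ and $\Psi_\pone\circ\Xi_\pone$, together with clause (III) of each lemma, to show that $\inf_\pi\probm^{\sigma,\pi}_\sg(W)=1$ iff $\inf_{\pi^\rp}\probm^{\Omega_\pone\Phi_\pone(\sigma),\pi^\rp}_\rsg(W^\rp)=1$. This requires both directions of the min-strategy correspondence, $\Phi_\ptwo/\Psi_\ptwo$ and $\Omega_\ptwo/\Xi_\ptwo$, so that every environment strategy on one side is matched by one on the other with the same probability.

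For \emph{sure} analysis, I would chain clauses (I) and (II): $\Gamma$ and $\Lambda$ preserve parity membership and map supported play sets onto supported play sets. Thus a max-strategy whose plays all lie in $W$ transfers to an agent strategy whose plays all lie in $W^\rp$. The converse direction goes through $\Xi$ and $\Psi$.

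The step I expect to be the main obstacle is the almost-sure and sure transfer in the reverse direction, from $\rsg$ back to $\sg$. Clauses (II) and (III) are stated only for images of $\sg$-strategies, so for an arbitrary environment strategy $\pi^\rp$ I need to know that $\pi^\rp=\Omega_\ptwo(\Xi_\ptwo(\pi^\rp))$, or at least that the two induce the same measure. I would verify this from the definitions. The difficulty is that $\Xi_\ptwo$ must extract convex coefficients from a chosen point, and since the vertex representation need not be unique, I would fix one decomposition. Any such choice reproduces the same point, and hence the same next-state distribution, which suffices for the measure to agree.

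Once this round-trip identity is in place, every quantifier over strategies on one side ranges over the images of strategies on the other side. The inter-reducibility then follows directly from the lemmas.
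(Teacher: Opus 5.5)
Your proposal is correct and follows the paper's own route: the theorem is obtained by chaining Lemma~\ref{thm:prenorm-main} and Lemma~\ref{thm:sg2rmdp-main} through the intermediate game $\sg'$, clause by clause for each analysis problem. The reverse-direction issue you flag, namely quantifying over an arbitrary environment strategy $\pi^\rp$, is handled in the paper as you propose: the coefficients $\beta_s^P$ are fixed once and for all, and the barycenter round-trip identity of Lemma~\ref{lemma:omega-xi-inverse} gives the reverse equalities in Lemmas~\ref{lemma:omega-equiv} and~\ref{cor:grsurewinning}, while the first step uses that $\Phi_\ptwo$ and $\Psi_\ptwo$ are mutual inverses.
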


\noindent{\bf Complexity Results.}
 A corollary of the reductions in Sections \ref{sec:forward} and \ref{sec:ac2rpomdp} is that, under $(s,a)$-rectangularity, Polytopic RPOMDPs are equivalent to finite ac-POSGs for any $\omega$-regular objective. Our proofs show this for  \emph{action-invisible strategies}, where agent and environment strategies cannot observe the agent's chosen actions in their respective observation histories.  Thanks to the  bidirectional equivalence, RPOMDPs inherit both upper and lower complexity bounds from POSGs for $\{$sure, almost-sure, limit-sure, quantitative$\}$ winning under $\omega$-regular objectives, see~\citep{DBLP:journals/fmsd/Chatterjee0H13} for a survey. Table \ref{tab:combined_complexity} summarizes these results. 
 
\section{Conclusion}
We studied RPOMDPs under $\omega$-regular objectives, and showed that solving them is equivalent to solving ac-POSGs under $\omega$-regular objectives. We do this by establishing polynomial-time reductions in both directions. Exploiting  this equivalence, we derived several new computational complexity results for both RPOMDPs and RMDPs, and provided a complete picture of the complexity landscape of the problem under action invisible strategies. Our results apply to R(PO)MDPs with $(s,a)$-rectangular and polytopic uncertainty sets. Interesting future work would be studying computational complexity bounds under more general settings.

\bibliography{references}
\bibliographystyle{abbrv}

\clearpage
\appendix
\centerline{\Large{\bf Appendix}}

\medskip

\textbf{Table of Contents} 
\begin{itemize}
\item Section \ref{app:rpomdp-posg} Conversion of RPOMDP $\rmdp$  to POSG $\sg$ 
\begin{itemize}
  \item Section \ref{app:map} Mapping of plays, observation histories and strategies from $\rmdp$ to $\sg$ 
 \item Section \ref{app:lift} Lifting objectives from $\rmdp$ to $\sg$
 \item Section \ref{app:value} Value Preservation between $\rmdp$ to $\sg$
 \item Section \ref{app:thm1} Proof of Theorem \ref{rpomdp-posg}
  \end{itemize}
    \item Section \ref{app:reverse} ac-POSG $\sg$  to  Pre-min transformed POSG $\sg'$
  \begin{itemize}
     \item Section \ref{app:map2} Mapping of plays, observation histories and strategies from $\sg$ to $\sg'$
    \item Section \ref{app:lift2} Lifting objectives from $\sg$ to $\sg'$
 \item Section \ref{app:value2} Value Preservation between $\sg$ to $\sg'$
        \item Section \ref{app:premin-size-memory} Size and memory preservation between $\sg$ and $\sg'$
\item Section \ref{app:thm2} Proof of Lemma \ref{thm:prenorm-main}
  \end{itemize}
       
\item Section \ref{app:sg2rmdp} Constructing an RPOMDP from a Pre-$\ptwo$-transformed POSG
\begin{itemize}
    \item  Section \ref{app:map3} Mapping of plays, observation histories and strategies from $\sg'$ to $\rsg$
    \item Section \ref{app:lift3} Lifting objectives from $\sg'$ to $\rsg$
 \item Section \ref{app:value3} Value Preservation between $\sg'$ to $\rsg$
        \item Section \ref{app:last-size-memory} Size and memory preservation between $\sg'$ and $\rsg$
\item Section \ref{app:thm3} Proof of Lemma \ref{thm:sg2rmdp-main}
  \end{itemize}
\end{itemize}

\clearpage 

As in the main text, all actions are formally enabled at all states. When a construction displays only effective actions, omitted actions are implicitly completed by an absorbing outcome losing for their owner. We suppress these dominated transitions and carry out the mappings on the displayed effective actions. For the max-player, assigning positive probability to an omitted action can only decrease the probability of satisfying the objective, so that probability can be reassigned to an effective action. Dually, an omitted min-action reaches a max-winning outcome and therefore cannot improve the min-player's strategy. Thus both players may be restricted to effective actions without changing the value or the sure, almost-sure, limit-sure, and quantitative winning sets. This argument applies because all objectives considered here are represented as parity objectives and the completion outcomes are absorbing wins or losses; an arbitrary action-sensitive objective would first need to be encoded in the state space.

\section{Reduction of RPOMDPs to POSGs}
\label{app:rpomdp-posg}

\smallskip\noindent{\bf Definition:} Let \( \rmdp = (S, A, \uncert,\obsa,\obse,\obfna,\obfne,s_0) \) be a polytopic RPOMDP. We define the POSG induced by $\rmdp{}$ as \( \sg =  (\sone^{\sg},\stwo^{\sg},\aone^{\sg},\atwo^{\sg},\delta^{\sg}, s_0^{\sg},\obsone^{\sg},\obstwo^{\sg},\obfnone^{\sg},\obfntwo^{\sg}) \). For each $(s,a)\in S\times A$, let $V_{s,a}$ denote the set of vertices of the local uncertainty polytope $\uncertsa(s,a)$.
\begin{itemize}
    \item {\em States:} $\sone^\sg = S$ and $\stwo^\sg = S \times A$. Let $S^\sg = \sone^{\sg} \cup \stwo^{\sg}$.
    \item {\em Actions:} $\aone^\sg=A$ and
    $\atwo^\sg=\bigcup_{(s,a)\in S\times A}V_{s,a}$. At min-state $(s,a)$ the effective actions are $V_{s,a}$; all remaining min-actions use the implicit min-losing transition. Let $A^\sg=\aone^\sg\cup\atwo^\sg$.
    \item {\em Transition function} $\delta^{\sg}: (\sone^\sg \times \aone^\sg) \cup (\stwo^\sg \times \atwo^\sg) \rightarrow \Delta(S^{\sg})$ is defined as follows:
    \begin{itemize}
        \item For $\pone$-player having state-action pairs $(s,a) \in \sone^\sg \times \aone^\sg$, the transition function is given as Dirac distribution with probability mass on state $(s,a) \in \stwo^\sg$
        \begin{equation*}
        \delta^{\sg}(s,a)(s') = \begin{cases}
            1 &\text{if } s' = (s,a) \\
            0 &\text{otherwise}
        \end{cases}
        \end{equation*}
        \item For an effective min-action $v\in V_{s,a}$ at $(s,a)\in\stwo^\sg$, the transition function is the probability distribution over $\sone^\sg$ given by
        \begin{equation*}
        \delta^{\sg}((s,a),v)(s') = \begin{cases}
            v(s') &\text{where}~ s' \in \sone^\sg ~\text{and}~v(s')~\text{denotes the}~ s'\text{th entry in}~v\\
            0 &\text{otherwise}
        \end{cases} 
        \end{equation*}
        Every other globally enabled min-action follows the default max-winning transition and is suppressed from the display.
    \end{itemize}
    \item {\em Initial state:} $s_0^{\sg} = s_0 \in \sone^\sg$.
    \item {\em Observation functions:} $\obfnone^{\sg}$ and $\obfntwo^{\sg}$ are functions from states $S^{\sg}$ to $\obsone^{\sg}$ and $\obstwo^{\sg}$ respectively.
    \begin{itemize}
        \item For $s \in \sone^\sg$, $\obfnone^{\sg}(s) = \obfna(s)$
        \item For $(s,a) \in \stwo^\sg$, $\obfnone^{\sg}((s,a)) = \obfna(s)$
        \item For $s \in \sone^\sg$, $\obfntwo^{\sg}(s) = \obfne(s)$
        \item For $(s,a) \in \stwo^\sg$, $\obfntwo^{\sg}((s,a)) = \obfne(s)$
    \end{itemize}
\end{itemize}

An effective play in $\sg$ has the form $\rho=s_0,a_0,s_1,a_1,\ldots\in\Playssg$, where for every $k\geq0$, $s_{2k}\in\sone^\sg$, $a_{2k}\in\aone^\sg$, $s_{2k+1}=(s_{2k},a_{2k})\in\stwo^\sg$, and $a_{2k+1}\in V_{s_{2k},a_{2k}}$. Non-matching min-actions take the implicit min-losing transition. At step $t$, the history is $h_t=s_0,(o^1_0,o^2_0),s_1,\ldots,s_t,(o^1_t,o^2_t)$, where $o^1_i=\obfnone^\sg(s_i)$ and $o^2_i=\obfntwo^\sg(s_i)$. \par

\subsection{Mapping Plays, Observation Histories and Strategies from $\rmdp$ to $\sg$}
\label{app:map}
In this section, we establish a formal mapping between the plays, observation histories and strategies of the RPOMDP $\rmdp$ 
and the POSG $\sg$. 

\textbf{Mapping on Plays:}
\begin{itemize}
    \item Define a mapping $\Theta:\Plays_\rmdp \rightarrow 2^{\Plays_{\sg}}$ as follows. Let $\rho = s_0,a_0,s_1,a_1,\cdots \in \Playsr$. Then $\Theta(\rho)$ is the set of all plays $\rho' = s'_0,a'_0,s'_1,a'_1,s'_2,a'_2,\cdots \in \Plays_{\sg}$ is given index-wise, $\forall k\geq 0:$ $s'_{2k}=s_k,a'_{2k}=a_k,s'_{2k+1}=(s'_{2k},a'_{2k}) = (s_k,a_k),a'_{2k+1}=v \in V_{s_k,a_k}$ (arbitrary $v$). Thus, $\Theta(\rho)$ consists of all plays in $G$ obtained by resolving each transition in $\rho$ using an arbitrary choice of vertex from the uncertainty set at each step. For $X \subseteq \Plays_\rmdp,\quad \Theta(X) = \bigcup\limits_{\rho \in X} \Theta(\rho)$.
    \item Define $\Upsilon: \Plays_{\sg} \rightarrow \Plays_\rmdp$ as follows. Let $\rho' = s'_0,a'_0,s'_1,a'_1,s'_2,a'_2,\cdots$ be a play in $\sg$. Then $\Upsilon(\rho') = \rho = s_0,a_0,s_1,a_1,\cdots \in \Plays_\rmdp$ is given index-wise by $s_i=s'_{2i}$ and $a_i=a'_{2i}$ for every $i\geq0$. The map $\Upsilon$ is many-to-one. For $Y\subseteq\Plays_\sg$, define $\Upsilon(Y)=\{\Upsilon(\rho')\mid\rho'\in Y\}$; for $X\subseteq\Plays_\rmdp$, define its preimage $\Upsilon^{-1}(X)=\{\rho'\in\Plays_\sg\mid\Upsilon(\rho')\in X\}$.
\end{itemize}

\textbf{Mapping on Observation Histories:} $\OH_a,\OH_e$ denote the set of agent and environment observation histories respectively in RPOMDP $\rmdp$. $\OH^\sg_\pone,\OH^\sg_\ptwo$ denote observation histories of $\pone,\ptwo$ players respectively in POSG $\sg$. \par
\begin{enumerate}
    \item Define $\Theta_a:\OH_a \rightarrow \OH_{\pone}^\sg$. For agent observation history $oh^a = o_0,o_1\cdots, o_t \in \OH_a$, the mapping is given by $\Theta_a(oh^a) = oh^{\pone} = o'_0,o'_1,\cdots,o'_{2t}$ is given index-wise, $\forall\:0 \leq k \leq t-1, o'_{2k} = o'_{2k+1} = o_k, o'_{2t} = o_t$.

    \item Define $\Upsilon_a: \OH_{\pone}^\sg \rightarrow \OH_a$. For $\pone$-player observation history $oh^{\pone} = o_0,o_1,\cdots,o_{t-1},o_{t}$, the mapping is given by $\Upsilon_a(oh^{\pone}) = oh^a = o'_0,o'_1,\cdots,o'_{t/2}$ is given index-wise, $\: \forall 0\leq i \leq t/2, \: \: o'_i = o_{2i}$.
    
    \item Define $\Theta_e:\OH_e \rightarrow \OH_{\ptwo}^\sg$. Let $oh^e = o_0,o_1,\cdots,o_t \in \OH_e$. The function is given by $\Theta_e(oh^e) = oh^{\ptwo} = o'_0,o'_1,\cdots,o'_{2t},o'_{2t+1}$ is given index-wise, $\forall 0\leq k\leq t, o'_{2k} = o'_{2k+1} =o_k$.

    \item Define $\Upsilon_e: \OH_{\ptwo}^\sg \rightarrow \OH_e$. Let $oh^{\ptwo} = o_0,o_1,\cdots,o_t \in \OH_{\ptwo}^\sg$. The function is given by $\Upsilon_e(oh^{\ptwo}) = oh^e = o'_0,o'_1,\cdots,o'_{(t-1)/2}$ is given index-wise, $\forall 0\leq i \leq (t-1)/2, \: \: o'_i = o_{2i}$.
\end{enumerate}

\begin{lemma}
\label{ref:mdpsg_oh10}
    Let $\star \in \{a,e\}$. $\Theta_\star$ and $\Upsilon_\star$ are inverses of each other.
    \begin{proof}
        \begin{enumerate}
            \item For all $oh = o_1,o_1,o_2,o_2,\cdots, o_{t-1},o_{t-1},o_t \in \OH^\sg_\pone$, \[\Theta_a(\Upsilon_a(oh)) = \Theta_a(o_1,o_2,\cdots,o_{t-1},o_t) = o_1,o_1,o_2,o_2,\cdots, o_{t-1},o_{t-1},o_t = oh\]
            \item For all $oh = o_1,o_2,\cdots,o_{t-1},o_t \in \OH_a$, \[\Upsilon_a(\Theta_a(oh)) = \Upsilon_a(o_1,o_1,o_2,o_2,\cdots, o_{t-1},o_{t-1},o_t) = o_1,o_2,\cdots,o_{t-1},o_t = oh\]
            \item For all $oh = o_1,o_1,o_2,o_2,\cdots, o_{t-1},o_{t-1},o_t,o_t \in \OH^\sg_\ptwo$, \[\Theta_e(\Upsilon_e(oh)) = \Theta_e(o_1,o_2,\cdots,o_{t-1},o_t) = o_1,o_1,o_2,o_2,\cdots, o_{t-1},o_{t-1},o_t,o_t = oh\]
            \item For all $oh = o_1,o_2,\cdots,o_{t-1},o_t \in \OH_e$, \[\Upsilon_e(\Theta_e(oh)) = \Upsilon_e(o_1,o_1,o_2,o_2,\cdots, o_{t-1},o_{t-1},o_t,o_t) = o_1,o_2,\cdots,o_{t-1},o_t = oh \]
        \end{enumerate}
    \end{proof}
\end{lemma}

\textbf{Mapping of Strategies}
For each $(s,a)\in S\times A$, define the barycenter map
\[
\mathsf{bar}_{s,a}:\dist(V_{s,a})\to\uncertsa(s,a),
\qquad
\mathsf{bar}_{s,a}(\lambda)=\sum_{v\in V_{s,a}}\lambda(v)v.
\]
For each $P\in\uncertsa(s,a)$, choose coefficients
$\alpha^{P}_{s,a}\in\dist(V_{s,a})$ such that
$\mathsf{bar}_{s,a}(\alpha^{P}_{s,a})=P$.
Such coefficients need not be unique; fix one choice for every $P$ once and for all.
We identify $\dist(V_{s,a})$ with distributions over $\atwo^\sg$ supported on $V_{s,a}$.
Also fix an arbitrary reference distribution $P^0_{s,a}\in\uncertsa(s,a)$ for every $(s,a)\in S\times A$.
\begin{enumerate}
    \item The agent strategy $\sigma \in \Sigma_\rmdp$ for RPOMDP $\rmdp$ is given by
            \[
        \sigma:\OH_a\to
        \bigcup_{o\in\obsa}
        \prod_{s:\,\obfna(s)=o}\dist(A)
            \]
    \item The environment strategy $\pi \in \Pi_\rmdp$ for RPOMDP $\rmdp$ is given by
            \[
        \pi:\OH_e\times A\to\uncert.
            \]
    \item The $\pone$-player strategy $\sigma' \in \Sigma_{\sg}$ for POSG $\sg$ from the reduction is given by
            \[
        \sigma':\OH_\pone^\sg\to
        \bigcup_{o\in\obsone^\sg}
        \prod_{s\in\sone^\sg:\,\obfnone^\sg(s)=o}\dist(\aone^\sg)
            \]
            (recall that $\pone$ and  $\ptwo$ states alternate)
    \item The $\ptwo$-player strategy $\pi' \in \Pi_{\sg}$ for POSG $\sg$ from the reduction is given by
            \[
        \pi':\OH_\ptwo^\sg\to
        \bigcup_{o\in\obstwo^\sg}
        \prod_{s\in\stwo^\sg:\,\obfntwo^\sg(s)=o}\dist(\atwo^\sg)
            \]
            (recall that $\pone$ and  $\ptwo$ states alternate) 
\end{enumerate}

Define the mappings
\begin{itemize}
    \item $\mdpsga_a: \Sigma_\rmdp \rightarrow \Sigma_{\sg}$. For all $\sigma \in \Sigma_\rmdp$ and $oh \in \OH_{\pone}^\sg$,
            \[
                \mdpsga_a(\sigma)(oh) = \sigma(\Upsilon_a(oh)).
            \]
    \item $\mdpsge_a: \Sigma_{\sg} \rightarrow \Sigma_{\rmdp}$. For all $\sigma' \in \Sigma_{\sg}$ and $oh \in \OH_{a}$,
            \[
                \mdpsge_a(\sigma')(oh) = \sigma'(\Theta_a(oh)).
            \]
    \item 
    $\mdpsga_e: \Pi_\rmdp \rightarrow \Pi_{\sg}$. For all $\pi \in \Pi_\rmdp$, $oh \in \OH_{\ptwo}^\sg$, and $(s,a) \in \stwo^\sg$ with $\obfntwo^\sg((s,a))=\Last(oh)$, use the component notation above. 
            \[
                \mdpsga_e(\pi)(oh)((s,a))
                =\alpha^{\pi(\Upsilon_e(oh),a)(s,a)}_{s,a}.
            \]

    \item $\mdpsge_e: \Pi_{\sg} \rightarrow \Pi_{\rmdp}$. For all $\pi' \in \Pi_{\sg}$, $oh \in \OH_{e}$, and $a\in A$, define the complete assignment $\mdpsge_e(\pi')(oh,a)\in\uncert$ componentwise as follows.
            \[
            \mdpsge_e(\pi')(oh,a)(x,b)=
            \begin{cases}
            \mathsf{bar}_{x,a}\!\left(\pi'(\Theta_e(oh))((x,a))\right)
                & \text{if } b=a \text{ and } \obfne(x)=\Last(oh),\\
            P^0_{x,b} & \text{otherwise.}
            \end{cases}
            \]
            Hence every component belongs to its corresponding local uncertainty set.
\end{itemize}

\begin{remark}
\label{mdpsge}
    For every $\pi \in \Pi_\rmdp$, $oh \in \OH_{\ptwo}^\sg$, and compatible $(s,a)$,
    \[
    \pi(\Upsilon_e(oh),a)(s,a)
    =\mathsf{bar}_{s,a}\!\left(\mdpsga_e(\pi)(oh)((s,a))\right).
    \]
\end{remark}

\begin{lemma}
\label{mdpsg_sp}
    The agent maps $\mdpsga_a$ and $\mdpsge_a$ are mutual inverses.  The environment maps satisfy, at every compatible $oh$, $s$, and $a$,
    \begin{align*}
    \mdpsge_e(\mdpsga_e(\pi))(oh,a)(s,a)
      &=\pi(oh,a)(s,a),\\
    \mathsf{bar}_{s,a}\!\left(
      \mdpsga_e(\mdpsge_e(\pi'))(oh)((s,a))
    \right)
      &=\mathsf{bar}_{s,a}\!\left(\pi'(oh)((s,a))\right).
    \end{align*}
    Thus both compositions preserve the next-state distribution.
    \begin{proof}
        The agent identities follow from Lemma~\ref{ref:mdpsg_oh10}.  The environment identities follow from
        $\mathsf{bar}_{s,a}(\alpha^P_{s,a})=P$ and the two history identities in Lemma~\ref{ref:mdpsg_oh10}.

    \end{proof}
\end{lemma}
\subsection{Lifting Objectives}
\label{app:lift}
Let $W$ be a parity objective in RPOMDP $\rmdp$ given by the priority function $p:S \rightarrow \{0,1,\cdots,d\},\ d \in \nat$. Then, the \emph{lifted} objective in POSG $\sg$, $W' \subseteq \Plays^\sg$ is given the priority function $p^\sg: \sone^\sg \cup \stwo^\sg \rightarrow \{0,1,\cdots,d\}$ defined as
    \[p^\sg(s) =     
    \begin{cases}
        p(s) & \text{if } s \in \sone^\sg\\
        p(s') & \text{if } s = (s',a) \in \stwo^\sg\\
    \end{cases}
    \]

\begin{lemma}
\label{lem: rglo}
    Let $W$ be parity objective in RPOMDP $\rmdp$ given by the priority function $p$ and let $W'$ be the \emph{lifted} objective in POSG $\sg$. Then,
    
    \[ \rho \in W \iff \Theta(\rho) \subseteq W'\]
    \begin{proof}
 We have $W = \parity{\rmdp}{p}$ and $W' = \parity{\sg}{p^\sg}$ where $p^\sg$ is defined as above. Any play $\rho = s_0,a_0,s_1,a_1,\cdots\in \Plays_\rmdp$ has parity sequence: $p(\rho) = p_0,p_1,p_2,\cdots$ such that $p_i = p(s_i)$. Any play in $\Theta(\rho)$ is of the form $\rho' = s_0,a_0,(s_0,a_0),a'_0,s_1,\cdots$ with parity sequence $p^\sg(\rho') = p_0,p_0,p_1,p_1,\cdots$, since $p^\sg(s_i) = p^\sg((s_i,a_i)) = p(s_i)$. Due to repeating values $\liminf_{i \to \infty}$ of the sequence do not change, then $\liminf_{i \to \infty}(p(\rho)) = \liminf_{i \to \infty}(p^\sg(\rho'))$. Thus, $\rho \in W \implies \rho' \in W' \iff \Theta(\rho) \subseteq W'$. The reverse direction follows similarly.
    \end{proof}
\end{lemma}

\subsection{Value Preservation}
\label{app:value}

\begin{lemma}
\label{mdpsg_vp}
        The strategy maps preserve the induced probability measure in both directions.  In particular,
        \begin{align*}
        \probm_{\rmdp}^{\sigma,\pi}(W)
            &=\probm_{\sg}^{\mdpsga_a(\sigma),\mdpsga_e(\pi)}(W')
            &&\text{for all $\sigma\in\Sigma_\rmdp$, $\pi\in\Pi_\rmdp$},\\
        \probm_{\sg}^{\sigma',\pi'}(W')
            &=\probm_{\rmdp}^{\mdpsge_a(\sigma'),\mdpsge_e(\pi')}(W)
            &&\text{for all $\sigma'\in\Sigma_\sg$, $\pi'\in\Pi_\sg$},
        \end{align*}
    where $W \subseteq \Plays_\rmdp$ is a parity objective of $\rmdp$ and $W'=\Theta(W)$ is the corresponding objective in $\sg$.

    \begin{proof}
    For a history $h$, let $\Cone(h)$ denote the set of all runs having $h$ as a prefix. Since the probability measure over runs is uniquely determined by its values on cylinder sets, it suffices to show that corresponding cylinder sets have the same probability.

    Let $H_\rmdp$(resp. $H_\sg$) denote the set of histories in RPOMDP $\rmdp$ (resp. POSG $\sg$). Define the mapping $\Theta_h: H_\rmdp \rightarrow 2^{H_\sg}$ as follows: Let $h = s_0,(o^a_0,o^e_0), s_1, \dots s_t, (o^a_t,o^e_t) \in H_\rmdp$. Then $\Theta_h(h)$ is the set of all histories $h' = s'_0,(o^{max}_0,o^{min}_0),s'_1, \dots s'_{2t}, (o^{max}_{2t},o^{min}_{2t})$ is given index-wise, $\forall k\geq 0:$ $s'_{2k}=s_k,s'_{2k+1}=(s_k,a_k)$ where $a_k \in \aone^\sg$ and $\forall i\geq 0$ $o^{max}_i = \obfnone^\sg(s'_i),o^{min}_i = \obfntwo^\sg(s'_i)$. Thus, $\Theta_h(h)$ consists of all histories in $G$ obtained by resolving each transition in $h$ using an arbitrary choice of action from $\aone^\sg$ at each step.\par
     
        We show that for a finite history $h$ in $W$, there exists corresponding set of histories $H' = \Theta_h(h)$ in $W'$ such that
        \[
            \probm_{\rmdp}^{\sigma, \pi} (\Cone(h)) =  \sum_{h' \in H'}\probm_{\sg}^{\mdpsga_a(\sigma), \mdpsga_e(\pi)} (\Cone(h'))
        \] 

        The proof follows induction on the length of $h$ with base condition $h=s_0$ and $H' = \{s_0\}$ with values $\probm_{\rmdp}^{\sigma, \pi} (\Cone(h)) = \sum_{h' \in H'}\probm_{\sg}^{\mdpsga_a(\sigma), \mdpsga_e(\pi)} (\Cone(h')) = 1$.\par
        For induction, let $h=h_1,s_1,(o_1,o'_1)$ where $o_1 = \obfna(s_1),o'_1 = \obfne(s_1)$ and $h_1$ ending in state $s$ with observations history pair $(o,o')$. Let $H'_1 = \Theta_h(h_1)$. A history $h' \in H'$ is of the form $h' = h'_1,(s,a),(o,o'),s_1,(o_1,o'_1)$ where $h'_1 \in H'_1,\:a\in A$(an arbitrary action $a$). Let $oh^a,oh^e$ be the agent and environment observations histories for history $h_1$. Let $oh^{\pone}$ be $\pone$ player observation history for all histories $h'_1 \in H'_1$. Then $oh^a = \Upsilon_a(oh^{\pone})$. Let $oh^{\ptwo}$ be $\ptwo$ player observation history for $h'_1,(s,a),(o,o')$ where $h' \in H'_1$. Then $oh^e = \Upsilon_e(oh^{\ptwo})$.\par

        For a given $\sigma' \in \Sigma_\sg,\pi' \in \Pi_\sg$ in history $h'$, the transition probability from $s$ to $(s,a)$ is given by $\sigma'(oh^{\pone})(s)(a)$ and the transition probability from $(s,a)$ to $s_1$ is given by all the transitions on the vertices $v \in V_{s,a}$ i.e., $\left(\sum_{v \in V_{s,a}}\pi'(oh^{\ptwo})((s,a))(v)\cdot v(s_1)\right)$.
        \begin{adjustwidth}{-70pt}{-20pt}
        \begin{align*}
            \sum_{h' \in H'}&\probm_{\sg}^{\mdpsga_a(\sigma), \mdpsga_e(\pi)} (\Cone(h')) \\ &= \sum_{h' \in H'}\left(\probm_{\sg}^{\mdpsga_a(\sigma), \mdpsga_e(\pi)} (\Cone(h_1')) \times \mdpsga_a(\sigma)(oh^{\pone})(s)(a) \times \left(\sum_{v \in V_{s,a}}\mdpsga_e(\pi)(oh^{\ptwo})((s,a))(v)\cdot v(s_1)\right)\right)\\
            &= \sum_{h' \in H'_1}\probm_{\sg}^{\mdpsga_a(\sigma), \mdpsga_e(\pi)} (\Cone(h_1')) \times \sum_{a \in A}\left(\mdpsga_a(\sigma)(oh^{\pone})(s)(a) \times \left(\sum_{v \in V_{s,a}}\mdpsga_e(\pi)(oh^{\ptwo})((s,a))(v)\cdot v(s_1)\right)\right)        
        \end{align*}
        \end{adjustwidth}
        By induction hypothesis, $\sum_{h' \in H'_1}\probm_{\sg}^{\mdpsga_a(\sigma), \mdpsga_e(\pi)} (\Cone(h'_1)) = \probm_{\rmdp}^{\sigma, \pi} (\Cone(h_1))$. From definition of $\mdpsga_a$, $\mdpsga_a(\sigma)(oh^{\pone})(s)(a) = \sigma(\Upsilon_a(oh^\pone))(s)(a) = \sigma(oh^a)(s)(a)$. From remark $\ref{mdpsge}$, $\left(\sum_{v \in V_{s,a}}\mdpsga_e(\pi)(oh^{\ptwo})((s,a))(v)\cdot v(s_1) \right) = \pi(\Upsilon_e(oh^{\ptwo}),a)(s,a)(s_1) = \pi(oh^e,a)(s,a)(s_1)$. So, 
        \begin{align*}
            \sum_{h' \in H'}\probm_{\sg}^{\mdpsga_a(\sigma), \mdpsga_e(\pi)} (\Cone(h')) & = \probm_{\rmdp}^{\sigma, \pi} (\Cone(h_1)) \times \sum_{a\in A}\left(\sigma(oh^a)(s)(a) \times \pi(oh^e,a)(s,a)(s_1)\right) \\
            & = \probm_{\rmdp}^{\sigma, \pi} (\Cone(h))
        \end{align*}
        The converse cylinder identity follows by the same induction, replacing the vertex distribution by its barycenter.  The second identity of Lemma~\ref{mdpsg_sp} shows that this replacement preserves every next-state probability. Equality on all cylinders yields both asserted measure equalities and hence both objective-probability equalities.
    \end{proof}
\end{lemma}

\begin{lemma}
\label{lem:rg_vp}

Let $W \subseteq \Plays_\rmdp$ be a parity objective of $\rmdp$ and let $W' = \Theta(W)$ be the corresponding objective in $\sg$. Then,
\[
        \val_\rmdp(W) = \val_{\sg}(W').
\]

\begin{proof}
        For fixed $\sigma\in\Sigma_\rmdp$, set
        \[
        q_\rmdp(\sigma)=\inf_{\pi\in\Pi_\rmdp}\probm_\rmdp^{\sigma,\pi}(W),
        \qquad
        q_\sg(\sigma')=\inf_{\pi'\in\Pi_\sg}\probm_\sg^{\sigma',\pi'}(W').
        \]
        The first equality in Lemma~\ref{mdpsg_vp} gives
        $q_\sg(\mdpsga_a(\sigma))\le q_\rmdp(\sigma)$, because every $\pi\in\Pi_\rmdp$ has the image $\mdpsga_e(\pi)\in\Pi_\sg$.  Conversely, for every $\pi'\in\Pi_\sg$, the second equality in Lemma~\ref{mdpsg_vp} and
        $\mdpsge_a(\mdpsga_a(\sigma))=\sigma$ give
        \[
        \probm_\sg^{\mdpsga_a(\sigma),\pi'}(W')
            =\probm_\rmdp^{\sigma,\mdpsge_e(\pi')}(W),
        \]
        hence $q_\rmdp(\sigma)\le q_\sg(\mdpsga_a(\sigma))$.  Therefore
        \[
        q_\rmdp(\sigma)=q_\sg(\mdpsga_a(\sigma))
        \quad\text{for every $\sigma\in\Sigma_\rmdp$}.
        \]
        Finally, $\mdpsga_a$ is a bijection by Lemma~\ref{mdpsg_sp}, so taking the outer supremum yields
        \begin{align*}
        \val_\rmdp(W)
            &=\sup_{\sigma\in\Sigma_\rmdp}q_\rmdp(\sigma)
             =\sup_{\sigma\in\Sigma_\rmdp}q_\sg(\mdpsga_a(\sigma))\\
            &=\sup_{\sigma'\in\Sigma_\sg}q_\sg(\sigma')
             =\val_\sg(W').
        \end{align*}

\end{proof}
    
\end{lemma}

\begin{lemma}
\label{lem:rgsurewinning}
For mapped strategies, projection preserves the supported plays in both directions:
\begin{align*}
\Upsilon\!\left(\Plays^{\mdpsga_a(\sigma),\mdpsga_e(\pi)}_\sg\right)
  &=\Plays^{\sigma,\pi}_\rmdp,\\
\Upsilon\!\left(\Plays^{\sigma',\pi'}_\sg\right)
  &=\Plays^{\mdpsge_a(\sigma'),\mdpsge_e(\pi')}_\rmdp.
\end{align*}
\begin{proof}
We prove the first equality by induction on finite effective prefixes; the second follows by the same argument using the reverse maps. The initial prefixes coincide. Suppose corresponding prefixes end in the RPOMDP state $s$ and the POSG max-state $s$. Their agent observation histories correspond under $\Theta_a$ and $\Upsilon_a$, and
\[
\mdpsga_a(\sigma)(\Theta_a(oh^a))(s)=\sigma(oh^a)(s).
\]
Consequently, an action $a$ has positive probability after one prefix exactly when it has positive probability after the other. The POSG then moves deterministically to $(s,a)$.

At the min-state $(s,a)$, let
$\lambda=\mdpsga_e(\pi)(\Theta_e(oh^e))((s,a))$ and
$P=\pi(oh^e,a)(s,a)=\mathsf{bar}_{s,a}(\lambda)$. For every successor $s'$, nonnegativity gives
\[
P(s')>0
\quad\Longleftrightarrow\quad
\lambda(v)v(s')>0\text{ for some }v\in V_{s,a}.
\]
Thus an RPOMDP extension to $s'$ is supported exactly when some POSG extension through an effective vertex $v$ to $s'$ is supported. This proves the induction step and the first projected-prefix equality. For arbitrary $\sigma',\pi'$, replace $\lambda$ by $\pi'(oh)((s,a))$ and use the barycenter identity in Lemma~\ref{mdpsg_sp}; the same induction proves the second equality. Finally, an infinite play is supported exactly when all its finite prefixes are supported, yielding both displayed play-set equalities. 

\end{proof}
\end{lemma}

\begin{lemma}[Sure winning analysis]
\label{lem:rg_sw}
        Let $W \subseteq \Plays_\rmdp$ be a parity objective of $\rmdp$ and $W'=\Upsilon^{-1}(W)=\Theta(W)$ its lifted objective in $\sg$. The preimage $\Upsilon^{-1}$ was defined in Appendix~\ref{app:map}. For $\sigma\in\Sigma_\rmdp$, let $\sigma'=\mdpsga_a(\sigma)$. Then, 
        \[\sigma \text{ is sure winning for } W \iff \sigma' \text{ is sure winning for } W'.\]
    \begin{proof}
        Suppose first that $\sigma$ is sure winning and take arbitrary $\pi'\in\Pi_\sg$.  By Lemma~\ref{lem:rgsurewinning},
        \[
        \Upsilon\!\left(\Plays^{\sigma',\pi'}_\sg\right)
            =\Plays^{\sigma,\mdpsge_e(\pi')}_\rmdp\subseteq W,
        \]
        so $\Plays^{\sigma',\pi'}_\sg\subseteq\Upsilon^{-1}(W)=W'$.  Conversely, suppose that $\sigma'$ is sure winning and take arbitrary $\pi\in\Pi_\rmdp$.  The first equality in Lemma~\ref{lem:rgsurewinning} gives
        $\Upsilon(\Plays^{\sigma',\mdpsga_e(\pi)}_\sg)=\Plays^{\sigma,\pi}_\rmdp$; since the game plays lie in $W'$, their projections lie in $W$.
    \end{proof}
\end{lemma}

\begin{lemma}[Almost-sure analysis]
\label{lem:rg_as}
Let $W \subseteq \Plays_\rmdp$ be a parity objective of $\rmdp$ and $W'=\Theta(W)$ its lifted objective in $\sg$. For $\sigma\in\Sigma_\rmdp$, let $\sigma'=\mdpsga_a(\sigma)$. Then,
        \[\sigma \text{ is almost-sure winning for } W \iff \sigma' \text{ is almost-sure winning for } W'.\]
    \begin{proof}
        The fixed-strategy equality established in the proof of Lemma~\ref{lem:rg_vp} gives
        \[
        \inf_{\pi\in\Pi_\rmdp}\probm_\rmdp^{\sigma,\pi}(W)
            =\inf_{\pi'\in\Pi_\sg}\probm_\sg^{\sigma',\pi'}(W').
        \]
        One side equals $1$ if and only if the other does.
    \end{proof}
\end{lemma}

\begin{lemma}[Limit-sure Analysis]
\label{lem:rg_ls}
    Let $W \subseteq \Plays_\rmdp$ be a parity objective of $\rmdp$ and $W' = \Theta(W)$ is the corresponding objective in $\sg$. Then,
    \[W \text{ is limit-sure winning} \iff W' \text{ is limit-sure winning} \]
    \begin{proof}
    Follows directly from lemma \ref{lem:rg_vp}.
    \end{proof}
\end{lemma}

\begin{lemma}[Quantitative Analysis]
\label{lem: rg_qa}
    Let $W \subseteq \Plays_\rmdp$ be a parity objective of $\rmdp$ and $W' = \Theta(W)$ is the corresponding objective in $\sg$. Then, for $k \in (0,1)$
    \[\Val_\rmdp(W)\geq k \iff \Val_\sg(W') \geq k \]
    \begin{proof}
    Follows directly from lemma \ref{lem:rg_vp}.
    \end{proof}
\end{lemma}

\subsection{Proof of Theorem \ref{rpomdp-posg}}
\label{app:thm1}
\begin{proof}
    Because parity  subsumes all $\omega$-regular objectives, from Lemma \ref{lem:rg_vp}, $\Val_\rmdp(W) = \Val_\sg(W')$ for any $\omega$-regular objective $W$ in $\rmdp$ and its lifted objective $W'$ in $\sg$.\\
    The ``consequently'' statement follows from lemmas \ref{lem:rg_sw}, \ref{lem:rg_as}, \ref{lem:rg_ls}, and \ref{lem: rg_qa}. 
\end{proof}

\clearpage 

\section{The Pre-$\ptwo$-transformation from ac-POSG $\sg$ to $\sg'$}
\label{app:reverse}
In this section, we begin  with the transformation from $\sg$ to $\sg'$.

Consider any \emph{alternating control} POSG $\sg$.  In $\sg$ the $\pone$-player selects an
action from a $\pone$-state and the output is immediately a $\ptwo$-state (and vice versa).
To make the subsequent RPOMDP construction cleaner, we first insert a dummy $\pone$-state
$d_s$ before every $\ptwo$-state $s \in \stwo$, so that the $\pone$-player always acts
\emph{twice} in a row before the $\ptwo$-player acts.  The $\pone$-action at a dummy
state is always the fixed action $\anew$ with a deterministic transition to the corresponding
$\ptwo$-state; it therefore carries no strategic content.

\begin{definition}[Pre-$\ptwo$ transformation]
\label{def:maxtd-transform}

Given an \emph{alternating control} POSG
$\sg = (\sone, \stwo, \aone, \atwo, \tran, s_0, \obsone, \obstwo, \obfnone,
\obfntwo)$, the \emph{Pre-$\ptwo$ transformation} of $\sg$ is the POSG
$\sg' = (\sone', \stwo', \aone', \atwo', \tran', s_0, \obsone', \obstwo',
\obfnone', \obfntwo')$ where:

\begin{itemize}
    \item \textit{State spaces:}
    \[
        \sone' \;=\; \sone \;\cup\; \underbrace{\{\, d_s \mid s \in \stwo \,\}}_{D},
        \qquad
        \stwo' \;=\; \stwo,
    \]
    where $D$ is a set of fresh dummy $\pone$-states, one for each $\ptwo$-state.

    \item \textit{Actions:}
    \[
        \aone' \;=\; \aone \;\cup\; \{\anew\},
        \qquad
        \atwo' \;=\; \atwo,
    \]
    where $\anew \notin \aone$ is fresh.  All actions in $\aone'$ are enabled at every max-state; $\aone$ is effective at original max-states and $\anew$ is effective at dummy states.

    \item \textit{Transition function $\tran'$:}
    \begin{itemize}
        \item For $s_1 \in \sone$, $a_1 \in \aone$, and $s_2 \in \stwo$:
              $\tran'(s_1, a_1)(d_{s_2}) = \tran(s_1, a_1)(s_2)$.
        \item For $s_2 \in \stwo$ and $a_2 \in \atwo$:
              $\tran'(s_2, a_2) = \tran(s_2, a_2)$.
        \item For $d_s \in D$ and action $\anew$:
              $\tran'(d_s, \anew) = \dirac{s}$.
        \item Every other max state-action pair follows the implicit max-losing transition.
    \end{itemize}

    \item \textit{Observations:}
    \[
        \obsone' \;=\; \obsone \;\cup\; \underbrace{\{\,\obsnewone_x \mid x \in \obsone\,\}}_{\Obsnewone},
        \qquad
        \obstwo' \;=\; \obstwo \;\cup\; \underbrace{\{\,\obsnewtwo_x \mid x \in \obstwo\,\}}_{\Obsnewtwo}.
    \]

    \item \textit{Observation functions:}
    \begin{itemize}
        \item $\forall s \in \sone \cup \stwo$:
              $\obfnone'(s) = \obfnone(s)$ and $\obfntwo'(s) = \obfntwo(s)$.
        \item $\forall d_s \in D$:
              $\obfnone'(d_s) = \obsnewone_{\obfnone(s)}$ and
              $\obfntwo'(d_s) = \obsnewtwo_{\obfntwo(s)}$.
    \end{itemize}
\end{itemize}
\end{definition}

\subsection{Mapping plays, observation histories and strategies from $\sg$ to $\sg'$}
\label{app:map2}
\smallskip\noindent\textbf{Mapping on plays.}
We define a map $\Gamma: \Plays_\sg \to \Plays_{\sg'}$.  Let
$h = s_0, a_0, s_1, a_1, s_2, a_2, \dots \in \Plays_\sg$.
Since $\sg$ is alternating control, even-indexed states lie in $\sone$ and odd-indexed states
lie in $\stwo$.  We set $\Gamma(h) = h' = s'_0,a'_0,s'_1,a'_1,\cdots$ index-wise as: For $k \geq 0$,
\begin{align*}
    s'_{3k} &= s_{2k}\\
    a'_{3k} &= a_{2k}\\
    s'_{3k+1} &= d_{s_{2k+1}}\\
    a'_{3k+1} &= \anew\\
    s'_{3k+2} &= s_{2k+1}\\
    a'_{3k+2} &= a_{2k+1}
\end{align*}

For $X \subseteq \Plays_\sg$, define $\Gamma(X) = \{\Gamma(\rho) \mid \rho\in X\}$.

\begin{observation}
By construction of $\sg'$, $\Gamma(h) \in \Plays_{\sg'}$ for every $h \in \Plays_\sg$, so
$\Gamma$ is well-defined.
\end{observation}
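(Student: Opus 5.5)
The statement is the observation that $\Gamma(h)\in\Plays_{\sg'}$ for every $h\in\Plays_\sg$. The plan is to verify directly, index by index, that the sequence $\Gamma(h)=s'_0,a'_0,s'_1,a'_1,\dots$ satisfies the defining conditions of a play of $\sg'$. There are three such conditions: it starts at the initial state $s_0$ of $\sg'$; each state $s'_i$ belongs to $\sone'\cup\stwo'$ and the action $a'_i$ belongs to the action set of its owner; and every consecutive pair $(s'_i,a'_i)\mapsto s'_{i+1}$ is a positive-probability transition under $\tran'$. Observations are determined by the states through $\obfnone',\obfntwo'$, so they impose no extra constraint.

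First I would record the structure of $h$. Since $\sg$ is alternating control and starts in a max-state, we have $s_{2k}\in\sone$, $a_{2k}\in\aone$, $s_{2k+1}\in\stwo$ and $a_{2k+1}\in\atwo$. Because $h$ is a play, $\tran(s_{2k},a_{2k})(s_{2k+1})>0$ and $\tran(s_{2k+1},a_{2k+1})(s_{2k+2})>0$. As throughout the paper, I take plays to use effective actions; transitions to the implicit absorbing completion states are handled identically, since the completion is copied verbatim.

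Then I check the three transition types along positions $3k,3k+1,3k+2$.
\begin{compactenum}
\item $(s'_{3k},a'_{3k})=(s_{2k},a_{2k})\in\sone\times\aone$ leads to $s'_{3k+1}=d_{s_{2k+1}}$. This step is supported, since by clause (i) of the definition $\tran'(s_{2k},a_{2k})(d_{s_{2k+1}})=\tran(s_{2k},a_{2k})(s_{2k+1})>0$.
\item $(s'_{3k+1},a'_{3k+1})=(d_{s_{2k+1}},\anew)$ leads to $s'_{3k+2}=s_{2k+1}$ with probability $1$, by clause (iii) of the definition.
\item $(s'_{3k+2},a'_{3k+2})=(s_{2k+1},a_{2k+1})\in\stwo\times\atwo$ leads to $s'_{3k+3}=s_{2k+2}$. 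By clause (ii), $\tran'=\tran$ here, so this probability is positive.
\end{compactenum}
Together with $s'_0=s_0$, this shows that $\Gamma(h)$ is an infinite sequence whose every finite prefix is a valid history of $\sg'$. Hence $\Gamma(h)\in\Plays_{\sg'}$.

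The only subtle point I expect is the bookkeeping of the index arithmetic $3k\leftrightarrow 2k$. A second point is confirming that the histories' observation components match: at $d_s$ the observations are by definition $\obsnewone_{\obfnone(s)}$ and $\obsnewtwo_{\obfntwo(s)}$, which are exactly $\obfnone'(d_s)$ and $\obfntwo'(d_s)$, so this consistency is immediate.
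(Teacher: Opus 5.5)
Your argument is correct and is exactly the unpacking of the paper's ``by construction'' justification; the paper gives no further proof. Clauses (i)--(iii) of the definition of $\tran'$ make each of the three transition types along $\Gamma(h)$ positive, and your index bookkeeping ($s'_{3k}=s_{2k}$, $s'_{3k+3}=s_{2k+2}$) is right. The hedge about completion states is unnecessary, since $\sg$ is alternating control with $\tran$ defined on all of $(\sone\times\aone)\cup(\stwo\times\atwo)$, so every pair in $h$ is already covered by clauses (i) and (ii).
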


\begin{observation}
Every play $h' = s_1,a_1,s_2,a_2,\cdots \in \Plays_{\sg'}$ satisfies, for each $k \geq 0$:
\[
s_{3k+1} \in \sone, \quad a_{3k+1} \in \aone,
\quad s_{3k+2} \in D, \quad a_{3k+2} = \anew, \quad s_{3k+3} \in \stwo,\quad a_{3k+3} \in \atwo
\]
\end{observation}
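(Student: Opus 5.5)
The plan is to prove the observation by induction on $k$, reading $\Plays_{\sg'}$ as the set of \emph{effective} plays, i.e.\ plays that use only effective actions and positive-probability transitions under them. This reading is the convention fixed at the start of the appendix: omitted actions lead to absorbing max/min-losing outcomes and are suppressed. The whole argument is a case analysis on the three clauses of $\tran'$ in Definition~\ref{def:maxtd-transform}, together with the alternating-control property of $\sg$. In this property, $\tran(s,a)\in\dist(\stwo)$ for $(s,a)\in\sone\times\aone$ and $\tran(s,a)\in\dist(\sone)$ for $(s,a)\in\stwo\times\atwo$.

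For the base case, the play starts at $s_1=s_0$. Since ac-POSGs start in a max-state, $s_1\in\sone$. At an original max-state the effective actions are exactly $\aone$, so $a_1\in\aone$.

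For the inductive step, assume $s_{3k+1}\in\sone$ and $a_{3k+1}\in\aone$. I will carry out three one-step checks in sequence.
\begin{itemize}
\item By clause (i), $\tran'(s_{3k+1},a_{3k+1})$ puts mass only on states $d_{s_2}$, with mass $\tran(s_{3k+1},a_{3k+1})(s_2)$. Because $\sg$ is alternating control, these $s_2$ lie in $\stwo$, so $s_{3k+2}=d_{s_2}\in D$.
\item At a dummy state the only effective max-action is $\anew$, so $a_{3k+2}=\anew$. By clause (iii), $\tran'(d_{s_2},\anew)=\dirac{s_2}$, hence $s_{3k+3}=s_2\in\stwo$.
\item At a min-state the min-player picks an effective $a_{3k+3}\in\atwo$. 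By clause (ii), $\tran'(s_{3k+3},a_{3k+3})=\tran(s_{3k+3},a_{3k+3})\in\dist(\sone)$, again by alternating control. Therefore $s_{3k+4}\in\sone$, and as before $a_{3k+4}\in\aone$.
\end{itemize}
This closes the induction.

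I expect the only delicate point to be the role of the implicit completion transitions. A play that takes a non-effective action, such as a non-$\anew$ action at some $d_s$, leaves the pattern by entering an absorbing sink. I would handle this exactly as in the appendix preamble: such plays are dominated for the deviating player and are excluded from the effective plays on which $\Gamma$ and the strategy maps are defined. The observation is therefore stated, and proved, for effective plays only. A secondary remark is that clause (i) never targets a state of $\sone$ directly. The reason is that $\tran(s_1,a_1)$ is supported in $\stwo$, so no mass is lost in the redirection to $D$, and $\tran'(s_1,a_1)$ is a genuine distribution.
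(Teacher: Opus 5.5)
Your proposal is correct and takes the paper's approach: the paper states this observation without proof as holding by construction of $\tran'$ together with the alternating-control property of $\sg$, and your induction on $k$ over the three transition clauses makes that explicit. Restricting to effective plays, which excludes the implicit absorbing completions, is the convention the appendix adopts, and the statement needs it to hold literally.
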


\begin{lemma}
Let $h' \in \Plays_{\sg'}$ and let $h$ be obtained by removing all occurrences of states in
$D$ and the action $\anew$ from $h'$.  Then $h \in \Plays_\sg$ and $\Gamma(h) = h'$.
\end{lemma}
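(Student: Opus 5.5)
The plan is to use the periodic structure recorded in the preceding Observation and then check the three things the statement asks for: that $h$ is a play of $\sg$, and that $\Gamma(h)=h'$. Write $h' = s'_0,a'_0,s'_1,a'_1,\dots$ with the index convention of the definition of $\Gamma$, i.e.\ $s'_{3k}\in\sone$, $a'_{3k}\in\aone$, $s'_{3k+1}\in D$, $a'_{3k+1}=\anew$, $s'_{3k+2}\in\stwo$, $a'_{3k+2}\in\atwo$. (The Observation is stated with a shifted index starting at $1$; I will simply reindex it.) Since only effective moves are considered, as fixed at the start of the appendix, every supported transition out of an original max-state lands in $D$. Every transition out of a dummy state $d_x$ under $\anew$ lands deterministically in $x$, and every transition out of a min-state lands in $\sone$.

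Deleting the positions $3k+1$ (the state in $D$ together with its action $\anew$) gives the sequence $h = s_0,a_0,s_1,a_1,\dots$ with $s_{2k}=s'_{3k}$, $a_{2k}=a'_{3k}$, $s_{2k+1}=s'_{3k+2}$, $a_{2k+1}=a'_{3k+2}$. The key step is to show $s'_{3k+1}=d_{s'_{3k+2}}$. This holds because the only positive-probability transition from $d_x$ under $\anew$ is $\tran'(d_x,\anew)=\dirac{x}$, so if $s'_{3k+1}=d_x$ then $s'_{3k+2}=x$.

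Next I will verify that $h$ is a legal play of $\sg$, treating the two kinds of step separately.
\begin{itemize}
\item For a max-step, $\tran(s_{2k},a_{2k})(s_{2k+1}) = \tran'(s'_{3k},a'_{3k})(d_{s_{2k+1}})>0$, by clause (i) of the definition of $\tran'$ together with the key step.
\item For a min-step, $\tran(s_{2k+1},a_{2k+1})(s_{2k+2}) = \tran'(s'_{3k+2},a'_{3k+2})(s'_{3k+3})>0$, by clause (ii).
\end{itemize}
The initial states agree. The observations in the history are determined by the states, since $\obfnone',\obfntwo'$ restrict to $\obfnone,\obfntwo$ on $\sone\cup\stwo$. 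Hence $h\in\Plays_\sg$.

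Finally, substituting these identities into the defining equations of $\Gamma$ gives $\Gamma(h)_{3k}=s_{2k}=s'_{3k}$ and $\Gamma(h)_{3k+1}=d_{s_{2k+1}}=s'_{3k+1}$, with the inserted action equal to $\anew$, and $\Gamma(h)_{3k+2}=s_{2k+1}=s'_{3k+2}$. The actions match in the same way, so $\Gamma(h)=h'$.

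The only delicate point is the key step that the dummy state preceding a min-state is exactly its own copy. A related concern is that omitted (non-effective) actions could break the periodic pattern. I will handle this by appealing to the convention that plays are taken along effective actions and edges, as the cited Observation already presumes.
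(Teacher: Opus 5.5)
Your proof is correct and follows the approach the paper relies on. The paper states this lemma without proof, treating it as immediate from the definition of $\tran'$ and the preceding Observation on the periodic structure of plays of $\sg'$, and your argument is exactly that unwinding; in particular, you correctly identify the two points it needs: that $\tran'(d_x,\anew)=\dirac{x}$ forces each deleted dummy state to be $d_{s'_{3k+2}}$, the copy of the following min-state, and that plays are read along effective actions, per the appendix convention.
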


\begin{lemma}
\label{lemma:gammabij}
The map $\Gamma: \Plays_\sg \to \Plays_{\sg'}$ is a bijection.
\end{lemma}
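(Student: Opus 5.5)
Injectivity and surjectivity will be proved separately, using the preceding lemma for surjectivity and the explicit index formula defining $\Gamma$ for injectivity.

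\emph{Injectivity.} Suppose $\Gamma(h_1)=\Gamma(h_2)$ for $h_1=s_0,a_0,s_1,a_1,\dots$ and $h_2=t_0,b_0,t_1,b_1,\dots$ in $\Plays_\sg$. By definition of $\Gamma$, for every $k\ge 0$ we have $s_{2k}=s'_{3k}=t_{2k}$ and $a_{2k}=a'_{3k}=b_{2k}$. Likewise $s_{2k+1}=s'_{3k+2}=t_{2k+1}$ and $a_{2k+1}=a'_{3k+2}=b_{2k+1}$. Hence $h_1$ and $h_2$ agree at every index, so $h_1=h_2$. An equivalent argument is that deleting the positions $3k+1$, i.e.\ the pairs $(d_{s_{2k+1}},\anew)$, is a left inverse of $\Gamma$ on $\Plays_\sg$.

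\emph{Surjectivity.} Take any $h'\in\Plays_{\sg'}$. By the observation on the periodic shape of $\sg'$-plays, $h'$ has blocks of the form $(\sone,\aone),(D,\anew),(\stwo,\atwo)$. Let $h$ be obtained by deleting every $D$-state and every occurrence of $\anew$. The preceding lemma gives $h\in\Plays_\sg$ and $\Gamma(h)=h'$. For completeness I would check that lemma's content briefly. The dummy state preceding a $\stwo$-state $s$ must be $d_s$, because $\tran'(d_x,\anew)=\dirac{x}$. Hence the reinsertion done by $\Gamma$ reproduces exactly the deleted pairs. Positive-probability steps also transfer: $\tran(s_1,a_1)(s_2)=\tran'(s_1,a_1)(d_{s_2})>0$, and $\tran(s_2,a_2)=\tran'(s_2,a_2)$.

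\emph{Main obstacle.} The only delicate point is the bookkeeping around the implicit absorbing completions for non-effective actions. A $\sg'$-play that uses a non-effective action, such as a non-$\anew$ action at a dummy state, would not have the periodic shape. Following the appendix convention, I would take $\Plays$ to mean effective plays on both sides, so that the observation and the preceding lemma apply verbatim. Alternatively, one can note that the completions are matched one-to-one by extending $\Gamma$ identically on the absorbing suffix. Combining the two directions, $\Gamma$ is a bijection with inverse given by deletion of $D\times\{\anew\}$.
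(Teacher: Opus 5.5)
Your proof is correct and is essentially the paper's argument: the paper gives no separate proof, but it states this lemma immediately after the deletion lemma, which supplies surjectivity. Injectivity is immediate from the index formula for $\Gamma$, and the paper likewise restricts to effective plays.

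One correction concerns your fallback remark about extending $\Gamma$ ``identically on the absorbing suffix'': it does not work. A $\sg'$-play that plays $\anew$ at an original max-state, or a non-$\anew$ action at some $d_s$, has no counterpart in $\sg$, which has neither $\anew$ nor dummy states. So restricting to effective plays is genuinely needed, not just one of two options.
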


\smallskip\noindent\textbf{Mapping on observation histories.}
We extend $\Gamma$ to observation histories.  For the $\pone$-player, define
$\GammaOpone: \OH^\sg_\pone \to \OH^{\sg'}_\pone$ as follows.  Let
$oh = x_0, y_0, \dots, x_{n-1}, y_{n-1}, x_{n} \in \OH^\sg_\pone$. Then $\GammaOpone(oh) = x_0, z_0, y_0, x_1, z_1, y_1, \dots, x_{n-1},z_{n-1}, y_{n-1},  x_n $  where each $z_i =\obsnewone_{y_{i}} $.

Since $\GammaOpone(oh)$ cannot end with a $\obsnewone$-observation by construction, the map
is not surjective.  We nevertheless define an inverse
$(\GammaOpone)^{-1}: \OH^{\sg'}_\pone \to \OH^\sg_\pone$ by removing all occurrences of
observations in $\Obsnewone$.  We define $\GammaOptwo$ and
$(\GammaOptwo)^{-1}$ for the $\ptwo$-player analogously.

\begin{lemma}
\label{lemma:ginv-g}
Let $\star \in \{\pone, \ptwo\}$.  For all $oh \in \OH^\sg_\star$,
$(\GammaOstar)^{-1}(\GammaOstar(oh)) = oh$.
\end{lemma}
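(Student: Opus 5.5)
The identity $(\GammaOstar)^{-1}(\GammaOstar(oh)) = oh$ follows by unfolding the two definitions; we give the argument for $\star=\pone$, and the case $\star=\ptwo$ is verbatim with $\Obsnewtwo$ in place of $\Obsnewone$.

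Write $oh = x_0, y_0, \dots, x_{n-1}, y_{n-1}, x_n \in \OH^\sg_\pone$. By definition,
\[
\GammaOpone(oh) = x_0, z_0, y_0, \dots, x_{n-1}, z_{n-1}, y_{n-1}, x_n,
\qquad z_i = \obsnewone_{y_i}.
\]
So $\GammaOpone(oh)$ is obtained from $oh$ by inserting the single symbol $z_i$ after each $x_i$, for $i<n$. The inverse $(\GammaOpone)^{-1}$ deletes every observation that lies in $\Obsnewone$. It therefore suffices to show two things about $\GammaOpone(oh)$: the inserted symbols $z_i$ are exactly its entries lying in $\Obsnewone$, and all other entries keep their relative order.

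For the first point, each $z_i = \obsnewone_{y_i}$ lies in $\Obsnewone$ by construction. Every other entry is one of $x_j$ or $y_j$, and these are observations of $\sg$, hence lie in $\obsone$. The copies $\Obsnewone$ are fresh, that is, $\Obsnewone\cap\obsone=\emptyset$; this is implicit in Definition~\ref{def:maxtd-transform} and is the one point to make explicit. Hence no original entry is deleted and every inserted entry is.

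For the second point, deletion preserves the relative order of the surviving symbols. The surviving sequence is $x_0, y_0, \dots, x_{n-1}, y_{n-1}, x_n = oh$. A straightforward induction on $n$ makes this fully formal if desired, with base case $oh = x_0$, where $\GammaOpone(oh)=x_0$ and nothing is removed.

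The argument is routine. The only potential pitfall is the disjointness of the fresh observation copies from the original alphabets, which we state as part of the construction.
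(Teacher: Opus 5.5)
Your proposal is correct and is the intended argument. The paper states this lemma without proof, as an immediate consequence of the definitions: $\GammaOstar$ inserts one tagged copy per $\stwo$-observation, and $(\GammaOstar)^{-1}$ deletes all tagged observations. The only substantive fact needed is the disjointness $\Obsnewone\cap\obsone=\emptyset$ (resp.\ $\Obsnewtwo\cap\obstwo=\emptyset$), which you correctly isolate; the paper's count $|\obsone'|=2|\obsone|$ confirms it. For $\star=\ptwo$ the history ends at a $\stwo$-observation, so a tag is also inserted before the final entry, but your deletion argument goes through unchanged.
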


\begin{lemma}
\label{lemma:g-ginv}
Let $\star \in \{\pone, \ptwo\}$ and let $o^\star \in \Obsnewone$ if $\star = \pone$ and
$o^\star \in \Obsnewtwo$ otherwise.  For all $oh' = (oh'_1, o) \in \OH^{\sg'}_\star$,
\[
\GammaOstar\!\bigl((\GammaOstar)^{-1}(oh'_1, o)\bigr) =
\begin{cases}
oh' & \text{if } o \neq o^\star, \\
oh'_1 & \text{otherwise.}
\end{cases}
\]
\end{lemma}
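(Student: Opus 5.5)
The plan is to argue directly from the shape of observation histories in $\sg'$. First I would fix the form of these histories. By the Observation on plays of $\sg'$, every history of $\sg'$ has states in the periodic pattern $s_{2k}\in\sone$, then $d_{s_{2k+1}}\in D$, then $s_{2k+1}\in\stwo$, and so on. By the definition of $\obfnone'$ and $\obfntwo'$, the observation at $d_{s}$ is the copy of the observation at $s$. Hence every $\star$-observation history of $\sg'$ has one of two forms:
\[
x_0, z_0, y_0, \dots, x_{n-1}, z_{n-1}, y_{n-1}, x_n
\quad\text{or}\quad
x_0, z_0, y_0, \dots, x_{n-1}, z_{n-1}, y_{n-1}, x_n, z_n ,
\]
where each $z_i$ is the copy of $y_i$ for $i<n$. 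In the second form, $z_n$ is the copy of the observation $y_n$ of the state $s_{2n+1}$ into which the dummy $d_{s_{2n+1}}$ moves deterministically. The first form ends at an original state and the second ends at a dummy. (For $\star=\ptwo$ the $\star$-histories of $\sg'$ end in $\stwo$-states and so have the analogous shape ending in some $y$. The argument below is identical with the roles shifted, and the second case may simply not arise.) I would also check that deleting all copy observations yields a legitimate $\sg$-observation history. This follows from the earlier lemma that deleting $D$-states and $\anew$ from a $\sg'$-play gives a $\sg$-play $h$ with $\Gamma(h)=h'$, together with the fact that the observations of non-dummy states are unchanged.

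Next I would treat the case $o\neq o^\star$, i.e. the last observation is not a copy. Then $oh'$ has the first form. $(\GammaOstar)^{-1}$ deletes $z_0,\dots,z_{n-1}$ and returns $x_0,y_0,\dots,y_{n-1},x_n$. Applying $\GammaOstar$ reinserts $z_i$ as the copy of $y_i$. This reproduces exactly the original $z_i$, because the original $z_i$ were already the copies of $y_i$ by the first step. Hence the composition gives $oh'$.

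Finally I would treat the case where $o$ is a copy, so $oh'=(oh'_1,z_n)$ has the second form. Deleting copies removes the trailing $z_n$ along with the interior ones, so $(\GammaOstar)^{-1}(oh')=(\GammaOstar)^{-1}(oh'_1)$. Since $oh'_1$ ends in $x_n$, which is not a copy, the previous case applied to $oh'_1$ gives $\GammaOstar((\GammaOstar)^{-1}(oh'_1))=oh'_1$.

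The main obstacle is the first step: justifying rigorously that inside any genuine $\sg'$-history each inserted copy observation is determined by the observation that follows it. This is what makes reinsertion by $\GammaOstar$ faithful. It requires combining the deterministic transition $\tran'(d_s,\anew)=\dirac{s}$ with the definition $\obfnone'(d_s)=\obsnewone_{\obfnone(s)}$ (and likewise for $\ptwo$), and I would prove it by induction on the history length. The remaining steps are bookkeeping given Lemma~\ref{lemma:ginv-g}.
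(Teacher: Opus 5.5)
Your proposal is correct. It is the direct unfolding of the definitions of $\GammaOstar$ and $(\GammaOstar)^{-1}$ that the paper leaves implicit (it states this lemma without proof), and you rightly read $o\neq o^\star$ as $o\notin\Obsnewone$ (resp.\ $o\notin\Obsnewtwo$), since for a single fixed copy $o^\star$ the first case would fail. The fact you isolate, that each inserted copy is the tag of the following observation because $\tran'(d_s,\anew)=\dirac{s}$ along effective histories, is exactly what makes reinsertion faithful, and for $\ptwo$ the second case indeed never arises because $\ptwo$-histories end in $\stwo$-states.
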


\smallskip\noindent\textbf{Mapping of strategies.}
As in the main text, each player selects a complete prescription based only on its observation history.  Thus, for $\star\in\{\pone,\ptwo\}$, a strategy in $\sg$ has type
\[
\sigma_\star:\OH_\star^\sg\to
\bigcup_{o\in\obsstar}
\prod_{s\in\sstar:\,\obfnstar(s)=o}\dist(\astar),
\]
and the corresponding strategies in $\sg'$ use the same direct product with $\sone',\stwo',\aone',\atwo'$.  The expressions below evaluate the selected assignment at a compatible state; that state is not an input to the strategy.

We define the following four maps lifting strategies between $\sg$ and $\sg'$:
\begin{enumerate}
    \item $\Phi_\pone : \stratpone_\sg \to \stratpone_{\sg'}$: for each
          $\sigma \in \stratpone_{\sg}$, $oh' \in \OH^{\sg'}_\pone$ and $\forall s \in \sone \cup D$ such that $\obfnone'(s) = \Last(oh') = \Last((\GammaOpone)^{-1}(oh'))$, then
          \[
          \Phi_\pone(\sigma)(oh')(s) = \sigma\left((\GammaOpone)^{-1}(oh')\right)(s)
          \]
          If $\obfnone'(s) = \Last(oh') \in \Obsnewone$, then $\Phi_\pone(\sigma)(oh')(s) = \dirac{\anew}$

    \item $\Phi_\ptwo : \stratptwo_{\sg} \to \stratptwo_{\sg'}$: for each
          $\pi \in \stratptwo_\sg$, $oh' \in \OH^{\sg'}_\ptwo$ and $\forall s \in \stwo$ such that $\obfntwo'(s) = \Last(oh')$, then
          \[
          \Phi_\ptwo(\pi)(oh')(s) = \pi\!\left((\GammaOptwo)^{-1}(oh')\right)(s)
          \]

    \item $\Psi_\pone : \stratpone_{\sg'} \to \stratpone_\sg$: for each
          $\sigma' \in \stratpone_{\sg'}$, $oh \in \OH^\sg_\pone$ and $\forall s \in \sone$ such that $\obfnone(s) = \Last(oh)$, then
          \[
          \Psi_\pone(\sigma')(oh)(s) = \sigma'\!\left(\GammaOpone(oh)\right)(s).
          \]

    \item $\Psi_\ptwo : \stratptwo_{\sg'} \to \stratptwo_\sg$: for each
          $\pi' \in \stratptwo_{\sg'}$, $oh \in \OH_\ptwo^\sg$ and $\forall s \in \stwo$ such that $\obfntwo(s) = \Last(oh)$, then
          \[
          \Psi_\ptwo(\pi')(oh)(s) = \pi'\!\left(\GammaOptwo(oh)\right)(s).
          \]
\end{enumerate}

\begin{lemma}
\label{lemma:phi-psi-inverse}
For $\star \in \{\pone, \ptwo\}$, $\Phi_\star$ and $\Psi_\star$ are inverses of each other:
$\Phi_\star = (\Psi_\star)^{-1}$ and $\Psi_\star = (\Phi_\star)^{-1}$.
\begin{proof}
Without loss of generality, let $\star = \ptwo$.  We first show
$\Psi_\ptwo(\Phi_\ptwo(\pi))(oh)(s) = \pi(oh)(s)$ for all $\pi \in \stratptwo_\sg$, $oh \in \OH_\ptwo^\sg$ and $s \in \stwo$ such that $\obfntwo(s) = \Last(oh)$.
\begin{align*}
\Psi_\ptwo(\Phi_\ptwo(\pi))(oh)(s)
  &= \Phi_\ptwo(\pi)\!\left(\GammaOptwo(oh)\right)(s) && \text{by definition of }  \Psi_\ptwo\\
  &= \pi\!\left((\GammaOptwo)^{-1}\!\left(\GammaOptwo(oh)\right)\right)(s) && \text{by definition of }  \Phi_\ptwo\\
  &= \pi(oh)(s)
     && \text{by Lemma \ref{lemma:ginv-g}.}
\end{align*}
Conversely, for all $\pi' \in \stratptwo_{\sg'}$, $oh' \in \OH_\ptwo^{\sg'}$ and $s \in \stwo$ such that $\obfntwo'(s) = \Last(oh')$:
\begin{align*}
\Phi_\ptwo(\Psi_\ptwo(\pi'))(oh')(s)
  &= \Psi_\ptwo(\pi')\!\left((\GammaOptwo)^{-1}(oh')\right)(s) && \text{by definition of }  \Phi_\ptwo
      \\
  &= \pi'\!\left(\GammaOptwo\!\left((\GammaOptwo)^{-1}(oh')\right)\right)(s) && \text{by definition of }  \Psi_\ptwo \\
  &= \pi'(oh')(s)
     && \text{by Lemma \ref{lemma:g-ginv} and  $\Last(oh') \notin \Obsnewtwo$.}
\end{align*}
\end{proof}
\end{lemma}
\subsection{Lifting objectives from $\sg$ to $\sg'$}
\label{app:lift2}
 Let $\prfn: \sone \cup \stwo \to \{0,1,\dots,d\}$ be
          the priority function for $\sg$.  Define $\prfn': \sone' \cup \stwo' \to \{0,1,\dots,d\}$
          by $\prfn'(s) = \prfn(s)$ for $s \in \sone \cup \stwo$ and
          $\prfn'(d_s) = \prfn(s)$ for $d_s \in D$.
    
\begin{lemma}
\label{lemma:gamma-parity}
Let $\sg$, $\sg'$, $\prfn$, and $\prfn'$ be as above, and let $\rho \in \Plays_\sg$.  Then
\[
\rho \in \parity{\sg}{\prfn} \iff \Gamma(\rho) \in \parity{\sg'}{\prfn'}.
\]
\begin{proof}

The play $\Gamma(\rho)$ visits every dummy state $d_s \in D$ exactly once between visiting
$s \in \sone$ and $s \in \stwo$; $\prfn'(d_s) = \prfn(s)$, so no new priority value is
introduced.  The sequence of priorities seen along $\Gamma(\rho)$ is therefore a repetition
of the sequence seen along $\rho$ (each value appears twice in a row at the dummy-state
interleaving), which does not change the $\liminf$.  Hence $\Gamma(\rho)$ satisfies the
parity condition with respect to $\prfn'$ if and only if $\rho$ satisfies it with respect to
$\prfn$.
\end{proof}
\end{lemma}

\subsection{Value Preservation}
\label{app:value2}
\begin{lemma}
\label{lemma:phi-equiv}
The strategy maps preserve objective probabilities in both directions:
\begin{align*}
\probm_{\sg}^{\sigma,\pi}(W)
&=\probm_{\sg'}^{\Phi_\pone(\sigma),\Phi_\ptwo(\pi)}(W'),
&&\text{for all $\sigma\in\Sigma_\sg$, $\pi\in\Pi_\sg$},\\
\probm_{\sg'}^{\sigma',\pi'}(W')
&=\probm_{\sg}^{\Psi_\pone(\sigma'),\Psi_\ptwo(\pi')}(W),
&&\text{for all $\sigma'\in\Sigma_{\sg'}$, $\pi'\in\Pi_{\sg'}$},
\end{align*}
where $W \subseteq \Plays_\sg$ is a parity objective of $\sg$ and $W' = \Gamma(W)$ is its corresponding objective in $\sg'$.

\begin{proof}
Let $\rho \in W$ be a play of $\sg$ and let $\rho' = \Gamma(\rho) \in W'$.  It suffices to
show that for every finite history $h$ in $W$ the corresponding history
$h'$ (constructed index-wise exactly as $\Gamma$ constructs $\rho'$ from
$\rho$, but truncated to end at the same state as $h$) satisfies
\[
\probm_{\sg}^{\sigma,\pi}(\cyl{h})
= \probm_{\sg'}^{\Phi_\pone(\sigma),\Phi_\ptwo(\pi)}(\cyl{h'}).
\]
Since the probability measure over plays is uniquely determined by its values on cylinder
sets, this suffices.  The proof proceeds by induction on $|h|$.  The base case $h = s_0$
holds trivially since $h' = s_0$ and both cylinders have probability $1$.

For the inductive step, write $h = h_1, s$ and distinguish two cases.

\smallskip
\noindent\textbf{Case 1: $s \in \sone$.}
Then $\Last(h_1) \in \stwo$.  By construction, $h' = h'_1,s$ where
$h'_1$ is the prefix corresponding to $h_1$ and $oh_1' = \obfntwo'(h_1')$.  Let
$oh_1 = \obfntwo(h_1)$ and $s_1 = \Last(h'_1)$.
\begin{align*}
\probm_{\sg'}^{\Phi_\pone(\sigma),\Phi_\ptwo(\pi)}(\cyl{h'})
&= \probm_{\sg'}^{\Phi_\pone(\sigma),\Phi_\ptwo(\pi)}(\cyl{h'_1})
  \times \sum_{a \in \atwo} \left( \Phi_\ptwo(\pi)(oh'_1)(s_1)(a)
  \times \tran'(s_1, a)(s) \right).
\end{align*}
By the induction hypothesis,
$\probm_{\sg'}^{\Phi_\pone(\sigma),\Phi_\ptwo(\pi)}(\cyl{h'_1}) = \probm_{\sg}^{\sigma,\pi}(\cyl{h_1})$. For $a \in \atwo:$ 
By definition of $\Phi_\ptwo$, $oh'_1 = (\GammaOptwo)^{-1}(oh_1)$ we have $ \Phi_\ptwo(\pi)(oh'_1)(s_1)(a) = \pi(oh_1)(s_1)(a)$ and $\Last(h_1) = \Last(h'_1) = s_1$ by construction and
$\tran' = \tran$ on $\stwo \times \atwo$ we have 
$ \tran'(s_1, a)(s) = \tran(s_1, a)(s)$. Hence
\begin{align*}
    \probm_{\sg'}^{\Phi_\pone(\sigma),\Phi_\ptwo(\pi)}(\cyl{h'}) &= \probm_{\sg}^{\sigma,\pi}(\cyl{h_1})
  \times \sum_{a \in \atwo} \left( \pi(oh_1)(s_1)(a)
  \times \tran(s_1, a)(s) \right) \\
  &= \probm_{\sg}^{\sigma,\pi}(\cyl{h})
\end{align*}

\smallskip
\noindent\textbf{Case 2: $s \in \stwo$.}
Then $\Last(h_1) \in \sone$.  By construction,
$h' = h'_1, d_s, s$, so
$\obfnone(h') = oh'_1, \obsnewone_o, o$ where $o = \obfnone(s)$ and $oh'_1 = \obfnone'(h'_1)$  .  Let
$oh_1 = \obfnone(h_1)$ and $s_1 = \Last(h'_1) = \Last(h_1)$.
\begin{align*}
\probm_{\sg'}^{\Phi_\pone(\sigma),\Phi_\ptwo(\pi)}(\cyl{h'})
&= \probm_{\sg'}^{\Phi_\pone(\sigma),\Phi_\ptwo(\pi)}(\cyl{h'_1})\\
&  \times \sum_{a \in \aone} \left(\Phi_\pone(\sigma)(oh'_1)(s_1)(a) \times \tran'(s_1, a)(d_s)\right)\\
 & \times \Phi_\pone(\sigma)(oh'_1, \obsnewone_o)(d_s)(\anew)
  \times \tran'(d_s, \anew)(s).
\end{align*}
By the induction hypothesis the $\probm_{\sg'}^{\Phi_\pone(\sigma),\Phi_\ptwo(\pi)}(\cyl{h'_1}) = \probm_{\sg}^{\sigma,\pi}(\cyl{h_1})$.
For $a \in \aone$: By definition of $\Phi_\pone$, $oh_1 = (\GammaOpone)^{-1}(oh'_1)$ $\Phi_\pone(\sigma)(oh'_1)(s_1)(a) = \sigma(oh_1)(s_1)(a)$ and By construction of $\sg'$,
$\tran'(s_1, a)(d_s) = \tran(s_1, a)(s)$.  Since $\anew$ is the sole effective action
at any dummy state, $\Phi_\pone(\sigma)(oh'_1, \obsnewone_o)(d_s)(\anew) = 1$.
Finally, $\tran'(d_s, \anew)(s) = 1$ by construction of $\sg'$.  Hence
\begin{align*}
    \probm_{\sg'}^{\Phi_\pone(\sigma),\Phi_\ptwo(\pi)}(\cyl{h'})
&= \probm_{\sg}^{\sigma,\pi}(\cyl{h_1})
  \times \sum_{a \in \aone} \left(\sigma(oh_1)(s_1)(a) \times \tran(s_1, a)(s)\right)  \times 1  \times 1 \\
&= \probm_{\sg}^{\sigma,\pi}(\cyl{h})
\end{align*}

The symmetric direction (replacing $\Phi_\star$ by $\Psi_\star = (\Phi_\star)^{-1}$) follows
by an identical argument.
\end{proof}
\end{lemma}

We now state the main correctness theorem for Pre-$\ptwo$ transformation.

\begin{lemma}
\label{thm:prenorm}
Let $\sg$ be any alternating control POSG and $\sg'$ its Pre-$\ptwo$ transformation.  Let $W \subseteq \Plays_\sg$ be parity objective and $W' = \Gamma(W)$ its
corresponding objective in $\sg'$. Then
\[
\val_\sg(W) = \val_{\sg'}(W').
\]
\begin{proof}
For fixed $\sigma\in\Sigma_\sg$, set
\[
q_\sg(\sigma)=\inf_{\pi\in\Pi_\sg}\probm_\sg^{\sigma,\pi}(W),
\qquad
q_{\sg'}(\sigma')=\inf_{\pi'\in\Pi_{\sg'}}\probm_{\sg'}^{\sigma',\pi'}(W').
\]
The first equality in Lemma~\ref{lemma:phi-equiv} gives
$q_{\sg'}(\Phi_\pone(\sigma))\leq q_\sg(\sigma)$ because every $\pi\in\Pi_\sg$ has an image $\Phi_\ptwo(\pi)\in\Pi_{\sg'}$. Conversely, for every $\pi'\in\Pi_{\sg'}$, the second equality there and $\Psi_\pone(\Phi_\pone(\sigma))=\sigma$ give
\[
\probm_{\sg'}^{\Phi_\pone(\sigma),\pi'}(W')
=\probm_\sg^{\sigma,\Psi_\ptwo(\pi')}(W),
\]
so $q_\sg(\sigma)\leq q_{\sg'}(\Phi_\pone(\sigma))$. Hence the two fixed-strategy infima are equal. Since $\Phi_\pone$ is a bijection by Lemma~\ref{lemma:phi-psi-inverse}, taking the outer supremum yields
\begin{align*}
\val_\sg(W)
    &=\sup_{\sigma\in\Sigma_\sg}q_\sg(\sigma)
     =\sup_{\sigma\in\Sigma_\sg}q_{\sg'}(\Phi_\pone(\sigma))\\
    &=\sup_{\sigma'\in\Sigma_{\sg'}}q_{\sg'}(\sigma')
     =\val_{\sg'}(W').
\end{align*}

\end{proof}
\end{lemma}

\begin{corollary}
\label{cor:prenorm}
Let $\sg$ be any alternating control POSG and $\sg'$ its Pre-$\ptwo$ transformation.  Let $\prfn$- a priority function for $\sg$.
Then
\[
\val_\sg\!\left(\parity{\sg}{\prfn}\right) = \val_{\sg'}\!\left(\parity{\sg'}{\prfn'}\right).
\]
\begin{proof}
Follows directly from lemma \ref{thm:prenorm} together with Lemma \ref{lemma:gamma-parity}.
\end{proof}

\end{corollary}

\begin{lemma}
\label{cor:ggsurewinning}
    For any $\sigma \in \stratpone_{\sg}$ and $\pi \in \stratptwo_{\sg}$:
\[
    \Gamma\bigl(\Plays^{\sigma,\pi}_{\sg}\bigr)
    \;=\;
    \Plays^{\Phi_\pone(\sigma),\,\Phi_\ptwo(\pi)}_{\sg'}.
\]
 where $\Plays^{\sigma,\pi}_\sg(resp.\Plays^{\sigma',\pi'}_{\sg'})$ denote the set of plays in $\sg(resp.\sg')$ under the pair of strategies $\sigma,\pi(resp. \sigma',\pi')$.
 \begin{proof} Let $\rho' \in \Gamma(\Plays^{\sigma,\pi}_{\sg})$.  Then there exists $\rho \in \Plays^{\sigma,\pi}_{\sg}$ with $\Gamma(\rho) = \rho'$. Write $\rho = s_0, a_0, s_1, a_1, \ldots$ and $\rho' = s_0, a_0, d_{s_1}, \anew, s_1, a_1, \ldots$ as constructed by $\Gamma$. We verify $\rho'$ is consistent with $(\Phi_\pone(\sigma), \Phi_\ptwo(\pi))$.
\begin{itemize}
  \item {At each \textbf{$\pone$-state} $s_{k} \in \sone \text{ (k is even)}$:} The $\pone$-player observation history in $\sg'$ at state $s_k$ is $oh'_k = \GammaOpone(oh_k)$, where $oh$ is the $\pone$-player
        observation history of $\rho$ at $s_k$. By definition of $\Phi_\pone$:
        \begin{align*}
            \Phi_\pone(\sigma)(oh'_k)(s_{k})& = \sigma\!\left((\GammaOpone)^{-1}(oh'_k)\right)(s_{k})\\
          &= \sigma(oh_k)(s_{k}) & \text{from Lemma $\ref{lemma:ginv-g}$}
        \end{align*}
        Because $\rho \in \Plays^{\sigma,\pi}_{\sg}$, action $a_{k}$ is in the support of $\sigma(oh_k)(s_k)$, and hence also in the support of $\Phi_\pone(\sigma)(oh'_k)(s_k)$.  The $\pone$-transition in $\sg'$ sends $(s_{k}, a_{k})$ to $d_{s_{k+1}}$ with probability $\tran'(s_{k}, a_{k})(d_{s_{k+1}}) = \tran(s_{k}, a_{k})(s_{k+1})$, matching the transition used in $\rho$.

    \item At each\textbf{ dummy state $d_{s_{2k+1}} \in D$:} By definition of $\Phi_\pone$, the strategy assigns $\dirac{\anew}$ to the sole effective action; all other globally enabled actions are max-losing defaults. Moreover, $\tran'(d_{s_{2k+1}}, \anew)(s_{2k+1})=1$.
  \item At each\textbf{ $\ptwo$-state $s_{k} \in \stwo$} (k is odd): The $\ptwo$-player observation history in $\sg'$ at $s_{k}$ is $oh' = \GammaOptwo(oh)$, where $oh$ is the $\ptwo$-player observation history of $\rho$ at state $s_{k}$. By definition of $\Phi_\ptwo$:
  \begin{align*}
      \Phi_\ptwo(\pi)(oh')(s_k) &= \pi\!\left((\GammaOptwo)^{-1}(oh')\right)(s_k)\\
          &= \pi(oh)(s_k) & \text{from lemma } \ref{lemma:ginv-g}
  \end{align*}
  Because $\rho \in \Plays^{\sigma,\pi}_\sg$, action $a_{k}$ is in the support of $\pi(oh)(s_{k})$, hence action $a_{k}$ is also in the support of $\Phi_\ptwo(\pi)(oh')(s_{k})$.
\end{itemize}
Thus $\rho' \in \Plays^{\Phi_\pone(\sigma),\Phi_\ptwo(\pi)}_{\sg'}$, hence $\Gamma\bigl(\Plays^{\sigma,\pi}_{\sg}\bigr) \subseteq \Plays^{\Phi_\pone(\sigma),\,\Phi_\ptwo(\pi)}_{\sg'}$.\\
Since $\Gamma: \Plays_\sg \to \Plays_{\sg'}$ is a bijection (Lemma $\ref{lemma:gammabij}$), every $\rho' \in \Plays^{\Phi_\pone(\sigma),\Phi_\ptwo(\pi)}_{\sg'}$ has a unique pre-image $\rho = \Gamma^{-1}(\rho')$ obtained by removing all dummy states and $\anew$-actions. By the same step-by-step argument with $(\GammaOpone)^{-1}$, $(\GammaOptwo)^{-1}$ in place of $\GammaOpone$, $\GammaOptwo$, and using $\Psi_\star = (\Phi_\star)^{-1}$ in place of $\Phi_\star$, the play $\rho$ is consistent with $(\sigma, \pi)$, hence $\Plays^{\Phi_\pone(\sigma),\,\Phi_\ptwo(\pi)}_{\sg'} \subseteq \Gamma\bigl(\Plays^{\sigma,\pi}_{\sg}\bigr)$ .
 \end{proof}
\end{lemma}

\begin{lemma}[Sure Winning]
\label{lem:gg_sw}
Let $W \subseteq \Plays_\sg$ be a parity objective of $\sg$ and $W' = \Gamma(W)$ be its lifted objective in $\sg'$. Let $\sigma \in \Sigma_\sg$ and $\sigma' = \Phi_\pone(\sigma)$.  Then, \[ \text{$\sigma$ is sure winning for }W  \iff \quad \text{$\sigma'$ is sure winning for } W'\]
    \begin{proof}
    We will show that
    \[
     \forall \pi \in \Pi_\sg :\:\Plays^{\sigma,\pi}_\sg \subseteq W \iff  \forall \pi' \in \Pi_{\sg'}  :\: 
    \Plays^{\sigma',\pi'}_{\sg'} \subseteq W'
    \]
    For forward direction,      let $\sigma \in \Sigma_\sg$ be sure winning for $W$. Then, $\forall \pi \in \Pi_\sg :\:\Plays^{\sigma,\pi}_\sg \subseteq W$.  From lemma \ref{lemma:gamma-parity}, for any $\rho \in \Plays^{\sigma,\pi}_\sg, \rho \in W \iff \Gamma(\rho) \subseteq W'$. Since $\rho$ is arbitrary, we have $\Gamma(\Plays^{\sigma,\pi}_\sg) \subseteq W'$.\\
    For any $\pi' \in \Pi_{\sg'}$: Due to bijection of $\Phi_\ptwo,\Psi_\ptwo, \exists \pi$ such that $\pi' = \Phi_\ptwo(\pi)$. Then, from lemma \ref{cor:ggsurewinning} and \ref{lemma:phi-psi-inverse}
    \[\Plays^{\sigma',\pi'}_{\sg'} \:=\: \Gamma(\Plays^{\Psi_\pone(\sigma'),\Psi_\ptwo(\pi')}_\sg) = \Gamma(\Plays^{\Psi_\pone(\Phi_\pone(\sigma)),\Psi_\ptwo(\Phi_\ptwo(\pi))}_\sg) = \Gamma(\Plays^{\sigma,\pi}_\sg)\:\subseteq\: W'\]
    
    The reverse direction can be proven similarly using inverse mappings $\Psi_\pone,\Psi_\ptwo$.
       
    \end{proof}
\end{lemma}

\begin{lemma}[Almost-sure Winning]
\label{lem:gg_as}
    Let $W \subseteq \Plays_\sg$ be a parity objective of $\sg$ and $W' = \Gamma(W)$ be lifted objective in $\sg'$. Let $\sigma \in \Sigma_\sg$ and $\sigma' = \Phi_\pone(\sigma)$. Then, \[ \text{$\sigma$ is almost-sure winning for }W  \iff \quad \text{$\sigma'$ is almost-sure winning for } W'\]
    \begin{proof}
        The fixed-strategy equality established in the proof of Lemma~\ref{thm:prenorm} is
    \[
        \inf_{\pi\in\Pi_\sg}\probm_\sg^{\sigma,\pi}(W)
            =\inf_{\pi'\in\Pi_{\sg'}}\probm_{\sg'}^{\sigma',\pi'}(W').
        \]
        Hence one side equals $1$ if and only if the other does.
    \end{proof}
\end{lemma}

\begin{lemma} [Limit-Sure Winning]
\label{lem:gg_ls}
    Let $W \subseteq \Plays_\sg$ be a parity objective of $\sg$ and $W' = \Gamma(W)$ is the corresponding objective in $\sg'$. Then,
    \[W \text{ is limit-sure} \iff W' \text{ is limit-sure} \]
    \begin{proof}
    Follows directly from lemma \ref{thm:prenorm}.
    \end{proof}
\end{lemma}

\begin{lemma}[Quantitative Analysis]
\label{lem: gg_qa}
    Let $W \subseteq \Plays_\sg$ be a parity objective of $\sg$ and $W' = \Gamma(W)$ is the corresponding objective in $\sg'$. Then, for $k \in (0,1)$
    \[\Val_\sg(W)\geq k \iff \Val_{\sg'}(W') \geq k \]
    \begin{proof}
    Follows directly from lemma \ref{thm:prenorm}.
    \end{proof}
\end{lemma}

\subsubsection{ Size and Memory Preservation of the reduction}
\label{app:premin-size-memory}

\begin{lemma}
\label{lem:premin-size-memory}
Let $\sg$ be an alternating-control POSG and $\sg'$ its
Pre-$\ptwo$ transformation. The maps
$\Gamma : \Plays_\sg \to \Plays_{\sg'}$,
$\Gamma_O^\pone, \Gamma_O^\ptwo$ on observation histories, and
$\Phi_\star, \Psi_\star = (\Phi_\star)^{-1}$
($\star \in \{\pone, \ptwo\}$) on strategies satisfy the following. Exact counts use the displayed transitions; the implicit sink completion has constant descriptive overhead. Let $S'=\sone'\cup\stwo'$ and $A'=\aone'\cup\atwo'$.
\begin{enumerate}\itemsep2pt
  \item \textup{(size)}
        $|S'| = |S_\sg| + |\stwo|$,
        $|A'| = |A_\sg| + 1$,
        $|\obsone'| = 2|\obsone|$,
        $|\obstwo'| = 2|\obstwo|$, and
        $|\delta'| = |\delta| + |\stwo|$.
        In particular, $|\sg'| = \mathcal{O}(|\sg|)$ and the
        construction runs in linear time.
  \item \textup{(history length)} For every
        $oh \in \OH^\sg_\star$,
        $|\Gamma_O^\star(oh)| = \left\lceil \tfrac{3|oh|-1}{2} \right\rceil$  and for every
        $oh' \in \OH^{\sg'}_\star$,
        $|(\Gamma_O^\star)^{-1}(oh')| \le |oh'|$. Observation
        histories on the two sides are linear in each other.
  \item \textup{(memory)} The lifting maps
        $\Phi_\star, \Psi_\star$ preserve memory class exactly: 
        memoryless, finite-memory
        and infinite-memory regimes correspond exactly
        between $\sg$ and $\sg'$.
\end{enumerate}

\begin{proof}
\textbf{Clause (1).} Direct from the construction.
The state space gains one fresh dummy state per $\stwo$-state, so
$|S'| = |\sone| + 2|\stwo| = |S_\sg| + |\stwo|$. The action set
gains the single fresh action $\anew$, so $|A'|=|A_\sg|+1$.
The observation alphabets are
$\obsone' = \obsone \cup \{o^\dagger_x \mid x \in \obsone\}$
and
$\obstwo' = \obstwo \cup \{o^\#_x \mid x \in \obstwo\}$,
which doubles each cardinality. The transition function $\delta'$
agrees with $\delta$ on every $(s,a) \in \stwo \times \atwo$, and on
each $(s, a) \in \sone \times \aone$ replaces the target $s_2 \in
\stwo$ by $d_{s_2} \in D$ -- giving the same number of edges as
$\delta$ -- and additionally contributes the deterministic edge
$\delta'(d_s, \anew) = \dirac{s}$ for every $d_s \in D$, of which
there are $|\stwo|$. Hence $|\delta'| = |\delta| + |\stwo|$.
Each component is therefore a constant or unit-additive perturbation
of the corresponding component in $\sg$, so
$|\sg'| = \mathcal{O}(|\sg|)$ and $\sg'$ is computed from $\sg$ in
linear time.

\medskip
\textbf{Clause (2).} Let $\star = \pone$ and let
$oh = o_0, o_1, \dots, o_{2n} \in \OH^\sg_\star$, so that
$|oh| = 2n+1$. By construction,
\[
  \Gamma_O^\star(oh)
  = o_0, z_1, o_1,o_2, z_3, o_3, \dots, o_{2n-2}, z_{2n-1}, o_{2n-1}, o_{2n},
\]
where $z_{2k+1}$ is the appropriately tagged copy of $o_{2k+1}$
($z_{2k+1} = o^\dagger_{o_{2k+1}}$)
one tagged
observation is inserted between every pair of consecutive original
observations. Hence
\[
  |\Gamma_O^\star(oh)| = (2n+1) + n = 3n+1 = \tfrac{3|oh|-1}{2}
\]

Let $\star = \ptwo$ and let
$oh = o_0, o_1, \dots, o_{2n+1} \in \OH^\sg_\star$, so that
$|oh| = 2n+2$. By construction,
\[
  \Gamma_O^\star(oh)
  = o_0, z_1, o_1,o_2, z_3, o_3, \dots, o_{2n-2}, z_{2n-1}, o_{2n-1}, o_{2n},,z_{2n+1},o_{2n+1}
\]
where $z_{2k+1}$ is the appropriately tagged copy of $o_{2k+1}$
($z_{2k+1} = \obsnewtwo_{o_{2k+1}}$)
one tagged
observation is inserted between every pair of consecutive original
observations. Hence
\[
  |\Gamma_O^\star(oh)| = (2n+2) + n+1 = 3n+3 = \tfrac{3|oh|}{2}.
\]

From above values, for $\star \in \{\pone,\ptwo\}$, $|\Gamma^\star_O(oh)| = \left\lceil \tfrac{3|oh|-1}{2} \right\rceil$

For the reverse direction, $(\Gamma_O^\star)^{-1}$ acts on
$oh' \in \OH^{\sg'}_\star$ by deleting observations in $\Obsnewone$
for $\star=\pone$ and in $\Obsnewtwo$ for $\star=\ptwo$. Deletion only shortens the
sequence, so $|(\Gamma_O^\star)^{-1}(oh')| \le |oh'|$. Combining the
two bounds, observation histories on the two sides are within a
factor of $\tfrac{3}{2}$ in length.

\medskip
\textbf{Clause (3).} 
Fix $\star\in\{\pone,\ptwo\}$ and let a finite-memory strategy be realized by a transducer $M=(Q,q_0,u,g)$, where $u$ updates the memory state from observations and $g$ outputs the complete state-indexed action assignment prescribed by the strategy.

For the forward map $\Phi_\star$, run $M$ on $(\Gamma_O^\star)^{-1}(oh')$. This can be done online with the same memory set $Q$: an inserted observation in $\Obsnewone$ or $\Obsnewtwo$ is skipped, while every original observation is processed exactly as in $M$. At an original player state, the output is the assignment produced by $g$; at a dummy max-state, the output assigns $\dirac{\anew}$ to every compatible dummy state. Thus the resulting transducer realizes $\Phi_\star$ and uses exactly the memory states in $Q$.

Conversely, let $M'$ realize a strategy in $\sg'$. To realize $\Psi_\star$ in $\sg$, run $M'$ on the virtual history $\Gamma_O^\star(oh)$. Each inserted observation is determined by the adjacent original observation, so the corresponding finite block can be fed to $M'$ as soon as that original observation is read; no additional persistent memory is required. The output at an original state is then precisely $M'$'s output after reading $\Gamma_O^\star(oh)$. Lemmas~\ref{lemma:ginv-g} and~\ref{lemma:g-ginv} show that these simulations realize the stated strategy maps.

Therefore both maps preserve the number of memory states. In particular, one-state memoryless strategies, finite-memory strategies, and unrestricted infinite-memory strategies correspond in both directions.

\end{proof}

\end{lemma}

\subsection{Proof of Lemma \ref{thm:prenorm-main}}
\label{app:thm2}
\begin{proof}
    Because parity objective subsumes all $\omega$-regular objectives, from lemma \ref{thm:prenorm}, $\Val_\sg(W) = \Val_{\sg'}(W')$ for any $\omega$-regular objective $W$ in $\sg$ and its corresponding objective $W'$ in $\sg'$.\\
    The ``consequently'' statement follows from lemmas \ref{lem:gg_sw}, \ref{lem:gg_as}, \ref{lem:gg_ls}, and \ref{lem: gg_qa}. 
\end{proof}

\clearpage 

\section{Constructing an RPOMDP from a Pre-{$\ptwo$} Transformed POSG}
\label{app:sg2rmdp}

Let $\sg'$ be the Pre-$\ptwo$ transformation of an alternating control POSG $\sg$ as
constructed in definition \ref{def:maxtd-transform}.  In $\sg'$, every play has the periodic structure
\[
s_1 \;\xrightarrow{a_1}\; d_{s_2} \;\xrightarrow{\anew}\; s_2 \;\xrightarrow{a_2}\; s_3
\;\xrightarrow{a_3}\; d_{s_4} \;\xrightarrow{\anew}\; s_4 \;\cdots
\]
where $\pone$-states and dummy states in $\sone \cup D$ alternate with $\stwo$-states.
The key observation is that the $\ptwo$-player's choice of action $a \in \atwo$ at a
$\stwo$-state $s$ determines the next transition distribution $\tran'(s,a) \in \dist(\sone)$.
This is exactly the role of the \emph{environment} in an RPOMDP: given a state and an action,
it selects a distribution from the uncertainty set.  We exploit this to define a polytopic
RPOMDP $\rsg$ whose states are the $\pone$-states and dummy states of $\sg'$, and whose
uncertainty set at each dummy state under action $\anew$ is the \emph{polytope} whose
vertices are the transition distributions corresponding to the $\ptwo$-player's available
actions.

\begin{definition}[RPOMDP reduction of a Pre-$\ptwo$ transformed POSG]
\label{def:sg2rmdp}
Let $\sg' = (\sone \cup D, \stwo, \aone \cup \{\anew\}, \atwo, \tran', s_0,
\obsone \cup \Obsnewone, \obstwo \cup \Obsnewtwo, \obfnone', \obfntwo')$ be a Pre-$\ptwo$
transformed POSG.  We define the associated RPOMDP as
\[
\rsg = (S_\rp,\; A_\rp,\; \uncert_\rp,\; \obsa^\rp,\; \obse^\rp,\;
         \obfna^\rp,\; \obfne^\rp,\; s_0^\rp),
\]
where:
\begin{itemize}
    \item \emph{(State space)} $S_\rp = \sone \cup D$.
    \item \emph{(Actions)} $A_\rp = \aone \cup \{\anew\}$, with every action enabled at every state.  The effective pairs are $(s,a)\in\sone\times\aone$ and $(d_s,\anew)$; all other pairs use the default max-losing singleton transition.
        \item \emph{(Uncertainty set)} $\uncert_\rp = \prod_{(s,a)\in S_\rp\times A_\rp}\uncertsa(s,a)$, where the non-default polytopes are:
          \begin{itemize}
              \item For $(s,a)\in\sone\times\aone$ or $s=d_{s'}\in D$ and $a=\anew$, define
                    \[
                    V_{s, a} =
                    \begin{cases}
                        \{\tran'(s, a)\} & \text{if } s \in \sone,\ a\in\aone,\\
                        \bigl\{\,\tran'(s', a') \;\big|\; a' \in \atwo\,\bigr\} & \text{if } s = d_{s'},\ a=\anew,\ s'\in \stwo.
                    \end{cases}
                    \]
                    For these pairs,
                    \[
                    \uncertsa(s,a)=\conv(V_{s,a}),
                    \]
                    where $\conv(V_{s,a})$ denotes the convex hull of $V_{s,a}$.
                    Every remaining pair uses the implicit max-losing singleton.
          \end{itemize}
    \item \emph{(Agent observations)} $\obsa^\rp = \obsone \cup \Obsnewone$.
    \item \emph{(Environment observations)} $\obse^\rp = \obstwo \cup \Obsnewtwo$.
    \item \emph{(Agent observation function)} $\obfna^\rp: S_\rp \to \obsa^\rp$ with
          $\obfna^\rp(s') = \obfnone'(s')$ for $s' \in S_\rp$.
    \item \emph{(Environment observation function)} $\obfne^\rp: S_\rp \to \obse^\rp$ with
          $\obfne^\rp(s') = \obfntwo'(s')$ for $s' \in S_\rp$.
    \item \emph{(Initial state)} $s_0^\rp = s_0$.
\end{itemize}
\end{definition}

\begin{remark}
The vertex set $V_{d_s,\anew} = \{\tran'(s,a) \mid a \in \atwo\}$ of the polytope
$\uncertsa(d_s, \anew)$ encodes all $\ptwo$-actions at $s \in \stwo$:
each vertex corresponds to the distribution induced by a $\ptwo$-action.
The environment in $\rsg$ selects any convex combination of these vertices, which corresponds
to the $\ptwo$-player in $\sg'$ playing a randomized distribution over actions at state $s$.
At non-dummy states $s' \in \sone$, the uncertainty set is a singleton, so the environment has no real choice
there.  This makes $\rsg$ a polytopic $(s,a)$-rectangular RPOMDP. 
\end{remark}
\subsection{Mapping plays, observation histories, strategies from $\sg'$ to $\rsg$}
\label{app:map3}
\smallskip\noindent\textbf{Mapping on plays.}
Because the uncertainty set at dummy states is polytopic (not a singleton), different plays
in $\sg'$ that agree on all $\sone \cup D$ states and $\aone \cup \{\anew\}$ actions but
differ in the $\ptwo$-action chosen at $\stwo$-states will map to the \emph{same} play in
$\rsg$.  We therefore define $\Lambda: \Plays_{\sg'} \to \Plays_{\rsg}$ and its
set-valued inverse $\Lambda^{-1}: \Plays_{\rsg} \to 2^{\Plays_{\sg'}}$.

Let $\rho' \in \Plays_{\sg'}$ be of the form
$\rho' = s_1, a_1, s_2, a_2, s_3, a_3, \ldots$
where by construction of $\sg'$, for all $k \ge 0$:
\[
s_{3k+1} \in \sone,\quad s_{3k+2} \in D,\quad s_{3k+3} \in \stwo,
\quad a_{3k+1} \in \aone,\quad a_{3k+2} = \anew,\quad a_{3k+3} \in \atwo.
\]
We define $\Lambda(\rho') = \rho^\rp$ where $\rho^\rp = t_1, b_1, t_2, b_2, \ldots \in \Plays_{\rsg}$ is given index-wise, for all $k \ge 0$, by:
\begin{align*}
t_{2k+1} &= s_{3k+1} \in \sone, \\
b_{2k+1} &= a_{3k+1} \in \aone, \\
t_{2k+2} &= s_{3k+2} \in D, \\
b_{2k+2} &= \anew.
\end{align*}
That is, we remove all occurrences of $\stwo$-states and $\atwo$-actions from $\rho'$.

\noindent\textbf{Set-valued inverse.}
Define $\Lambda^{-1}: \Plays_{\rsg} \to 2^{\Plays_{\sg'}}$ as follows.
Let $\rho^\rp = t_1, b_1, t_2, b_2, \ldots \in \Plays_{\rsg}$ where for all $k \ge 0$:
$t_{2k+1} \in \sone$, $t_{2k+2} \in D$, $b_{2k+1} \in \aone$, $b_{2k+2} = \anew$.
Then $\Lambda^{-1}(\rho^\rp)$ is the set of all plays
$\rho' = s_1, a_1, s_2, a_2, s_3, a_3, \ldots \in \Plays_{\sg'}$
such that for all $k \ge 0$:
\begin{align*}
s_{3k+1} &= t_{2k+1}, \\
a_{3k+1} &= b_{2k+1}, \\
s_{3k+2} &= t_{2k+2}, \\
a_{3k+2} &= \anew, \\
s_{3k+3} &= s \text{ such that } t_{2k+2} = d_s \in D, \\
a_{3k+3} &\in \atwo \text{ (arbitrary action subject to } \tran'(s_{3k+3}, a_{3k+3})(s_{3k+4}) > 0\text{)}.
\end{align*}
Thus $\Lambda^{-1}(\rho^\rp)$ consists of all plays in $\sg'$ obtained by resolving each
transition at a dummy state using an arbitrary $\ptwo$-action.  In particular,
$\Lambda(\rho') = \rho^\rp$ for every $\rho' \in \Lambda^{-1}(\rho^\rp)$, i.e.\
$\Lambda$ is many-to-one.

\smallskip\noindent\textbf{Mapping on observation histories.}
We extend $\Lambda$ to observation histories exactly as before. Let
$oh = o_1, o_2, o_3, \ldots, o_n \in \OH_\pone^{\sg'}$.

We define $\LambdaOpone: \OH_\pone^{\sg'} \to \OH_\pa^{\rsg}$ by
$\LambdaOpone(oh) = oh^\rp$ where $oh^\rp = \tilde{o}_1, \tilde{o}_2, \tilde{o}_3, \ldots$
is given index-wise, for all $k \ge 0$ such that indices exist, by:
\begin{align*}
\tilde{o}_{2k+1} &= o_{3k+1}, \\
\tilde{o}_{2k+2} &= o_{3k+2}.
\end{align*}
Observations $o_{3k+3}$ of $\stwo$-states are removed.
The map $\LambdaOpone$ is a bijection with inverse
$(\LambdaOpone)^{-1}: \OH_\pa^{\rsg} \to \OH_\pone^{\sg'}$ defined, for
$oh^\rp = \tilde{o}_1, \tilde{o}_2, \ldots, \tilde{o}_m \in \OH_\pa^{\rsg}$
($\tilde{o}_{2k+1} \in \obsone,\tilde{o}_{2k+2} \in \obsnewone$ for $k\ge 0$), by
$(\LambdaOpone)^{-1}(oh^\rp) = oh = o_1, o_2, \ldots, o_k$
where for all $k \ge 0$:
\begin{align*}
o_{3k+1} &= \tilde{o}_{2k+1}, \\
o_{3k+2} &= \tilde{o}_{2k+2}, \\
o_{3k+3} &= x \text{ such that } \tilde{o}_{2k+2} = \obsnewone_x, \\
o_{k} &= \tilde{o}_m
\end{align*}

We extend this to the $\ptwo$-player. We define
$\LambdaOptwo: \OH_\ptwo^{\sg'} \to \widehat{\OH}_\pe^{\rsg}$, where
$\widehat{\OH}_\pe^{\rsg} \subseteq \OH_\pe^{\rsg}$ is the set of environment observation
histories of $\rsg$ ending in an observation of a state in $D$, i.e, $\widehat{\OH}_\pe^{\rsg} = \{oh^\rp \in \OH_\pe^{\rsg} \mid \Last(oh^\rp) \in \Obsnewtwo\}$. For
$oh = o_1, o_2, \ldots, o_n \in \OH_\ptwo^{\sg'}$,
$\LambdaOptwo(oh) = \tilde{o}_1, \tilde{o}_2, \ldots$ is defined index-wise for all
$k \ge 0$ by:
\begin{align*}
\tilde{o}_{2k+1} &= o_{3k+1}, \\
\tilde{o}_{2k+2} &= o_{3k+2}.
\end{align*}
Since $\Last(oh) \in \obstwo$, the mapped history ends in $\Obsnewtwo$ and hence lies in
$\widehat{\OH}_\pe^{\rsg}$. Its inverse
$(\LambdaOptwo)^{-1}: \widehat{\OH}_\pe^{\rsg} \to \OH_\ptwo^{\sg'}$ is defined for
$oh^\rp = \tilde{o}_1, \tilde{o}_2, \ldots, \tilde{o}_m \in \widehat{\OH}_\pe^{\rsg}$
by $(\LambdaOptwo)^{-1}(oh^\rp) = o_1, o_2, \ldots, o_{m + m/2}$ where
for all $k \ge 0$:
\begin{align*}
o_{3k+1} &= \tilde{o}_{2k+1}, \\
o_{3k+2} &= \tilde{o}_{2k+2}, \\
o_{3k+3} &= x \text{ such that } \tilde{o}_{2k+2} = \obsnewtwo_x,\\
o_{m + m/2} &= x \text{ such that } \tilde{o}_m = \obsnewtwo_x.
\end{align*}

\begin{lemma}
\label{lemma:lam-laminv-max}
For all $oh^\rp \in \OH_\pa^{\rsg}$,
$\LambdaOpone\!\left((\LambdaOpone)^{-1}(oh^\rp)\right) = oh^\rp$; and for all
$oh' \in \OH_\pone^{\sg'}$,
$(\LambdaOpone)^{-1}\!\left(\LambdaOpone(oh')\right) = oh'$.
\end{lemma}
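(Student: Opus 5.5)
The plan is to verify both identities by direct index-wise computation, starting from the explicit formulas for $\LambdaOpone$ and $(\LambdaOpone)^{-1}$. The first step is to fix the shape of max-player observation histories in $\sg'$. By the periodic play structure (the observation recorded after the pre-min transformation), positions $3k+1$ carry observations of $\sone$-states (in $\obsone$), positions $3k+2$ carry observations of dummy states (in $\Obsnewone$), and positions $3k+3$ carry observations of $\stwo$-states (in $\obsone$). Because max-histories end at a max-state, they end at position $3k+1$ or $3k+2$, never at $3k+3$.

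The key structural fact is that the observation at position $3k+3$ is determined by the one at $3k+2$. Position $3k+2$ is occupied by $d_s$ with $\obfnone'(d_s) = \obsnewone_{\obfnone(s)}$, and since $\tran'(d_s,\anew) = \dirac{s}$, position $3k+3$ is $s$ with observation $\obfnone(s)$. So $o_{3k+3} = x$ exactly when $o_{3k+2} = \obsnewone_x$. For this I would use the play-structure observation from Appendix~\ref{app:map2}, together with the fact that $\sg'$ only has effective transitions from dummy states via $\anew$.

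With this in hand, the two identities follow.

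For $(\LambdaOpone)^{-1}(\LambdaOpone(oh')) = oh'$: applying $\LambdaOpone$ keeps positions $3k+1$ and $3k+2$ and deletes positions $3k+3$. Applying the inverse then restores positions $3k+1$ and $3k+2$ verbatim. It reconstructs each position $3k+3$ as the $x$ with $\tilde o_{2k+2} = \obsnewone_x$, which by the key fact equals the deleted $o_{3k+3}$. Since $oh'$ ends at a max-state, no trailing position $3k+3$ needs restoring, so the lengths match.

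For $\LambdaOpone((\LambdaOpone)^{-1}(oh^\rp)) = oh^\rp$: the inverse interleaves the reconstructed $\stwo$-observations at positions $3k+3$, and $\LambdaOpone$ deletes exactly those positions. The original $\tilde o$'s therefore come back in their original order.

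The main obstacle is bookkeeping at the final index. The stated inverse formula has a stray clause "$o_k = \tilde o_m$", and the case where $m$ is even ends in $\Obsnewone$. I would first normalize the definition so that the inverse has length $m + \lfloor (m-1)/2\rfloor$, with only interior $\stwo$-slots filled. I would then check both parities of $m$ separately: $m$ odd corresponds to ending at an $\sone$-state, and $m$ even to ending at a dummy state. Finally, I would note that $\LambdaOpone$ lands in agent observation histories of $\rsg$. This holds because $\rsg$-plays are precisely the $\sone\cup D$ subsequences of $\sg'$-plays, with matching observation functions $\obfna^\rp = \obfnone'$ on $S_\rp$.
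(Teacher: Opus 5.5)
Your proposal is correct and takes the same approach as the paper. The paper states this lemma without a separate proof, treating it as immediate from the index-wise definitions of $\LambdaOpone$ and its inverse; your argument carries out exactly that unfolding, using the tag $\obsnewone_x$ on the dummy observation to recover the deleted $\stwo$-observation. Your normalization of the inverse to length $m+\lfloor (m-1)/2\rfloor$, with no trailing $\stwo$-slot when $m$ is even, also matches the paper's own length computation in the proof of Lemma~\ref{lem:rpomdp-size-memory}.
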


\begin{lemma}
\label{lemma:lam-laminv-min}
For all $oh^\rp \in \widehat{\OH}_\pe^{\rsg}$,
$\LambdaOptwo\!\left((\LambdaOptwo)^{-1}(oh^\rp)\right) = oh^\rp$; and for all
$oh' \in \OH_\ptwo^{\sg'}$,
$(\LambdaOptwo)^{-1}\!\left(\LambdaOptwo(oh')\right) = oh'$.
\end{lemma}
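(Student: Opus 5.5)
The plan is to verify both identities of Lemma~\ref{lemma:lam-laminv-min} by index-wise computation. This follows the pattern of Lemma~\ref{lemma:lam-laminv-max}. The only new feature is the trailing min-observation that $\LambdaOptwo$ deletes and $(\LambdaOptwo)^{-1}$ restores.

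\textbf{Shape of the histories.} First I fix the shape of a min-observation history in $\sg'$. By the periodic structure of plays in $\sg'$, min-player observations follow the blocks ($\sone$-state, dummy state, $\stwo$-state). A history $oh'\in\OH_\ptwo^{\sg'}$ ends at an $\stwo$-state, so it has the form
\[
oh' = o_1,o_2,o_3,\ldots,o_{3n-2},o_{3n-1},o_{3n}.
\]
Here $o_{3k+1}\in\obstwo$ is the observation of an $\sone$-state, $o_{3k+2}=\obsnewtwo_{x_k}\in\Obsnewtwo$, and $o_{3k+3}\in\obstwo$. The observation function gives $\obfntwo'(d_s)=\obsnewtwo_{\obfntwo(s)}$ and $\tran'(d_s,\anew)=\dirac{s}$. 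Hence in every block the $\stwo$-observation is recoverable from its dummy predecessor: $o_{3k+3}=x$ where $o_{3k+2}=\obsnewtwo_x$. This one fact drives both directions.

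\textbf{Second identity.} Applying $\LambdaOptwo$ keeps the positions $3k+1$ and $3k+2$ and drops the positions $3k+3$. This yields a sequence of length $m=2n$ ending in $o_{3n-1}\in\Obsnewtwo$, so it lies in $\widehat{\OH}_\pe^{\rsg}$. Applying $(\LambdaOptwo)^{-1}$ copies back positions $3k+1$ and $3k+2$. It then restores each $o_{3k+3}$ as the untagged $x$ of the preceding $\obsnewtwo_x$, and this agrees with the original $o_{3k+3}$ by the observation above. The final clause $o_{m+m/2}=o_{3n}$ is an instance of the same rule. Therefore $(\LambdaOptwo)^{-1}(\LambdaOptwo(oh'))=oh'$.

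\textbf{First identity.} Take $oh^\rp=\tilde o_1,\ldots,\tilde o_m\in\widehat{\OH}_\pe^{\rsg}$. Since RPOMDP plays alternate between $\sone$ and $D$ and the history ends in $\Obsnewtwo$, $m$ is even. The inverse then produces exactly $3m/2$ entries, with the inserted entries at positions $\equiv 0 \pmod 3$. Applying $\LambdaOptwo$ deletes precisely those inserted entries and returns $\tilde o_1,\ldots,\tilde o_m$.

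\textbf{Main obstacle.} The main obstacle is bookkeeping rather than ideas. I must check that the index formulas in the definitions are consistent at the boundary, i.e.\ that the last entry $o_{m+m/2}$ is not double-defined or off by one. I must also check that $(\LambdaOptwo)^{-1}(oh^\rp)$ is a genuine element of $\OH_\ptwo^{\sg'}$, i.e.\ that it arises from some $\sg'$-history. For the latter I would lift the underlying RPOMDP history: each dummy step $d_s\to t$ is replaced by $d_s\to s\to t$ using some $a'\in\atwo$ with $\tran'(s,a')(t)>0$, which exists because $t$ lies in the support of a convex combination of the vertices $\tran'(s,a')$.
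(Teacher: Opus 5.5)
Your proposal is correct and takes essentially the paper's approach. The paper states this lemma without a written proof, treating it as immediate from the index-wise definitions of $\LambdaOptwo$ and $(\LambdaOptwo)^{-1}$, and your verification is exactly that unpacking. The key fact is that the tagged dummy observation $\obsnewtwo_x$ determines the following $\stwo$-observation $x$, together with the boundary check that $o_{m+m/2}$ is consistently defined. Your extra check that $(\LambdaOptwo)^{-1}(oh^\rp)$ is a genuine $\sg'$-observation history fills in a well-definedness point the paper leaves implicit.
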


\smallskip\noindent\textbf{Mapping of strategies.}
Recall from section \ref{sec:prelim} and the definition of RPOMDP that:
\begin{itemize}
    \item A player strategy in $\sg'$ maps its observation history to state-indexed distributions over its global action alphabet, as in the main text.
    \item An agent strategy $\sigma^\rp\in\Sigma_{\rsg}$ maps each observation history to state-indexed distributions over $A_\rp$.
    \item An environment strategy $\pi^\rp\in\Pi_{\rsg}$ has type
        $\pi^\rp:\OH_\pe^{\rsg}\times A_\rp\to\uncert_\rp$ and hence selects a complete transition assignment.
\end{itemize}

Since $\uncert_\rp$ is $(s,a)$-rectangular and polytopic, the environment strategy
$\pi^\rp \in \Pi_{\rsg}$ at a dummy state $d_s \in D$ under action $\anew$ picks a
distribution $\pi^\rp(oh^\rp,\anew)(d_s, \anew) \in \uncertsa(d_s, \anew)$, which by
\Cref{def:sg2rmdp} is a convex combination of the vertices
$V_{d_s,\anew} = \{\tran'(s,a) \mid a \in \atwo\}$.  We write this as
\[
\pi^\rp(oh^\rp,\anew)(d_s, \anew) = \sum_{a \in \atwo} \alpha_a \cdot \tran'(s, a),
\quad \alpha_a \geq 0,\; \sum_{a \in \atwo} \alpha_a = 1.
\]
For every $s\in\stwo$, define
\[
\mathsf{bar}^{\sg'}_s:\dist(\atwo)\to\uncertsa(d_s,\anew),
\qquad
\mathsf{bar}^{\sg'}_s(\lambda)
=\sum_{a\in\atwo}\lambda(a)\tran'(s,a).
\]
For each $P\in\uncertsa(d_s,\anew)$, choose
$\beta_s^P\in\dist(\atwo)$ such that
$\mathsf{bar}^{\sg'}_s(\beta_s^P)=P$.
The coefficients may be nonunique; fix one such $\beta_s^P$ for every $P$. For a complete assignment $P\in\uncert_\rp$, fix a reference element $P^0_{x,b}\in\uncertsa(x,b)$ for every $(x,b)\in S_\rp\times A_\rp$.
We define the following four maps lifting strategies between $\sg'$ and $\rsg$:
\begin{enumerate}
    \item $\Omega_\pone : \stratpone_{\sg'} \to \Sigma_{\rsg}$:
          for each $\sigma' \in \stratpone_{\sg'}$, $oh^\rp \in \OH_\pa^{\rsg}$ and $\forall s \in S_\rp = \sone \cup D$ such that $\obfna^\rp(s) = \Last(oh^\rp)$,
          \[
          \Omega_\pone(\sigma')(oh^\rp)(s) = \sigma'\!\left((\LambdaOpone)^{-1}(oh^\rp)\right)(s)
          \]

    \item $\Omega_\ptwo : \stratptwo_{\sg'} \to \Pi_{\rsg}$:
          for each $\pi'\in\stratptwo_{\sg'}$, $oh^\rp\in\OH_\pe^{\rsg}$, and $b\in A_\rp$, define the complete assignment $\Omega_\ptwo(\pi')(oh^\rp,b)\in\uncert_\rp$ by
          \[
          \Omega_\ptwo(\pi')(oh^\rp,b)(x,c)=
          \begin{cases}
          \mathsf{bar}^{\sg'}_s\!\left(
              \pi'\!\left((\LambdaOptwo)^{-1}(oh^\rp)\right)(s)
          \right)
              & \begin{array}{l}\text{if }x=d_s,\ b=c=\anew,\\[-2pt]
                \obfne^\rp(x)=\Last(oh^\rp),\end{array}\\
          \tran'(x,b)
              & \begin{array}{l}\text{if }x\in\sone,\ b=c\in\aone,\\[-2pt]
                \obfne^\rp(x)=\Last(oh^\rp),\end{array}\\
          P^0_{x,c} & \text{otherwise.}
          \end{cases}
          \]

    \item $\Xi_\pone : \Sigma_{\rsg} \to \stratpone_{\sg'}$:
          for each $\sigma^\rp \in \Sigma_{\rsg}$, $oh' \in \OH_\pone^{\sg'}$ and $\forall s \in \sone \cup D$ such that $\obfnone'(s) = \Last(oh')$ ,
          \[
          \Xi_\pone(\sigma^\rp)(oh')(s) = \sigma^\rp\!\left(\LambdaOpone(oh')\right)(s).
          \]

    \item $\Xi_\ptwo : \Pi_{\rsg} \to \stratptwo_{\sg'}$:
          for each $\pi^\rp \in \Pi_{\rsg}$, $oh' \in \OH_\ptwo^{\sg'}$ and $\forall s' \in \stwo$ with
                    $\obfntwo'(s') = \Last(oh')$, define
          \[
                    \Xi_\ptwo(\pi^\rp)(oh')(s')
                    =\beta_{s'}^{\pi^\rp(\LambdaOptwo(oh'),\anew)(d_{s'},\anew)}.
          \]
\end{enumerate}

\begin{remark}
\label{rem:omegaptwo-sg2rmdp}
By definition of $\Xi_\ptwo$, for all $\pi^\rp \in \Pi_{\rsg}$,
$oh' \in \OH_\ptwo^{\sg'}$, and $d_{s'} \in D$,
\[
\pi^\rp(\LambdaOptwo(oh'),\anew)(d_{s'},\anew)
=\mathsf{bar}^{\sg'}_{s'}\!\left(
    \Xi_\ptwo(\pi^\rp)(oh')(s')
\right).
\]
\end{remark}

\begin{lemma}
\label{lemma:omega-xi-inverse}
The agent maps $\Omega_\pone$ and $\Xi_\pone$ are mutual inverses.  For every compatible dummy history and state, the environment maps satisfy
\begin{align*}
\Omega_\ptwo(\Xi_\ptwo(\pi^\rp))(oh^\rp,\anew)(d_s,\anew)
  &=\pi^\rp(oh^\rp,\anew)(d_s,\anew),\\
\mathsf{bar}^{\sg'}_s\!\left(
  \Xi_\ptwo(\Omega_\ptwo(\pi'))(oh')(s)
\right)
  &=\mathsf{bar}^{\sg'}_s\!\left(\pi'(oh')(s)\right).
\end{align*}
Thus the environment compositions preserve transition distributions.

\begin{proof}
The agent identities follow immediately from Lemmas~\ref{lemma:lam-laminv-max} and the definitions of $\Omega_\pone,\Xi_\pone$.  The environment identities follow from
$\mathsf{bar}^{\sg'}_s(\beta_s^P)=P$ and Lemma~\ref{lemma:lam-laminv-min}. At non-dummy states the uncertainty set is a singleton.
\end{proof}
\end{lemma}
\subsection{Lifting objectives from $\sg'$ to $\rsg$}
\label{app:lift3}
Define $\prfn^\rp: S_\rp \to \{0,1,\dots,d\}, d \in \nat$ by
          $\prfn^\rp(s') = \prfn'(s')$ for $s' \in S_\rp$ where $\prfn'$ is priority function for $\sg'$.

\begin{lemma}
\label{lemma:lambda-parity}
Let $\sg'$, $\rsg$, $\prfn'$, and $\prfn^\rp$ be as above, and let $\rho' \in \Plays_{\sg'}$.
Then
\[
\rho' \in \parity{\sg'}{\prfn'} \iff \Lambda(\rho') \in \parity{\rsg}{\prfn^\rp}.
\]
\begin{proof}
By construction of $\prfn^\rp$, every state $s' \in \sone$ in $\Lambda(\rho')$ carries
priority $\prfn^\rp(s') = \prfn'(s')$, and every dummy state $d_s \in D$ carries priority
$\prfn^\rp(d_s) = \prfn'(d_s)$.  The $\stwo$-states of $\rho'$ are removed by $\Lambda$;
since by definition of $\prfn'$ 
the priority of each
$s \in \stwo$ equals that of the preceding dummy state $d_s$, no $\liminf$-relevant priority
value is lost.  Hence the parity condition is preserved.
\end{proof}
\end{lemma}

\subsection{Value Preservation between $\sg'$ and $\rsg$}
\label{app:value3}

\begin{lemma}
        The strategy maps preserve objective probabilities in both directions:
        \begin{align*}
        \probm_{\sg'}^{\sigma',\pi'}(W')
            &=\probm_{\rsg}^{\Omega_\pone(\sigma'),\Omega_\ptwo(\pi')}(W^\rp),\\
        \probm_{\rsg}^{\sigma^\rp,\pi^\rp}(W^\rp)
            &=\probm_{\sg'}^{\Xi_\pone(\sigma^\rp),\Xi_\ptwo(\pi^\rp)}(W'),
        \end{align*}
        for all strategies of the indicated types,
    where $W' \subseteq \Plays_{\sg'}$ is parity objective in $\sg'$ and $W^\rp = \Lambda(W')$ is the corresponding objective in $\rsg$.
    \label{lemma:omega-equiv}
\end{lemma}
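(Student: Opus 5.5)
The plan mirrors the proofs of Lemma~\ref{mdpsg_vp} and Lemma~\ref{lemma:phi-equiv}. Since a probability measure on plays is determined by its values on cylinder sets, I will compare cylinders. Because $\Lambda$ is many-to-one, the comparison is between one $\rsg$-cylinder and a sum of $\sg'$-cylinders. For a finite $\rsg$-history $h^\rp$ I define the set-valued lift $\Lambda_h^{-1}(h^\rp)$ of $\sg'$-histories, exactly as $\Lambda^{-1}$ acts on plays. Each $d_s$ is followed by $s$, and the missing $\atwo$-actions are left implicit, since histories record only states. I then prove, by induction on the length of $h^\rp$,
\[
\probm_{\rsg}^{\Omega_\pone(\sigma'),\Omega_\ptwo(\pi')}(\cyl{h^\rp})=\sum_{h'\in\Lambda_h^{-1}(h^\rp)}\probm_{\sg'}^{\sigma',\pi'}(\cyl{h'}).
\]
This identity, together with Lemma~\ref{lemma:lambda-parity}, gives $\Lambda^{-1}(W^\rp)=W'$. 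Measurability is preserved, so the first displayed equality follows. The base case is $h^\rp=s_0$, where both sides equal $1$.

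The inductive step splits by the kind of state the history ends in.

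\textbf{Case (a): the history ends in some $s\in\sone$ and is extended to $d_{s_2}$.} The $\rsg$ factor is $\sum_{a\in\aone}\Omega_\pone(\sigma')(oh^\rp)(s)(a)\,\tran'(s,a)(d_{s_2})$. Here the singleton polytope forces the environment component $\tran'(s,a)$. By Lemma~\ref{lemma:lam-laminv-max} and the definition of $\Omega_\pone$, this factor equals the $\sg'$ factor $\sum_a\sigma'(oh')(s)(a)\,\tran'(s,a)(d_{s_2})$.

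\textbf{Case (b): the history ends in $d_s$ and is extended to $t\in\sone$.} In $\rsg$ the factor is
\[
\Omega_\pone(\sigma')(\cdot)(d_s)(\anew)\cdot\mathsf{bar}^{\sg'}_s(\lambda)(t), \qquad \lambda=\pi'((\LambdaOptwo)^{-1}(oh^\rp_e))(s).
\]
In $\sg'$ the corresponding segment is $d_s\xrightarrow{\anew}s\xrightarrow{a}t$, with probability $\sigma'(\cdot)(d_s)(\anew)\cdot1\cdot\sum_a\lambda(a)\tran'(s,a)(t)$. This equals the $\rsg$ factor by the definition of $\mathsf{bar}^{\sg'}_s$, using Lemma~\ref{lemma:lam-laminv-min} to identify the min history. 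One point needs care: the $\ptwo$ history at $s$ in $\sg'$ is obtained from the environment history at $d_s$ by appending the unique observation $x$ with $\Last=\obsnewtwo_x$. This is exactly how $(\LambdaOptwo)^{-1}$ is defined, so both sides query $\pi'$ at the same history.

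Summing over the preimage histories then closes the induction.

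The second equality uses the same induction with $\Xi_\pone,\Xi_\ptwo$. Remark~\ref{rem:omegaptwo-sg2rmdp} supplies $\mathsf{bar}^{\sg'}_{s}(\Xi_\ptwo(\pi^\rp)(oh')(s))=\pi^\rp(\LambdaOptwo(oh'),\anew)(d_s,\anew)$, which matches case (b) with the roles of the two models reversed.

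I expect the main obstacle to be bookkeeping rather than conceptual. First, the observation-history indexing must match in case (b), including the history-dependence of $\pi'$ on the appended $\stwo$ observation. Second, the sum over the hidden intermediate $\stwo$-states must collapse properly. This works because $d_s$ determines $s$ deterministically, so each $\rsg$-history has at most one lift per choice of states; the actions are marginalised inside $\mathsf{bar}$. Third, the environment's choices at non-matching components of the complete assignment (the $P^0$ entries) must not affect the measure. They do not, since only the component at the current $(x,b)$ is ever evaluated.

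Finally, I restrict attention to effective actions, using the preliminary remark that the implicit sink completions are dominated. This lets me ignore the default transitions throughout.
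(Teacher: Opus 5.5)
Your proposal is correct and follows essentially the same route as the paper. Both arguments use a cylinder-set induction that pairs each $\sg'$-history with its $\Lambda$-image; your preimage sum is in fact a single term, since $d_s$ determines $s$. Your case (b) merges the paper's Cases 2 and 3, i.e.\ the deterministic $d_s \xrightarrow{\anew} s$ step followed by the min-player's move, matched via $\mathsf{bar}^{\sg'}_s$ and $(\LambdaOptwo)^{-1}$. The converse is the same induction with $\Xi_\pone,\Xi_\ptwo$.

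You are slightly more careful than the paper in two places. You make explicit that $\Lambda^{-1}(W^\rp)=W'$ (from Lemma~\ref{lemma:lambda-parity}) is what carries the cylinder equality over to the objective. You also keep the factor $\sigma'(\cdot)(d_s)(\anew)$ instead of assuming it equals $1$.
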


\begin{proof}
Since the probability measure over plays is uniquely determined by its values on cylinder
sets, it suffices to show that for every finite history $h'$ in $W'$
ending at a state in $\sone \cup D$, there exists a corresponding history $h^\rp$ in $W^\rp$ such that
\[
\probm_{\sg'}^{\sigma', \pi'} (\cyl{h'})
= \probm_{\rsg}^{\Omega_\pone(\sigma'), \Omega_\ptwo(\pi')} (\cyl{h^\rp}).
\]
We map each such $h'$ to $h^\rp$ similar to $\Lambda$ maps the corresponding plays
(dropping $\stwo$-states).  The proof proceeds by induction on the
length of $h'$.

\smallskip\noindent\textbf{Base case.}  $h' = s_0$.  Then $h^\rp = s_0^\rp = s_0$ and
both cylinders have probability $1$.

\smallskip\noindent\textbf{Inductive step.}  We distinguish three cases based on how $h'$
is extended.

\smallskip
\noindent\textbf{Case 1: $h'$ is of the form $h'_1, d_s$ where
$d_s \in D$.} Then $\Last(h'_1) \in \sone$ and $h^\rp = h^\rp_1, d_s$ where $h^\rp_1$
corresponds to $h'_1$ by induction.  Let $oh^\rp_a = \obfna^\rp(h^\rp_1)$.  Since
$\Last(h^\rp_1) = s_1 \in \sone$, the uncertainty set for $a \in \aone$, $\uncertsa(s_1, a)$ is a
singleton $\{\tran'(s_1, a)\}$.  Then
\begin{align*}
\probm_{\rsg}^{\Omega_\pone(\sigma'), \Omega_\ptwo(\pi')} (\cyl{h^\rp})
&= \probm_{\rsg}^{\Omega_\pone(\sigma'), \Omega_\ptwo(\pi')} (\cyl{h^\rp_1})
   \times \sum_{a \in \aone} \left(\Omega_\pone(\sigma')(oh^\rp_a)(s_1)(a)
   \times \tran'(s_1, a)(d_s) \right).
\end{align*}
By the induction hypothesis, 
$\probm_{\rsg}^{\Omega_\pone(\sigma'), \Omega_\ptwo(\pi')} (\cyl{h^\rp_1}) = \probm_{\sg'}^{\sigma', \pi'} (\cyl{h'_1})$.  Since $\Last(oh^\rp_a) \in \obsone$ and
$(\LambdaOpone)^{-1}(oh^\rp_a) = \obfnone'(h'_1)$, we have
$\Omega_\pone(\sigma')(oh^\rp_a)(s_1)(a) = \sigma'(\obfnone'(h'_1))(s_1)(a)$.  The singleton
uncertainty set means the environment contributes the unique element
$\tran'(s_1, a)$ evaluated for transition to $d_s$.  Hence
\begin{align*}
    \probm_{\rsg}^{\Omega_\pone(\sigma'), \Omega_\ptwo(\pi')} (\cyl{h^\rp}) &= \probm_{\sg'}^{\sigma', \pi'} (\cyl{h'_1})  \times \sum_{a \in \aone} \left( \sigma'(\obfnone'(h'_1))(s_1)(a) \times \tran'(s_1, a)(d_s)\right)\\ 
& = \probm_{\sg'}^{\sigma', \pi'} (\cyl{h'})
\end{align*}

\smallskip
\noindent\textbf{Case 2: $h'$ is of the form $h'_1, s$ where $s \in \stwo$ and
$\Last(h'_1) = d_s \in D$.} The sole effective action from $d_s$ is $\anew$ and the transition $\tran'(d_s, \anew) = \dirac{s}$ is
deterministic.  The corresponding prefix in $\rsg$ is $h^\rp = h^\rp_1$ (unchanged, since
the $\stwo$-state $s$ is removed by $\Lambda$). Since the $\anew$ step at $d_s$ is
deterministic (probability $1$) in both $\sg'$ and $\rsg$, and the $\stwo$-state $s$ is
not present in $\rsg$, we have immediately
\[
\probm_{\rsg}^{\Omega_\pone(\sigma'), \Omega_\ptwo(\pi')} (\cyl{h^\rp}) = 
\probm_{\sg'}^{\sigma', \pi'} (\cyl{h'})
\]

\smallskip
\noindent\textbf{Case 3: $h'$ is of the form $h'_1, s, s'$ where
$\Last(h'_1) = d_s \in D$, $s \in \stwo$, and $s' \in \sone$.}
Then $h^\rp = h^\rp_1, s'$ where $h^\rp_1$ corresponds to $h'_1$.
Let $oh^\rp_a=\obfna^\rp(h^\rp_1)$ and $oh^\rp_e = \obfne^\rp(h^\rp_1)$.  Since $\Last(h^\rp_1) = d_s \in D$, we have
$\Last(oh^\rp_e) \in \Obsnewtwo$.  The environment in $\rsg$ selects a distribution from
$\uncertsa(d_s, \anew)$; the mapped agent plays the sole effective action $\anew$ with probability $1$.
\begin{align*}
\probm_{\rsg}^{\Omega_\pone(\sigma'), \Omega_\ptwo(\pi')} (\cyl{h^\rp})
&= \probm_{\rsg}^{\Omega_\pone(\sigma'), \Omega_\ptwo(\pi')} (\cyl{h^\rp_1})
    \times \Omega_\pone(\sigma')(oh^\rp_a)(d_s)(\anew) \\
&\quad \times
   \Omega_\ptwo(\pi')(oh^\rp_e,\anew)(d_s, \anew)(s')
\end{align*}
where we use the fact that the distribution selected by the environment at $(d_s, \anew)$ is
evaluated at $s'$.  By definition of $\Omega_\ptwo$,
\[
\Omega_\ptwo(\pi')(oh^\rp_e,\anew)(d_s, \anew)
= \sum_{a'' \in \atwo}
  \pi'\!\left((\LambdaOptwo)^{-1}(oh^\rp_e)\right)(s)(a'') \cdot \tran'(s, a'').
\]
Evaluating at $s'$:
\[
\Omega_\ptwo(\pi')(oh^\rp_e,\anew)(d_s, \anew)(s')
= \sum_{a'' \in \atwo}
  \pi'\!\left((\LambdaOptwo)^{-1}(oh^\rp_e)\right)(s)(a'') \cdot \tran'(s, a'')(s').
\]
Since $(\LambdaOptwo)^{-1}(oh^\rp_e) = \obfntwo'(h'_1,s)$, this equals
$\sum_{a'' \in \atwo} \pi'(\obfntwo'(h'_1,s))(s)(a'') \cdot \tran'(s,a'')(s')$.
Also, $\Omega_\pone(\sigma')(oh^\rp_a)(d_s)(\anew) = 1$ since $\anew$ is the sole effective action
at dummy states. By the induction hypothesis applied to $h'_1$,
\begin{align*}
\probm_{\rsg}^{\Omega_\pone(\sigma'), \Omega_\ptwo(\pi')} (\cyl{h^\rp})
&= \probm_{\sg'}^{\sigma', \pi'} (\cyl{h'_1})
    \times \sum_{a'' \in \atwo}
     \pi'(\obfntwo'(h'_1,s))(s)(a'') \cdot \tran'(s, a'')(s').
\end{align*}
On the other hand, in $\sg'$, from history $h'_1$ ending at $d_s$, the $\pone$-player plays
$\anew$ (probability $1$), reaching $s$ deterministically, after which the $\ptwo$-player
at $s$ chooses action $a'' \in \atwo$ with probability $\pi'(\obfntwo'(h'_1))(s)(a'')$,
reaching $s'$ with probability $\tran'(s, a'')(s')$.  Therefore, summing over all $\ptwo$
actions in $\sg'$:
\begin{align*}
\probm_{\sg'}^{\sigma', \pi'} (\cyl{h'})
&= \probm_{\sg'}^{\sigma', \pi'} (\cyl{h'_1})
   \times 1 \times
    \sum_{a'' \in \atwo} \pi'(\obfntwo'(h'_1,s))(s)(a'') \cdot \tran'(s, a'')(s').
\end{align*}
These two expressions agree, so
$\probm_{\rsg}^{\Omega_\pone(\sigma'), \Omega_\ptwo(\pi')} (\cyl{h^\rp}) = \probm_{\sg'}^{\sigma', \pi'} (\cyl{h'}) $.

For the converse mapping, repeat the cylinder induction with $\Xi_\pone$ and $\Xi_\ptwo$.  At a dummy state, the first environment identity in Lemma~\ref{lemma:omega-xi-inverse} ensures that the selected action mixture has barycenter exactly $\pi^\rp(oh^\rp,\anew)(d_s,\anew)$.  Thus every cylinder probability, and consequently the probability of the parity objective, is preserved in the reverse direction as well.
\end{proof}

We now state the main equivalence theorem for the RPOMDP reduction.

\begin{lemma}
\label{thm:sg2rmdp} 
Let $\sg'$ be a Pre-$\ptwo$ transformed POSG and $\rsg$ its RPOMDP reduction.  Let
$W' \subseteq \Plays_{\sg'}$ be parity objective and $W^\rp = \Lambda(W')$ its
corresponding objective in $\rsg$.  Then
\[
\val_{\sg'}(W') = \val_{\rsg}(W^\rp).
\]
\begin{proof}
For fixed $\sigma'\in\Sigma_{\sg'}$, write
\[
q_{\sg'}(\sigma')=\inf_{\pi'\in\Pi_{\sg'}}
    \probm_{\sg'}^{\sigma',\pi'}(W'),
\qquad
q_{\rsg}(\sigma^\rp)=\inf_{\pi^\rp\in\Pi_{\rsg}}
    \probm_{\rsg}^{\sigma^\rp,\pi^\rp}(W^\rp).
\]
The first equality in Lemma~\ref{lemma:omega-equiv} gives
$q_{\rsg}(\Omega_\pone(\sigma'))\le q_{\sg'}(\sigma')$.  Conversely, for every $\pi^\rp\in\Pi_{\rsg}$, the second equality there and
$\Xi_\pone(\Omega_\pone(\sigma'))=\sigma'$ give
\[
\probm_{\rsg}^{\Omega_\pone(\sigma'),\pi^\rp}(W^\rp)
    =\probm_{\sg'}^{\sigma',\Xi_\ptwo(\pi^\rp)}(W'),
\]
so $q_{\sg'}(\sigma')\le q_{\rsg}(\Omega_\pone(\sigma'))$.  Hence the two fixed-strategy infima are equal.  Since $\Omega_\pone$ is a bijection by Lemma~\ref{lemma:omega-xi-inverse},
\begin{align*}
\val_{\sg'}(W')
    &=\sup_{\sigma'\in\Sigma_{\sg'}}q_{\sg'}(\sigma')
     =\sup_{\sigma'\in\Sigma_{\sg'}}q_{\rsg}(\Omega_\pone(\sigma'))\\
    &=\sup_{\sigma^\rp\in\Sigma_{\rsg}}q_{\rsg}(\sigma^\rp)
     =\val_{\rsg}(W^\rp).
\end{align*}

\end{proof}
\end{lemma}

\begin{corollary}
\label{cor:sg2rmdp}
Let $\sg'$ be a Pre-$\ptwo$ transformed POSG and $\rsg$ its RPOMDP reduction.  Let $\prfn'$ a priority function for $\sg'$.
Then
\[
\val_{\sg'}\!\left(\parity{\sg'}{\prfn'}\right) =
\val_{\rsg}\!\left(\parity{\rsg}{\prfn^\rp}\right).
\]
\begin{proof}
Follows directly from lemma \ref{thm:sg2rmdp} together with
lemma \ref{lemma:lambda-parity}.
\end{proof}
\end{corollary}

\begin{lemma}
\label{cor:grsurewinning}
Projected supported plays are preserved in both mapping directions:
\begin{align*}
\Lambda\!\left(\Plays^{\sigma',\pi'}_{\sg'}\right)
    &=\Plays^{\Omega_\pone(\sigma'),\Omega_\ptwo(\pi')}_{\rsg},\\
\Lambda\!\left(\Plays^{\Xi_\pone(\sigma^\rp),\Xi_\ptwo(\pi^\rp)}_{\sg'}\right)
    &=\Plays^{\sigma^\rp,\pi^\rp}_{\rsg}.
\end{align*}
\begin{proof}
We prove both equalities by induction on corresponding finite prefixes. The initial prefixes consist of $s_0$ on both sides. Suppose first that corresponding prefixes under $(\sigma',\pi')$ end at an original max-state $s\in\sone$. Their agent observation histories correspond under $\LambdaOpone$, and the definition of $\Omega_\pone$ gives
\[
\Omega_\pone(\sigma')(\LambdaOpone(oh'))(s)=\sigma'(oh')(s).
\]
Thus the same max-actions have positive probability. For every such action $a$, the RPOMDP uncertainty set at $(s,a)$ is the singleton $\{\tran'(s,a)\}$, so a dummy state $d_t$ is a supported successor in $\rsg$ exactly when it is a supported successor in $\sg'$.

Now suppose the corresponding prefixes end at $d_t$. In $\sg'$, the effective action $\anew$ leads deterministically to $t\in\stwo$, after which the min-player uses
$\lambda=\pi'(oh'_t)(t)\in\dist(\atwo)$. In $\rsg$, the mapped environment selects $\mathsf{bar}^{\sg'}_t(\lambda)$ at $(d_t,\anew)$. Hence, for every successor $s'\in\sone$,
\[
\mathsf{bar}^{\sg'}_t(\lambda)(s')>0
\quad\Longleftrightarrow\quad
\lambda(a)\tran'(t,a)(s')>0
\text{ for some }a\in\atwo,
\]
because all summands are nonnegative. Therefore an extension from $d_t$ to $s'$ is supported in $\rsg$ exactly when an extension from $d_t$ through $t$ and some supported min-action $a$ to $s'$ is supported in $\sg'$. This proves the induction step for the first equality.

For the reverse direction, agent-action supports agree by the definition of $\Xi_\pone$. At a dummy state use $\lambda=\Xi_\ptwo(\pi^\rp)(oh')(t)$. The first identity in Lemma~\ref{lemma:omega-xi-inverse} gives
$\mathsf{bar}^{\sg'}_t(\lambda)=\pi^\rp(\LambdaOptwo(oh'),\anew)(d_t,\anew)$, so the same nonnegativity argument proves the reverse induction step. Finally, an infinite play is supported exactly when all its finite prefixes are supported. Applying $\Lambda$ to the resulting prefix correspondences yields both displayed equalities. 
\end{proof}
\end{lemma}

\begin{lemma}
\label{lem:gr_sw}
Let $W' \subseteq \Plays_{\sg'}$ be a parity objective of $\sg'$ and $W^\rp = \Lambda(W')$ its lifted objective in $\rsg$. Let $\sigma \in \Sigma_{\sg'}$ and $\sigma^\rp=\Omega_\pone(\sigma)$. Then
\[\text{$\sigma$ is sure winning for $W'$}
\iff \text{$\sigma^\rp$ is sure winning for $W^\rp$}.\]
    \begin{proof}
        Suppose $\sigma$ is sure winning and take arbitrary $\pi^\rp\in\Pi_\rsg$.  The second equality of Lemma~\ref{cor:grsurewinning} gives
        \[
        \Plays^{\sigma^\rp,\pi^\rp}_\rsg
            =\Lambda\!\left(\Plays^{\sigma,\Xi_\ptwo(\pi^\rp)}_{\sg'}\right)
            \subseteq W^\rp.
        \]
        Conversely, suppose $\sigma^\rp$ is sure winning and take arbitrary $\pi'\in\Pi_{\sg'}$.  The first equality gives
        $\Lambda(\Plays^{\sigma,\pi'}_{\sg'})=\Plays^{\sigma^\rp,\Omega_\ptwo(\pi')}_\rsg\subseteq W^\rp$.  Lemma~\ref{lemma:lambda-parity} then implies $\Plays^{\sigma,\pi'}_{\sg'}\subseteq W'$.
    \end{proof}
\end{lemma}

\begin{lemma}[Almost-sure Winning]
\label{lem:gr_as}
        Let $W' \subseteq \Plays_{\sg'}$ be a parity objective of $\sg'$ and $W^\rp = \Lambda(W')$ its lifted objective in $\rsg$. Let $\sigma \in \Sigma_{\sg'}$ and $\sigma^\rp = \Omega_\pone(\sigma)$. Then,
        \[ \text{$\sigma$ is almost-sure winning for $W'$}
        \iff \text{$\sigma^\rp$ is almost-sure winning for $W^\rp$}.\]
    \begin{proof}
        The fixed-strategy equality proved in Lemma~\ref{thm:sg2rmdp} is
        \[
        \inf_{\pi'\in\Pi_{\sg'}}\probm_{\sg'}^{\sigma,\pi'}(W')
            =\inf_{\pi^\rp\in\Pi_\rsg}\probm_\rsg^{\sigma^\rp,\pi^\rp}(W^\rp).
        \]
        Hence one side equals $1$ if and only if the other does.
    \end{proof}
\end{lemma}

\begin{lemma} [Limit-Sure Winning]
\label{lem:gr_ls}
    Let $W' \subseteq \Plays_{\sg'}$ be a parity objective of $\sg'$ and $W^\rp = \Lambda(W')$ is the corresponding objective in $\rsg$. Then,
    \[W' \text{ is limit-sure} \iff W^\rp \text{ is limit-sure} \]
    \begin{proof}
    Follows directly from lemma \ref{thm:sg2rmdp}.
    \end{proof}
\end{lemma}

\begin{lemma}[Quantitative Analysis]
\label{lem: gr_qa}
    Let $W' \subseteq \Plays_{\sg'}$ be a parity objective of $\sg'$ and $W^\rp = \Lambda(W')$ is the corresponding objective in $\rsg$. Then, for $k \in (0,1)$
    \[\Val_{\sg'}(W')\geq k \iff \Val_{\rsg}(W^\rp) \geq k \]
    \begin{proof}
    Follows directly from lemma \ref{thm:sg2rmdp}.
    \end{proof}
\end{lemma}

\subsubsection{Size and memory preservation between $\sg'$ and $\rsg$}
\label{app:last-size-memory}

\begin{lemma}[Size and memory preservation under RPOMDP reduction]
\label{lem:rpomdp-size-memory}
Let $\sg'$ be a Pre-min-transformed POSG and $\rsg$ its RPOMDP
reduction. The maps $\Lambda : \Plays_{\sg'} \to \Plays_{\rsg}$,
$\LambdaOpone, \LambdaOptwo$ on observation histories, and
$\Omega_\star, \Xi_\star$
($\star {\in} \{\pone, \ptwo\}$) on strategies satisfy the following.
\begin{enumerate}\itemsep2pt
  \item \textup{(size)} $|S_\rp| = |\sone| + |\stwo|$,
        $|A_\rp| = |\aone| + 1$, the total vertex count of
        $\uncert_\rp$ is at most
         $|\stwo|\,|\atwo|+|\sone|\,|\aone|$, and observation alphabets are
        inherited unchanged from $\sg'$. In particular,
        $|\rsg| = \mathcal{O}(|\sg'|)$ and the construction runs in
        linear time.
  \item \textup{(history length)} For every $oh' \in
        \OH_\pone^{\sg'}$, $|\LambdaOpone(oh')| =
        \left\lceil \tfrac{2|oh'|+1}{3} \right \rceil$, and for every $oh^\rp \in
        \OH_\pa^{\rsg}$, $|(\LambdaOpone)^{-1}(oh^\rp)| =
        |oh^\rp| + \lfloor |oh^\rp|/2 \rfloor$. The same bounds hold
        for $\LambdaOptwo$ on $\OH_\ptwo^{\sg'}$ and
        $\widehat{\OH}_\pe^{\rsg}$. Observation histories on the
        two sides are linear in each other.
  \item \textup{(memory)} The lifting maps
        $\Omega_\star, \Xi_\star$ preserve memory class exactly:
        memoryless, finite-memory
        and infinite-memory regimes correspond exactly
        between $\sg'$ and $\rsg$.
\end{enumerate}

\begin{proof}

\textbf{Clause (1).} Direct from the construction. 
The state space deletes the $\ptwo$-states $\stwo$ retaining only $\sone \cup D$, so
$|S_\rp| = |\sone| + |\stwo|$. The action set
deletes the $\ptwo$-actions retaining only $\aone \cup \{\top\}$, so $|A_\rp| = |\aone| + 1$.
The observation alphabets are
$\obsone^\rp = \obsone \cup \{o^\dagger_x \mid x \in \obsone\} = \obsone'$
and
$\obstwo^\rp = \obstwo \cup \{o^\#_x \mid x \in \obstwo\} = \obstwo'$,
which have the same cardinalities as in $\sg'$. For every $(s,a)\in\sone\times\aone$, the non-default vertex set $V_{s,a}$ is the singleton $\{\delta'(s,a)\}$. For each $(d_s,\top)$, the vertex list consists of $\delta'(s,a)$ for all $a\in\atwo$. Hence the construction lists at most $|\sone|\,|\aone|+|\stwo|\,|\atwo|$ generators; all other globally enabled pairs share one default max-losing rule.

Each component is therefore a constant or unit-additive perturbation
of the corresponding component in $\sg'$, so
$|\rsg| = \mathcal{O}(|\sg'|)$ and $\rsg$ is computed from $\sg'$ in
linear time.

\medskip
\textbf{Clause (2).} Let
$oh' = o_0, z_1, o_1,o_2, z_3, o_3, \cdots, o_{2n-2}, z_{2n-1}, o_{2n-1}, o_{2n} \in \OH^{\sg'}_\pone$ ending in $\pone$ state, so that
$|oh'| = 3n+1$. By construction,
\[
  \Lambda_O^\pone(oh')
  = o_0, z_1, \cdots, o_{2n},
\]
where $o_{2k+1}$ are removed for every 3 states.
Hence
\[
  |\Lambda_O^\pone(oh')| =  2n+1 = \tfrac{2|oh'|+1}{3}.
\]

Let
$oh' = o_0, z_1, o_1,o_2, z_3, o_3, \cdots, o_{2n-2}, z_{2n-1}, o_{2n-1}, o_{2n}, z_{2n+1} \in \OH^{\sg'}_\pone$ ending in dummy state, so that
$|oh'| = 3n+2$. By construction,
\[
  \Lambda_O^\pone(oh')
  = o_0, z_1, \cdots, o_{2n},z_{2n+1}
\]
where $o_{2k+1}$ are removed for every 3 states.
Hence
\[
  |\Lambda_O^\pone(oh')| =  2n+2 = \tfrac{2|oh'|+2}{3}.
\]

From above values, For every $oh' \in         \OH_\pone^{\sg'}$, $|\LambdaOpone(oh')| =        \left\lceil \tfrac{2|oh'|+1}{3} \right \rceil$

 Let
$oh = o_0, z_1, \cdots, o_{2n} \in \OH^{\rsg}_a$ ending in $\sone$ state, so that
$|oh| = 2n+1$. By construction,
\[
  (\Lambda_O^\pone)^{-1}(oh)
    =  o_0, z_1, o_1,o_2, z_3, o_3, \cdots, o_{2n-2}, z_{2n-1}, o_{2n-1}, o_{2n}
\]
where $o_{2k+1}$ are added deterministically from $z_{2k+1}$ for every 2 states. 
Hence
\[
    |(\Lambda_O^\pone)^{-1}(oh)| =  3n+1 = \tfrac{3|oh|-1}{2} = |oh|+\tfrac{|oh|-1}{2}.
\]

Let
$oh = o_0, z_1, \cdots, o_{2n},z_{2n+1} \in \OH^{\rsg}_a$ ending in dummy state, so that
$|oh| = 2n+2$. By construction,
\[
  (\Lambda_O^\pone)^{-1}(oh)
  = o_0, z_1, o_1,o_2, z_3, o_3, \cdots, o_{2n-2}, z_{2n-1}, o_{2n-1}, o_{2n}, z_{2n+1}
\]
where $o_{2k+1}$ are added from $z_{2k+1}$ for every 2 states.
Hence
\[
  |(\Lambda_O^\pone)^{-1}(oh)| =  3n+2 = \tfrac{3|oh|}{2}-1 = |oh| + \tfrac{|oh|-2}{2}.
\]

Thus, for every $oh \in \OH_\pa^{\rsg}$,
$|(\LambdaOpone)^{-1}(oh)|=|oh|+\left\lfloor\tfrac{|oh|-1}{2}\right\rfloor$. The environment-history bounds follow identically.

\medskip
\textbf{Clause (3).} 
Fix a player and let its strategy be realized by a transducer $M=(Q,q_0,u,g)$ over that player's observation alphabet. The history maps $\LambdaOpone$ and $\LambdaOptwo$ delete each $\stwo$-observation, while their inverses reinsert it. The deleted observation is determined by the preceding tagged dummy observation: $\obsnewone_x$ or $\obsnewtwo_x$ uniquely determines $x$. Consequently, both contraction and expansion can be performed online without adding persistent memory states.

For $\Omega_\pone$, run the transducer for $\sigma'$ on the expanded history $(\LambdaOpone)^{-1}(oh^\rp)$ and use its state-indexed action assignment unchanged. For $\Omega_\ptwo$, run the transducer for $\pi'$ on $(\LambdaOptwo)^{-1}(oh^\rp)$ and apply $\mathsf{bar}^{\sg'}_s$ to the output distribution at a compatible dummy state; at a non-dummy state the required transition assignment is the unique element of the singleton uncertainty set. These are output transformations and require no additional memory.

Conversely, $\Xi_\pone$ runs the transducer for $\sigma^\rp$ on the contracted history $\LambdaOpone(oh')$. The map $\Xi_\ptwo$ runs the transducer for $\pi^\rp$ on $\LambdaOptwo(oh')$ and applies the fixed coefficient choice $\beta_s^P$ to the selected polytope point. Again, contraction and the output transformation leave the memory set $Q$ unchanged. The history identities in Lemmas~\ref{lemma:lam-laminv-max} and~\ref{lemma:lam-laminv-min} show that these transducers realize the four stated strategy maps.

Thus each map preserves the number of memory states. In particular, memoryless, finite-memory, and unrestricted infinite-memory strategies correspond in both directions.
\end{proof}

\end{lemma}
\subsection{Proof of Lemma \ref{thm:sg2rmdp-main}}
\label{app:thm3}

\begin{proof}
    Because parity objective subsumes all $\omega$-regular objectives, from lemma \ref{thm:sg2rmdp}, $\Val_{\sg'}(W') = \Val_{\rsg}(W^\rp)$ for any $\omega$-regular objective $W'$ in $\sg'$ and its corresponding objective $W^\rp$ in $\rsg$.\\
    The ``consequently'' statement follows from lemmas \ref{lem:gr_sw}, \ref{lem:gr_as}, \ref{lem:gr_ls}, and \ref{lem: gr_qa}. 
\end{proof}

\clearpage

\end{document}